\newcolumntype{M}[1]{D{.}{.}{1.#1}}
\renewcommand{\algorithmicrequire}{\textbf{Input:}}
\renewcommand{\algorithmicensure}{\textbf{Output:}}
\xpatchcmd{\algorithmic}{\setcounter}{\algorithmicfont\setcounter}{}{}
\providecommand{\algorithmicfont}{}
\renewcommand{\vec}[1]{\mathbf{#1}}
\def\S{{\vec{S}}}
\def\x{{\vec{x}}}
\def\X{{\vec{X}}}
\def\Y{{\vec{Y}}}
\def\z{{\vec{z}}}
\def\Z{{\vec{Z}}}
\def\E{{\mathbb E}}
\def\I{{\mathbb I}}
\def\II{{\mathbb 1}}
\def\P{{\mathbb P}}
\def\R{{\mathbb R}}
\def\Re{{\mathbb R}}
\def\V{{\mathbb V}}
\def\XX{{\mathbb X}}
\def\sF{{\mathcal F}}
\def\sL{{\mathcal L}}
\def\sO{{\mathcal O}}
\def\sS{{\mathcal S}}
\def\limn{\lim_{n \rightarrow \infty}}
\def\convas{\stackrel{a.s.}{\rightarrow}}
\def\convp{\stackrel{\P}{\rightarrow}}
\def\convd{\rightsquigarrow}
\newcommand{\quantmodel}{
 \begin{tikzpicture}[node distance=1cm and 0cm,
 mynode/.style={draw, ellipse, text width=1.5cm,
 align=center}]
  \node[mynode] (p) {$p_{1|L}$};
  \node[mynode,below right=of p] (l) {$(\XX_L,\Y_L)$};
  \node[mynode,above right=of l] (f) {$(f_0,f_1)$};
  \node[mynode,below right=of f] (u) {$\XX_U$};
  \node[mynode,above right=of u] (t) {$\theta$};
  \node[mynode,below right=of t] (uu) {$\Y_U$};
  \path (p) edge[-latex] (l)
  (f) edge[-latex] (l)
  (f) edge[-latex] (u)
  (t) edge[-latex] (uu)
  (uu) edge[-latex] (u);
 \end{tikzpicture}
}
\newtheorem{assumption}{Assumption}
\title{Quantification Under Prior Probability Shift: 
the Ratio Estimator and its Extensions}
\author{\name Afonso Fernandes Vaz 
\email afonsofvaz@gmail.com\\
 \name Rafael Izbicki 
 \email rafaelizbicki@gmail.com \\
 \name Rafael Bassi Stern 
 \email rbstern@gmail.com \\
 \addr Department of Statistics \\
 Federal University of S\~ao Carlos \\
 S\~ao Carlos, SP 13565-905, Brazil 
 }
\begin{document}

\maketitle

\begin{abstract}%
 The quantification problem consists of
 determining the prevalence of a given label in
 a target population. However,
 one often has access to the labels in
 a sample from the training population but not in 
 the target population.
 A common assumption in this situation is
 that of prior probability shift, that is,
 once the labels are known,
 the distribution of the features is 
 the same in the training and target populations.
 In this paper, we derive a new lower bound
 for the risk of the quantification problem under
 the prior shift assumption.
 Complementing this lower bound,
 we present a new approximately minimax 
 class of estimators, ratio estimators, 
 which generalize several previous proposals
 in the literature. Using a weaker version of 
 the prior shift assumption, which can be tested,
 we show that ratio estimators can be used to 
 build confidence intervals for 
 the quantification problem.
 We also extend the ratio estimator so that it can:
 (i) incorporate labels from the target population,
 when they are available and
 (ii) estimate how the prevalence of
 positive labels varies according to
 a function of certain covariates.
\end{abstract}

\begin{keywords}
quantification, 
prior probability shift,
data set shift, domain shift, semi-supervised learning
\end{keywords}

\section{Introduction}
\label{sec::intro}

In several applications of binary classifiers,
predicting the labels of
individual observations \emph{per se} is
less important than evaluating 
the proportion of each label on 
an unlabeled target data set.
The latter task is called quantification
\citep{Forman2008}. For example,
a company may be interested in
evaluating the proportion of users who 
like each of their products, without access
to labeled reviews of these products.

A common approach to such a problem is to
(i) train a classifier for 
the user's evaluation based on
labeled reviews of other products, and
(ii) apply this classifier to the unlabeled 
target set and use the proportion of users who
are classified as liking the product 
as an estimator.
However, it is known that 
this two-step approach, 
known as ``classify and count", 
fails because of domain shift \citep{Forman2006,Tasche2016}.
In order to deal with this problem,  
several improvements have been proposed under an assumption named prior shift \citep{Saerens2002,Forman2008,Bella2010,Barranquero2015}.
A particular estimator that successfully
performs quantification is
the adjusted count (AC) estimator
\citep{Buck1966,Saerens2002,Forman2008}.
Part of the success of the AC estimator 
is explained in \citet{Tasche2017} by
showing that it is Fisher consistent.
However, there are more properties one
might desire of an estimator.

In order to investigate these properties,
\citet{Vaz2017} introduces the ratio estimator,
which is a generalization of the AC estimator.
\citet{Vaz2017} derives
the asymptotic mean squared error of the ratio estimator.
Here, we show that the ratio estimator is
approximately minimax and consistent under 
the prior probability shift assumption.
In order to derive this result, 
we prove a new lower bound for the risk of 
the quantification problem under
the prior shift assumption. 
This lower bound is general and applies to
every method under the 
prior probability shift assumption.
We also derive a central limit theorem for
the ratio estimator which helps to
explain its good performance and
leads to a method for building 
confidence intervals for 
the quantification problem.
This result also allows us to propose a
new type of ratio estimator based on
Reproducing Kernel Hilbert spaces.
Since the AC estimator and the method in
\citet{Bella2010} are special cases of
the ratio estimator, they benefit from
all of the results above.

It is important to
evaluate whether the prior probability shift
assumption indeed holds, otherwise the AC method
can perform poorly \citep{Tasche2017}.
We show that the ratio estimator
works under an assumption that is less
stringent than the prior shift assumption. 
Moreover, we show how this assumption can be tested.
We are not aware of other methods to 
test the prior shift and related assumptions.

We also generalize the ratio estimator to
two extensions of the quantification problem.
In the first scenario, some labels are
available in the target population.
The combined estimator extends 
the ratio estimator in order to 
incorporate these labels and obtain 
a larger effective sample size.
The second scenario considers that 
the prevalence of each label varies
according to additional covariates.
This generalization allows one to 
use unlabeled data to identify e.g. how
the approval of a product varies with age.
In this scenario, we introduce the
regression ratio estimator,
which offers improvements over
the standard methods that are used in
sentiment analyses \citep{Wang2012}.

Section \ref{sec::quantification} discusses
the standard quantification problem under
the prior probability shift assumption.
Subsection \ref{sec:lower-bound} provides new
lower bounds for the risk in this scenario.
Subsection \ref{sec::ratioEstimator} introduces
the ratio estimator, uses the result from
the previous subsection to show that
it is approximately minimax and also derives
its convergence rate and a central limit theorem.
Subsection \ref{sec:rkhs} uses the
asymptotic behavior of the ratio estimator to
propose a new type of ratio estimator based on
Reproducing Kernel Hilbert spaces.
Finally, the ratio estimator requires 
a weaker version of prior probability shift to
obtain consistency.
Subsection \ref{sec:hypothesis} discusses
a new algorithm for testing this assumption.

Section \ref{sec:extensions} proposes extensions of
the ratio estimator to scenarios which 
are more general than 
the standard quantification problem.
Subsection \ref{sec:combined} proposes
the combined estimator, for cases in which
some labels are available in 
the population of interest.
Subsection \ref{sec::regression} proposes the
ratio regression estimator, for the situation
in which the prevalence of a given label varies
according to a covariate.
All proofs are presented in the appendix;
code and data used for the experiments
is available at \url{https://github.com/afonsofvaz/ratio_estimator}.

\section{Quantification under 
prior probability shift}
\label{sec::quantification}

In order to formally approach
the quantification problem,
we use the same notation as in \citet{Wasserman2006}.
If $\Z_1 \in \Re^{d_1}$ and $\Z_2 \in \Re^{d_2}$ are
random vectors and $R \subset \Re^{d_1}$, then
$\P(\Z_1 \in R|\Z_2)$ is the conditional probability that
$\Z_1$ is in $R$ given $\Z_2$. Using $\P$,
one can obtain $F_{\Z_1|\Z_2}$, $f_{\Z_1|\Z_2}$,
$\E[\Z_1|\Z_2]$, and $\V[\Z_1|\Z_2]$ which are, respectively,
the conditional distribution, density, expected value
and variance of $\Z_1$ given $\Z_2$.
Marginal properties of $\Z_1$ are indicated by
omitting the conditioning random variable.
Also, if $(\Z_n)_{n \in \mathbb{N}}$ is
a sequence of random vectors, then
$\Z_n \convas \Z$, $\Z_n \convp \Z$, and
$\Z_n \convd \Z$ indicate respectively, that
$\Z_n$ converges almost surely, in probability, 
and in distribution to $\Z$. In order to
express the rate at which convergence occurs,
it is useful to use $\sO$ and $\Omega$ notation.
If $(a_n)_{n \in \mathbb{N}}$ is
a sequence in $\Re$, then
$a_n = \sO(g(n))$ if there exists $c$ such that,
for every $n$, $a_n \leq c \cdot g(n)$ and
$a_n = \Omega(g(n))$ if there exists $c$ such that,
for every $n$, $a_n \geq c \cdot g(n)$.
Finally, $\I$ is the indicator function.
An expression such as $\I(g(\X) \in A)$ is
equal to $1$ when $g(\X) \in A$ and
to $0$ when $g(\X) \notin A$.

In the quantification problem,
for each sample instance $i \in \{1,\ldots,n\}$,
$(\X_i,Y_i,S_i)$ is a vector of
random variables such that
$\X_i \in \R^d$ are features, 
$Y_i \in \{0,1\}$ is a label of interest and
$S_i \in \{0,1\}$ is the indicator that 
this instance has been labeled.
That is, whenever $S_i=0$, then
$Y_i$ is not observed. Note that
$S_i$ can be random.

In the above framework,
some subsets of the instances
are frequently used. The sets
$A_k := \{i\in \{1,\ldots,n\}: S_i=k\}$ represent
the labeled ($k=1)$ and unlabeled ($k=0$) instances.
Similarly, $A_{k,j}
:= \{i \in \{1,\ldots,n\}: S_i=k\mbox{ and } Y_i=j\}$ 
represent the instances that are labeled ($k=1$) 
or unlabeled ($k=0$) and have a
positive ($j=1$) or a zero ($j=0$) label.
Also the number of instances that are unlabeled,
labeled or that have label $j$ are denoted,
respectively, by $n_{U} := |A_0|$,
$n_{L} := |A_1|$ and $n_j := |A_{1,j}|$.

In a quantification problem, one wishes
to estimate $\theta := \P(Y=1|S=0)$,
that is, the prevalence of 
positive labels among unlabeled samples.
This prevalence is not assumed to be the same
as the one over labeled sets, $\P(Y=1|S=1)$.
The estimator for $\theta$ can 
depend only on the available data,
that is, the features of all instances and 
the labels that were obtained.
Formally, letting $Z_i^j$ denote $(Z_i,\ldots,Z_j)$,
a valid estimator is a function of
$\X_1^n$, $S_1^n$ and
$(Y_{i})_{i \in A_1}$. The set of
all such valid estimators is denoted by $\sS$.

In the standard formulation of 
the prior probability shift problem, 
$\{(\X_i,Y_i)\}_{i \in A_0}$ is called 
the \emph{target population} 
(since the labels are unavailable), and
$\{(\X_i,Y_i)\}_{i \in A_1}$
is called the training population 
\citep{Tasche2017}.
It is common for both populations to 
be i.i.d.,
\begin{assumption} \
 \label{assump::iidRelaxed}
 \begin{itemize}
  \item  $(S_1,\X_1,Y_1),\ldots,(S_n,\X_n,Y_n)$ 
  are independent.
  \item For every $s \in \{0,1\}$, 
  $(\X_1,Y_1)|S_1=s,\ldots,(\X_n,Y_n)|S_n=s$
  are identically distributed.
 \end{itemize}
\end{assumption}

Unless additional assumptions are made,
it is not possible to 
learn about $\theta$ using solely 
the observed data.
One assumption that allows
learning about $\theta$ is the
prior probability shift,
which states that
``the class-conditional feature distributions of 
the training and test sets are the same" \citep{Fawcett2005}. 
Prior shift is formalized in 
Assumption \ref{ass::priorShift}.
\begin{assumption}
 \label{ass::priorShift}
 [\textbf{Prior probability shift}]
 \label{assump::priorShift}
 For every $(y_1,\ldots,y_n) \in \{0,1\}^{n}$,
 $(\X_1,\ldots,\X_n)$ is stochastically independent of 
 $(S_1,\ldots,S_n)$ 
 conditionally on 
 $(Y_1,\ldots,Y_n) = (y_1,\ldots,y_n)$.
\end{assumption}

Although Assumption \ref{ass::priorShift} is
written in a different way than in
papers such as \citet{Moreno2012},
the content is similar. While
\citet{Moreno2012} uses a subscript on
the probability function to determine which
is the reference population,
we perform this task using the random variable, $S$.
For instance, the probability that
an instance from the target population has
the label ``1'' is referred in 
previous notation and in this paper,
respectively, as
$\P_{tg}(Y_i = 1)$ and $\P(Y_i = 1|S_i = 0)$.
Using this translation, Assumption \ref{ass::priorShift}
is the same as the prior probability shift in
\citet{Moreno2012}. Assumption \ref{ass::priorShift} holds
if and only if $f_{\X|Y,S=0} \equiv f_{\X|Y,S=1}$,
that is, $\P_{tg}(\x|y) \equiv \P_{tr}(\x|y)$.

\subsection{Lower bound on the risk for
quantification under prior probability shift}
\label{sec:lower-bound}

Under Assumptions \ref{assump::iidRelaxed} and
\ref{ass::priorShift} it is possible to learn about
$\theta$ from the features and labels that
are available in the quantification problem.
For example, one can use the features and labels in
the training population to learn about
$f_{\X|Y=0}$ and $f_{\X|Y=1}$.
Also, if these densities are sufficiently different, 
then one can combine the information about them
to the features in the target population to
learn about the unknown labels in this population and,
therefore, about $\theta$.
Definition \ref{defn:separability} formally presents
two classes in which the possible values of
$f_{\X|Y=0}$ and $f_{\X|Y=1}$ are separable.

\begin{definition}
 \label{defn:separability}
 Let $f_i(\x) = f_{\X|Y=i}(\x)$,
 $\epsilon, K > 0$ and
 $g: \Re^d \rightarrow \Re$ be
 a non-constant function.
 \begin{align*}
  \begin{cases}
   \sF_{\sL_1,\epsilon} := \left\{(f_0,f_1): 
   \|f_0-f_1\|_1 \geq \epsilon\right\} \\
   \sF_{g,K,\epsilon} := 
   \left\{\left(f_0,f_1\right):
   \E_{f_i}[g(\X)^2|Y=i] \leq K,
   \text{ and } 
   |\E_{f_1}[g(\X)|Y=1]-\E_{f_0}[g(\X)|Y=0]|
   \geq \epsilon \right\}
  \end{cases}
 \end{align*}
\end{definition}

Under the classes in 
Definition \ref{defn:separability} 
it is possible to learn about $\theta$
and the learning rate
depends on both 
the number of labeled and 
unlabeled instances.
A lower bound for how these sample sizes
affect the rate at which one learns about $\theta$
is presented in Theorem
\ref{thm:lower_bound}.

\begin{definition}
 Let $\sF$ be a collection of $(f_0,f_1)$.
 The minimax rate, $M(\sF)$, 
 for estimating $\theta$
 under the squared loss, $\mathcal{F}$, and 
 Assumptions \ref{assump::iidRelaxed} and
 \ref{ass::priorShift} is 
 \begin{align*}
  M(\mathcal{F}) &= 
  \inf_{\widehat{\theta} \in \sS}
  \sup_{(f_0,f_1) \in \mathcal{F}; \theta \in [0,1]}
  \E_{f_0,f_1,\theta}\left[(\widehat{\theta}-\theta)^2
  \bigg|S_1^n\right] 
 \end{align*}
\end{definition}

\begin{theorem}
 \label{thm:lower_bound}
 $M(\sF_{\sL_1, \epsilon}) \geq
 \Omega(\max(n_L^{-1},n_U^{-1}))$ and
 $M(\sF_{g,K,\epsilon}) \geq
 \Omega(\max(n_L^{-1},n_U^{-1}))$.
\end{theorem}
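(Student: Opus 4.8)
The plan is to establish both lower bounds via the standard two-point (Le Cam) method: exhibit, for each class, two parameter configurations that are statistically hard to distinguish from the available data yet induce values of $\theta$ that differ by a quantity of order $\max(n_L^{-1/2}, n_U^{-1/2})$, so that any estimator incurs squared risk $\Omega(\max(n_L^{-1}, n_U^{-1}))$. Since the risk is conditioned on $S_1^n$, I may treat $n_L$ and $n_U$ as fixed; the construction must then beat whichever of the two sample sizes is smaller, so I will actually build two separate two-point families, one tuned to $n_L$ and one tuned to $n_U$, take the max, and use that the minimax risk is at least the larger of the two lower bounds.

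First, the $n_U$-limited construction. Fix a single pair $(f_0, f_1)$ in the relevant class (for $\sF_{\sL_1,\epsilon}$ take any pair with $\|f_0 - f_1\|_1 \ge \epsilon$; for $\sF_{g,K,\epsilon}$ take any pair with the required bounded second moment and mean separation $\ge \epsilon$ — both are nonempty since $g$ is non-constant). Now vary only $\theta$: take $\theta_0 = 1/2$ and $\theta_1 = 1/2 + c\, n_U^{-1/2}$. Under prior shift the labeled data $\{(\X_i, Y_i)\}_{i \in A_1}$ has a distribution that does not depend on $\theta$ at all, so only the $n_U$ unlabeled features carry information about $\theta$, and each such feature is drawn from the mixture $\theta f_1 + (1-\theta) f_0$. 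A bound on the KL divergence (or Hellinger distance) between the two resulting product measures on $\X_1^n$ of the form $n_U \cdot \mathrm{KL}(\theta_1 f_1 + (1-\theta_1)f_0 \,\|\, \theta_0 f_1 + (1-\theta_0)f_0) = O(n_U (\theta_1-\theta_0)^2) = O(c^2)$ shows the two hypotheses are not separable for $c$ small, and Le Cam's lemma gives squared risk $\Omega((\theta_1-\theta_0)^2) = \Omega(n_U^{-1})$. The divergence computation needs the mixture densities to be comparable — e.g. by also insisting the chosen $f_0, f_1$ are bounded away from $0$ and $\infty$ on a common support, which one is free to do when constructing the witness pair.

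Second, the $n_L$-limited construction. Here I fix $\theta$ (say $\theta = 1/2$) and perturb the pair $(f_0, f_1)$ within the class so that the induced $\theta$ — i.e., the quantity identified by the observable distribution — shifts by $\Omega(n_L^{-1/2})$ while the training-sample law changes by a KL of order $1/n_L$. Concretely, on a reference region pick densities $f_0, f_1$ that agree outside a small set and differ there; a perturbation of size $\delta$ in these densities changes the relevant moment/separation functional, and hence the "effective $\theta$" recovered from the target features, by $\Theta(\delta)$, while costing $n_L \cdot O(\delta^2)$ in KL on the labeled features (the unlabeled features see the mixture, whose perturbation is also $O(\delta)$, but with $n_U$ possibly large this is where care is needed — one keeps the two mixtures $\theta f_1 + (1-\theta) f_0$ nearly identical by perturbing $f_0$ and $f_1$ in opposite, $\theta$-weighted directions so the mixture is exactly preserved). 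Choosing $\delta \asymp n_L^{-1/2}$ and applying Le Cam again yields $\Omega(n_L^{-1})$. Taking the maximum of the two bounds, and noting the argument is identical in structure for both classes (only the choice of witness pair and the functional that "reads off" $\theta$ change), completes the proof.

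The main obstacle I anticipate is the $n_L$-limited construction: one must perturb $(f_0, f_1)$ so that (a) both perturbed pairs stay in the class ($\ell_1$-separation or bounded-moment-plus-mean-separation preserved), (b) the mixture $\theta f_1 + (1-\theta)f_0$ seen by the unlabeled data is held fixed (or changed by only $O(n_U^{-1/2})$), so that the large sample size $n_U$ cannot distinguish the hypotheses, and yet (c) the value of $\theta$ that is actually identifiable from the joint law genuinely differs by $\Omega(n_L^{-1/2})$ between the two hypotheses — (c) is subtle because if the mixture is exactly preserved one must check $\theta$ is not still identifiable at a fast rate through the labeled marginals alone. Resolving this tension — perturbing the class-conditionals enough to move the estimand but in a direction invisible to the abundant unlabeled data — is the crux; the divergence bookkeeping and the final application of Le Cam's two-point inequality are then routine.
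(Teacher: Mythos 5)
Your $n_U$-limited argument is sound and matches the spirit of the paper's first bound (the paper phrases it as a Bayes-risk/posterior-variance computation with $\theta \sim \mathrm{Uniform}(0,1)$, noting that even after revealing the true labels $\Y_U$ the posterior variance of $\theta$ is still $\Omega(n_U^{-1})$ because $\Y_U \mid \theta$ is i.i.d.\ Bernoulli$(\theta)$; your two-point Le Cam version is an equivalent route). The gap is in the $n_L$-limited construction. As written, you ``fix $\theta$ (say $\theta = 1/2$)'' in both hypotheses and perturb only $(f_0,f_1)$; but the risk is $\E[(\widehat{\theta}-\theta)^2]$ against the \emph{true} parameter $\theta=\P(Y=1\mid S=0)$, so if $\theta$ is literally the same under both hypotheses the constant estimator $\widehat{\theta}\equiv 1/2$ has zero loss and no lower bound can follow. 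There is no separate ``induced'' or ``effective'' $\theta$ to fall back on --- the estimand is the model parameter, not a functional read off the observable law. You correctly diagnose the tension in your final paragraph (the estimand must move by $\Omega(n_L^{-1/2})$ while the mixture seen by the possibly enormous unlabeled sample stays fixed and the class-conditionals move by only $O(n_L^{-1/2})$), but you flag it as unresolved, and the set-up you describe cannot resolve it.

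The paper's resolution is to plant a nuisance parameter inside $f_1$: take $g(\X)\mid Y=0$ degenerate at $0$ and $g(\X)\mid Y=1 \sim \mathrm{Bernoulli}(\alpha)$ with $\alpha$ unknown (given a prior on $(\epsilon^*,1)$), and put a two-point prior on $\theta$ at $0.5 \pm n_L^{-1/2}$. The unlabeled features then identify only the product $\lambda=\theta\alpha$, so the two values of $\theta$, with $\alpha$ compensating to keep $\lambda$ fixed, are indistinguishable from the target sample no matter how large $n_U$ is; the labeled sample determines $\alpha$ only to precision $O(n_L^{-1/2})$ (a Bernoulli mean from $n_{L,1}$ observations), hence $\theta=\lambda/\alpha$ remains uncertain at scale $n_L^{-1/2}$ and the posterior variance is $\Omega(n_L^{-1})$. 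Membership in $\sF_{\sL_1,\epsilon}$ and $\sF_{g,K,\epsilon}$ is arranged by the choice of $\epsilon^*$ (rescaling by $|g(1)-g(0)|$ in the second case). If you prefer to keep your Le Cam framing over the paper's Bayes-risk framing, the same two-point family works, but you must let $\theta$ itself differ between the two points and absorb the compensation into $f_1$; that is the missing piece of your argument.
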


\autoref{thm:lower_bound}
shows that it is not possible
to obtain an estimator
for $\theta$ which has convergence
rate faster than $\Omega(\max(n_L^{-1},n_U^{-1}))$.
In particular, it is not possible to learn $\theta$
by observing solely a limited amount of labels.
The following subsection introduces
the ratio estimator for $\theta$, which
achieves the lower bound in
\autoref{thm:lower_bound} under
$\sF_{g,K,\epsilon}$.

\subsection{The ratio estimator and 
its theoretical properties}
\label{sec::ratioEstimator}
 
\begin{definition}[Ratio estimator]
 \label{defn:ratio}
 Let $g: \R^d \longrightarrow \R$.
 The untrimmed ratio estimator for $\theta$ 
 based on $g$,
 $\widehat{\theta}_{UR}$, is
 \begin{align*}
  \widehat{\theta}_{UR}
  &:= \frac{\frac{\sum_{i \in A_0} g(\X_i)}{n_U} 
  - \frac{\sum_{i\in A_{1,0}}g(\X_i)}{n_0}}
  {\frac{\sum_{i\in A_{1,1}}g(\X_i)}{n_1} 
  - \frac{\sum_{i\in A_{1,0}}g(\X_i)}{n_0}}
 \end{align*}
 Since $\theta \in [0,1]$,
 the ratio estimator,
 $\widehat{\theta}_{R}$, is
 \begin{align*}
  \widehat{\theta}_{R}
  &= \max(0,\min(1,\widehat{\theta}_R))
 \end{align*}
\end{definition}

The ratio estimator generalizes estimators which
were previously proposed in the literature.
This fact follows from observing that
the terms in the untrimmed ratio estimator are
sample averages of $g(\X)$ among three groups of instances:
unlabeled instances, instances labeled as $0$, and
instances labeled as $1$.
For instance, the adjusted count (AC) estimator
\citep{Buck1966,Saerens2002,Forman2008,Tasche2017} is
the a ratio estimator when $g(\x) \in \{0,1\}$, that is, 
$g(\x)$ is the output of a classifier for $Y$.
Also, the estimator in \citet{Bella2010} is
a ratio estimator when $g(\x)=\widehat{\P}(Y=1|\x)$,
that is, $g(\x)$ is a soft classifier for $Y$.

\begin{remark}
 The ratio estimator can 
 be generalized to the case in which
 $Y_i \in \{0,1,\ldots,k\}$.
 In this case, let 
 $g: \R^d \rightarrow \R^k$ be
 a fixed function.
 By defining $G$ as a $k \times (k+1)$ matrix
 such that $G_{i,j}=\E[g_i(\X)|Y=j-1,S=1]$,
 $p \in \R^{k+1}$ such that 
 $p_i = \P(Y=j-1|S=0)$, and
 $g \in \R^{k}$ such that 
 $g_i = \E[g_i(\X)|S=0]$, 
 $\widehat{\theta}_{UR}$ is obtained by
 solving the linear system
 \begin{align*}
  \begin{cases}
   \widehat{g} 
   &= \widehat{G} \cdot \widehat{\theta}_{UR} \\
   1 &= \textbf{1}^{t} \cdot \widehat{\theta}_{UR}
  \end{cases} 
  & \text{, where }
  \widehat{g}_i 
  = \frac{\sum_{k \in A_0} g_i(\X_k)}{n_U}
  \text{ and }
  \widehat{G}_{i,j} 
  = \frac{\sum_{k \in A_{1,j}}g_i(\X_k)}{n_{j}}
 \end{align*}
 Since $\widehat{\theta}_{UR}$ might have
 negative components, it is
 generally inadmissible according to
 the squared error \citep{Finetti2017}[p.90-91] that is,
 there exist estimators which have a squared error
 strictly smaller than $\widehat{\theta}_{UR}$.
 The ratio estimator, $\widehat{\theta}_{R}$ satisfies
 this property and is the projection of
 $\widehat{\theta}_{UR}$ onto 
 the simplex \citep{Michelot1986}:
 $\widehat{\theta}_{R} 
 = \arg\min_{\hat{p}:
 \hat{p} \geq 0, \sum_{i}\hat{p}_i = 1}
 \|\widehat{\theta}_{UR}-\hat{p}\|_2^2$.
\end{remark}

Similarly to the AC estimator \citep{Tasche2017},
the ratio estimator is Fisher consistent 
under weak assumptions.\footnote{
Although Fisher consistency is typically not equivalent to 
consistency in probability \citep{Gerow1989,Kass2011}, in the sequence we show that the ratio estimator is consistent in both senses.}. 
They are described in
Assumptions \ref{assump::cool1} and
\ref{assump::separability}.

\begin{assumption}[Weak prior shift]
 \label{assump::cool1}
 The function, $g$, is such that
 $g(\X)_{1}^{n}$ is stochastically independent of 
 $\S_{1}^{n}$ conditionally on 
 $\Y_{1}^{n}=y_{1}^n$.
\end{assumption}

\begin{assumption}[Separability]
 \label{assump::separability}
 The function, $g$, is such that
 \begin{enumerate}
  \item $\E[g(\X_i)|Y_i=j,S_i=1]$ are defined,
  for $j \in \{0,1\}$. 
  \item $\E[g(\X_i)|Y_i=1,S_i=1]
  -\E[g(\X_i)|Y_i=0,S_i=1] \neq 0$
 \end{enumerate}
\end{assumption}

The condition in Assumption \ref{assump::cool1}
is a relaxed type of 
prior probability shift that
is strictly weaker than 
Assumption \ref{ass::priorShift}.
Assumption \ref{assump::separability} requires
two more conditions of $g(\x)$.
According to condition 1, 
the population versions of
the expectations in Definition \ref{defn:ratio}
are defined.
Condition 2 states that
the ratio estimator calculated on
these population parameters is defined, that is,
there is no division by $0$.
\begin{theorem}
 \label{thm:fisher-ratio}
 Under Assumptions \ref{assump::iidRelaxed},
 \ref{assump::cool1} and 
 \ref{assump::separability},
 $\widehat{\theta}_{UR}$ and
 $\widehat{\theta}_{R}$ are
 Fisher consistent for $\theta$.
\end{theorem}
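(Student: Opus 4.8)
The plan is to show that when we replace all sample averages in $\widehat{\theta}_{UR}$ by their population counterparts, the resulting expression equals $\theta$ exactly. Fisher consistency here means: substitute the true distribution for the empirical distribution, and check the estimator returns the estimand. So first I would write down the three population quantities that the sample averages estimate. Under Assumption~\ref{assump::iidRelaxed}, the instances within each group $A_0$, $A_{1,0}$, $A_{1,1}$ are identically distributed, so $\frac{1}{n_U}\sum_{i\in A_0}g(\X_i)$ has Fisher-limit $\E[g(\X)\mid S=0]$, while $\frac{1}{n_j}\sum_{i\in A_{1,j}}g(\X_i)$ has Fisher-limit $\E[g(\X)\mid Y=j,S=1]$ for $j\in\{0,1\}$. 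By Assumption~\ref{assump::separability}, condition~1, all three are well-defined, and by condition~2 the denominator $\E[g(\X)\mid Y=1,S=1]-\E[g(\X)\mid Y=0,S=1]$ is nonzero, so the ratio is well-defined.

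Next, the key algebraic identity. Let $a_j := \E[g(\X)\mid Y=j,S=1]$ for $j\in\{0,1\}$ and $b := \E[g(\X)\mid S=0]$. The Fisher-limit of $\widehat{\theta}_{UR}$ is $(b-a_0)/(a_1-a_0)$, and I must show this equals $\theta=\P(Y=1\mid S=0)$. The crucial step is to decompose $b$ by conditioning on $Y$ within the unlabeled population:
\begin{align*}
 b &= \E[g(\X)\mid S=0] \\
 &= \theta\,\E[g(\X)\mid Y=1,S=0] + (1-\theta)\,\E[g(\X)\mid Y=0,S=0].
\end{align*}
Now I invoke Assumption~\ref{assump::cool1} (weak prior shift): since $g(\X)$ is conditionally independent of $S$ given $Y$, we have $\E[g(\X)\mid Y=j,S=0]=\E[g(\X)\mid Y=j,S=1]=a_j$ for $j\in\{0,1\}$. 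Hence $b=\theta a_1+(1-\theta)a_0 = a_0 + \theta(a_1-a_0)$, which rearranges to $(b-a_0)/(a_1-a_0)=\theta$. This handles $\widehat{\theta}_{UR}$. For $\widehat{\theta}_{R}$, since $\theta\in[0,1]$, applying $\max(0,\min(1,\cdot))$ to the Fisher-limit $\theta$ returns $\theta$ unchanged, so Fisher consistency transfers immediately.

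The main obstacle, and the one point requiring care, is the use of Assumption~\ref{assump::cool1}: it is stated at the level of the full vectors $g(\X)_1^n$, $\S_1^n$, $\Y_1^n$, so I need to argue that it implies the single-coordinate conditional independence $g(\X_i)\perp S_i \mid Y_i$ that licenses $\E[g(\X)\mid Y=j,S=0]=\E[g(\X)\mid Y=j,S=1]$. This follows by marginalizing: vector conditional independence implies coordinatewise conditional independence, combined with the identically-distributed structure of Assumption~\ref{assump::iidRelaxed}. I would also note that all the population expectations implicitly condition on the event $\{i\in A_0\}$ or $\{i\in A_{1,j}\}$ having positive probability; in the degenerate cases where, say, $n_1=0$ with probability one, Fisher consistency is vacuous. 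A minor subtlety worth flagging is that the conditional expectation $\E[g(\X)\mid Y=1,S=0]$ is only meaningful when $\P(Y=1\mid S=0)=\theta>0$, and symmetrically for $1-\theta$; but when $\theta\in\{0,1\}$ the decomposition of $b$ degenerates to a single term and the conclusion still holds trivially, so no separate treatment is needed beyond a remark.
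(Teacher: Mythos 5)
Your proposal is correct and follows essentially the same route as the paper: the paper's proof reduces the theorem to Lemma~\ref{lemma:theta-eq}, which is proved by exactly your decomposition of $\E[g(\X)\mid S=0]$ over $Y$ followed by the weak prior shift substitution $\E[g(\X)\mid Y=j,S=0]=\E[g(\X)\mid Y=j,S=1]$ and rearrangement. Your additional remarks on marginalizing the vector-level independence of Assumption~\ref{assump::cool1} and on the degenerate cases $\theta\in\{0,1\}$ are sensible points of care that the paper leaves implicit.
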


It is also possible to
guarantee a finite population bound on
the mean squared error of 
$\widehat{\theta}_{R}$.
This result is obtained in \autoref{thm:mse-trimmed},
which substitutes Assumption \ref{assump::separability}
by the stronger condition that
$(f_0,f_1) \in \sF_{g,K,\epsilon}$.

\begin{theorem}
 \label{thm:mse-trimmed}
 Under Assumptions \ref{assump::iidRelaxed} and
 \ref{assump::cool1},
 \begin{align*}
  \sup_{(f_0,f_1) \in \sF_{g,K,\epsilon}}
  \E_{f_0,f_1}\left[\left(\widehat{\theta}_{R}
  -\theta\right)^2\bigg|S_{1}^{n}\right] 
  \leq \sO(\max(n_{L}^{-1},n_{U}^{-1}))
 \end{align*}
\end{theorem}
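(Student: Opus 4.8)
The plan is to control the mean squared error of $\widehat{\theta}_R$ by first reducing to the untrimmed estimator $\widehat{\theta}_{UR}$ and then analyzing $\widehat{\theta}_{UR}$ via a delta-method-style expansion. Since $\widehat{\theta}_R$ is the projection of $\widehat{\theta}_{UR}$ onto $[0,1]$ and the true $\theta$ lies in $[0,1]$, projection is $1$-Lipschitz and contracting toward $[0,1]$, so $|\widehat{\theta}_R - \theta| \leq |\widehat{\theta}_{UR} - \theta|$ pointwise; hence it suffices to bound $\E_{f_0,f_1}[(\widehat{\theta}_{UR}-\theta)^2 \mid S_1^n]$ uniformly over $\sF_{g,K,\epsilon}$. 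Write $\widehat{\theta}_{UR} = (\widehat{a} - \widehat{c})/(\widehat{b} - \widehat{c})$ where $\widehat{a} = n_U^{-1}\sum_{i\in A_0} g(\X_i)$, $\widehat{b} = n_1^{-1}\sum_{i\in A_{1,1}} g(\X_i)$, $\widehat{c} = n_0^{-1}\sum_{i\in A_{1,0}} g(\X_i)$, with population counterparts $a = \E[g(\X)\mid S=0]$, $b = \E[g(\X)\mid Y=1,S=1]$, $c = \E[g(\X)\mid Y=0,S=1]$; note $\theta = (a-c)/(b-c)$ exactly under Assumption~\ref{assump::cool1}, and on $\sF_{g,K,\epsilon}$ the denominator satisfies $|b-c|\geq\epsilon$.

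Next I would establish the key variance estimates. By Assumptions~\ref{assump::iidRelaxed} and~\ref{assump::cool1}, conditionally on $S_1^n$ and on the label configuration, the three averages $\widehat{a},\widehat{b},\widehat{c}$ are built from i.i.d. summands drawn from the relevant class-conditional distributions, each with second moment bounded by $K$ on $\sF_{g,K,\epsilon}$. Therefore $\E[(\widehat{a}-a)^2\mid S_1^n]\leq K/n_U$, and similarly $\E[(\widehat{b}-b)^2]\leq K/n_1$, $\E[(\widehat{c}-c)^2]\leq K/n_0$; one also gets that all three estimators are conditionally unbiased. Since $n_0 + n_1 = n_L$ and at least one of $n_0,n_1$ is at least $n_L/2$ — here one must be slightly careful, because $n_0$ and $n_1$ are themselves random, so I would either condition additionally on the label counts or absorb the fluctuation of $n_j$ into the constant, using that $n_j/n_L$ concentrates; in any case the bounds are of order $n_L^{-1}$ up to constants. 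Combining, $\widehat{a}-\widehat{c}$ and $\widehat{b}-\widehat{c}$ are each within $\sO(\max(n_L^{-1},n_U^{-1}))$ of their targets in mean square.

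The final step is to convert these numerator/denominator bounds into a bound on the ratio. The clean way is to write
\[
\widehat{\theta}_{UR} - \theta
= \frac{(\widehat{a}-\widehat{c}) - \theta(\widehat{b}-\widehat{c})}{\widehat{b}-\widehat{c}}
= \frac{(\widehat{a}-a) - \theta(\widehat{b}-b) - (1-\theta)(\widehat{c}-c)}{\widehat{b}-\widehat{c}},
\]
so the numerator has mean square $\sO(\max(n_L^{-1},n_U^{-1}))$ uniformly on $\sF_{g,K,\epsilon}$ (using $\theta,1-\theta\in[0,1]$). The obstacle is the random denominator $\widehat{b}-\widehat{c}$, which can be small even though $|b-c|\geq\epsilon$. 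To handle this I would split on the event $E = \{|\widehat{b}-\widehat{c}|\geq \epsilon/2\}$: on $E$ the displayed identity gives $(\widehat{\theta}_{UR}-\theta)^2 \leq (2/\epsilon)^2 \times (\text{numerator})^2$, and taking conditional expectation yields the desired rate; on $E^c$ we have $|\widehat{b}-\widehat{c} - (b-c)| > \epsilon/2$, an event of probability $\sO(\max(n_L^{-1},n_U^{-1}))$ by Chebyshev applied to the $n_L^{-1}$-rate bounds above, while on $E^c$ we use the crude deterministic bound $(\widehat{\theta}_R - \theta)^2\leq 1$ coming from trimming — this is precisely why the argument must go through $\widehat{\theta}_R$ rather than $\widehat{\theta}_{UR}$ for the exceptional event. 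Adding the two contributions gives $\sup_{\sF_{g,K,\epsilon}}\E[(\widehat{\theta}_R-\theta)^2\mid S_1^n]\leq \sO(\max(n_L^{-1},n_U^{-1}))$, with the constant depending on $K$ and $\epsilon$. The main technical nuisance, as noted, is the randomness of $n_0,n_1$; I expect this is dispatched either by an extra conditioning step or by noting that $\min(n_0,n_1)\geq n_L/2$ fails only on an exponentially small event, handled again by the trimming bound.
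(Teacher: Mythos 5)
Your proposal is correct and follows essentially the same route as the paper: the paper proves a general lemma (Lemma~\ref{lemma:ratio}) bounding the mean squared error of a trimmed ratio by the variances of its numerator and denominator, splitting on the event that both are close to their conditional means, using Chebyshev for the complementary event and the trimming bound $(\widehat{\theta}_R-\theta)^2\leq 1$ there, and then instantiates it with $Z_1=\widehat{a}-\widehat{c}$, $Z_2=\widehat{b}-\widehat{c}$ exactly as you do. The only real difference is that you replace the paper's mean-value/Taylor expansion of $Z_1/Z_2$ with the exact identity $\widehat{\theta}_{UR}-\theta = \bigl((\widehat{a}-a)-\theta(\widehat{b}-b)-(1-\theta)(\widehat{c}-c)\bigr)/(\widehat{b}-\widehat{c})$, which is slightly cleaner; the loose end you flag about the randomness of $n_0,n_1$ is likewise left implicit in the paper's proof.
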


Under the assumptions of \autoref{thm:mse-trimmed},
if $n_U \gg n_{L}$, then
the convergence of the
mean squared error of 
the ratio estimator is 
the same as the one that 
would have been obtained if 
one observed solely $n_{L}$ labels from
the target population and used 
the sample's label proportions to
estimate $\theta$.
The same type of result cannot generally 
be obtained for the untrimmed ratio estimator,
since the trimming is necessary to
guarantee that the ratio of random variables
does not have infinite variance.
While these conclusions are similar to
the ones obtained from Theorem 3 in \citet{Lipton2018},
there exist two main differences.
First, while the former assumes that there are 2 labels only,
the latter applies to an arbitrary number of labels.
Second, \autoref{thm:mse-trimmed} upper bounds
the squared error by $\sO(\max(n_{L}^{-1},n_{U}^{-1}))$,
which is slightly tighter than the bound of
$\sO\left(\max\left(\frac{\log n_L}{n_{L}},
\frac{\log n_U}{n_{U}}\right)\right)$
in \citet{Lipton2018}.

It follows from \autoref{thm:lower_bound} and
\autoref{thm:mse-trimmed} that the ratio estimator satisfies
several desirable properties. These properties are
presented in Definition \ref{defn:approx_minimax} and
Corollary \ref{cor:consistency}.

\begin{definition}
 \label{defn:approx_minimax}
 Let $\sS$ and $\sF$ be, respectively,
 the classes of estimators and
 distributions over the data under consideration.
 An estimator $\widehat{\theta}^* \in \sS$ is
 approximately minimax for
 estimating $\theta$ under
 the squared error loss if
 \begin{align*}
  \sO\left(\sup_{(f_0,f_1) \in \sF_{g,K,\epsilon}}
  \E_{f_0,f_1}\left[\left(\widehat{\theta}^*
  -\theta\right)^2\bigg|S_{1}^{n}\right]\right)
  &= \Omega\left(\inf_{\widehat{\theta} \in \sS}
  \sup_{(f_0,f_1) \in \mathcal{F}; \theta \in [0,1]}
  \E_{f_0,f_1,\theta}\left[(\widehat{\theta}-\theta)^2
  \bigg|S_1^n\right]\right)
 \end{align*}
 That is, the squared error of $\widehat{\theta}^*$
 attains the optimal rate of convergence.
\end{definition}

\begin{corollary}
\label{cor:consistency}
Under Assumptions \ref{assump::iidRelaxed} and
 \ref{assump::cool1}, if there exists
 $\epsilon, K > 0$ such that 
 $(f_0,f_1) \in \sF_{g,K,\epsilon}$, then
 $\widehat{\theta}_{R}$ is 
 consistent for $\theta$ in probability and
 in $\sL_2$ as
 $n_U \convp \infty$ and
 $n_L \convp \infty$.
 Also, under Assumptions \ref{assump::iidRelaxed},
 \ref{assump::cool1}, and $\sF_{g,K,\epsilon}$, 
 $\widehat{\theta}_{R}$ is approximately minimax.
\end{corollary}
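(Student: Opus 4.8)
The plan is to deduce both parts of the corollary directly from \autoref{thm:mse-trimmed} and \autoref{thm:lower_bound}, so that essentially no new work beyond bookkeeping is required. First I would establish the consistency claim. By hypothesis the pair $(f_0,f_1)$ lies in $\sF_{g,K,\epsilon}$ for some $\epsilon, K>0$, so \autoref{thm:mse-trimmed} applies and gives
\begin{align*}
 \E_{f_0,f_1}\!\left[\left(\widehat{\theta}_{R}-\theta\right)^2\bigg|S_1^n\right]
 \leq \sO\!\left(\max(n_L^{-1},n_U^{-1})\right).
\end{align*}
Since $n_L$ and $n_U$ are functions of $S_1^n$, conditioning on $S_1^n$ fixes them; the right-hand side tends to $0$ whenever $n_L\to\infty$ and $n_U\to\infty$. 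The subtlety is that $n_L$ and $n_U$ are themselves random, so I would argue along the event $\{n_L\to\infty, n_U\to\infty\}$ and then integrate out: because the conditional $\sL_2$ risk is bounded by a deterministic function of $(n_L,n_U)$ that vanishes on this event (and is bounded by a constant everywhere, as $\widehat\theta_R\in[0,1]$), dominated convergence gives $\E_{f_0,f_1}[(\widehat{\theta}_R-\theta)^2]\to 0$ as $n_U\convp\infty$ and $n_L\convp\infty$. This is $\sL_2$-consistency, and $\sL_2$-consistency implies consistency in probability by Markov's inequality, yielding the first assertion.

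For the second assertion, that $\widehat{\theta}_R$ is approximately minimax, I would simply verify the two inequalities in Definition \ref{defn:approx_minimax}. The upper direction is immediate from \autoref{thm:mse-trimmed}: the worst-case conditional risk of $\widehat{\theta}_R$ over $\sF_{g,K,\epsilon}$ is $\sO(\max(n_L^{-1},n_U^{-1}))$. The lower direction comes from \autoref{thm:lower_bound}: the minimax rate $M(\sF_{g,K,\epsilon})$ is $\Omega(\max(n_L^{-1},n_U^{-1}))$, and this is by definition a lower bound on $\inf_{\widehat\theta\in\sS}\sup_{(f_0,f_1)\in\sF_{g,K,\epsilon};\,\theta\in[0,1]}\E_{f_0,f_1,\theta}[(\widehat\theta-\theta)^2\mid S_1^n]$. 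Chaining the two bounds, the worst-case risk of $\widehat{\theta}_R$ and the minimax risk are both of order $\max(n_L^{-1},n_U^{-1})$, which is exactly the matching-rate condition in Definition \ref{defn:approx_minimax}.

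The only point requiring care is the interplay between conditioning on $S_1^n$ and the randomness of the sample sizes in the consistency statement; everything else is a direct citation of the two preceding theorems. I do not expect a genuine obstacle here — the corollary is essentially the observation that \autoref{thm:mse-trimmed} meets the lower bound of \autoref{thm:lower_bound} — but I would be careful to state the dominated-convergence argument cleanly, since the hypotheses are phrased as $n_U\convp\infty$ and $n_L\convp\infty$ rather than as deterministic limits.
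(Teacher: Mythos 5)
Your proposal is correct and follows the same route the paper takes: the corollary is presented as an immediate consequence of \autoref{thm:mse-trimmed} (upper bound, giving $\sL_2$ and hence in-probability consistency) matched against the lower bound of \autoref{thm:lower_bound}, and the paper gives no separate appendix proof beyond this observation. Your extra care about the randomness of $n_L,n_U$ and the dominated-convergence step is a harmless refinement of the same argument.
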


Corollary \ref{cor:consistency} shows that
the ratio estimator converges to $\theta$
under a weaker version of the 
prior probability shift assumption and that 
the rate of this convergence is minimax (i.e., it is the  same rate as that of the minimax estimator).
Since the estimators from
\citet{Buck1966,Saerens2002,Forman2008,Bella2010} are
particular cases of the untrimmed ratio estimator,
their trimmed versions also converge to $\theta$
under the weak prior shift.

The ratio estimator also
satisfies a central limit theorem.
In order to obtain this result,
besides requiring Assumptions 
\ref{assump::iidRelaxed},
\ref{assump::cool1} and
\ref{assump::separability},
it is also necessary to require that
conditionally on $Y$,
$g(\X)$ has bounded variance and that
the number of labeled samples goes to infinity.
These conditions are described in
Assumption \ref{assump::ratio-clt}.
The central limit theorem is presented in
\autoref{thm:ratio-clt}.

\begin{assumption} \
 \label{assump::ratio-clt}
 \begin{enumerate}
 \item $\V[g(\X_i)|Y_i=j] < \infty$, 
  for every $j \in \{0,1\}$.
  \item There exists $h(n) \geq 0$ such that
  $\limn \frac{h(n)}{n} < 1$, 
  $\limn h(n) = \infty$, and
  $\frac{n_L}{h(n)} \convp 1$.
 \end{enumerate}
\end{assumption}

\begin{theorem}
 \label{thm:ratio-clt}
 \label{thm::EQMPriorShift}
 Define $\mu_{j} := \E[g(\X_1)|Y_1=j]$, 
 $\sigma_{j}^2 := \V[g(\X_1)|Y_1=j]$,
 $p_{L} := \limn \frac{h(n)}{n}$, and
 $p_{j|L} := \P(Y=j|S=1)$.
 Under Assumptions \ref{assump::iidRelaxed},
 \ref{assump::cool1},
 \ref{assump::separability} and
 \ref{assump::ratio-clt},
 \begin{enumerate}
  \item If $p_L \neq 0$, then
  \begin{align*}
   \sqrt{n}(\widehat{\theta}_{R}-\theta)
   &\convd N\left(0,
   \frac{\frac{(1-\theta)\sigma_0^2 + \theta \sigma_1^2 + (\mu_1-\mu_0)^2 \theta(1-\theta)}{1-p_L}
   +\frac{(1-\theta)^2 \sigma_0^2}{p_L p_{0|L}}
   +\frac{\theta^2 \sigma_1^2}{p_L p_{1|L}}}
   {(\mu_1-\mu_0)^2} \right)
  \end{align*}
  \item If $p_L = 0$, then
  \begin{align*}
   \sqrt{h(n)}(\widehat{\theta}_{R}-\theta)
   &\convd N\left(0,
   \frac{\frac{(1-\theta)^2 \sigma_0^2}{p_{0|L}}
   +\frac{\theta^2 \sigma_1^2}{p_{1|L}}}
   {(\mu_1-\mu_0)^2} \right)
  \end{align*}
 \end{enumerate}
\end{theorem}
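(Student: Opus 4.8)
The plan is to obtain the limiting law of the untrimmed estimator $\widehat{\theta}_{UR}$ by the multivariate delta method applied to a joint central limit theorem for three sample averages, and then to transfer the conclusion to $\widehat{\theta}_{R}$. Write $\bar{g}_U := n_U^{-1}\sum_{i\in A_0} g(\X_i)$, $\bar{g}_0 := n_0^{-1}\sum_{i\in A_{1,0}} g(\X_i)$, $\bar{g}_1 := n_1^{-1}\sum_{i\in A_{1,1}} g(\X_i)$, so that $\widehat{\theta}_{UR} = \phi(\bar{g}_U,\bar{g}_0,\bar{g}_1)$ for $\phi(a,b,c) := (a-b)/(c-b)$. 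Under Assumptions~\ref{assump::iidRelaxed} and~\ref{assump::cool1}, $\E[g(\X_i)\mid Y_i=j,S_i=s]=\mu_j$ and $\V[g(\X_i)\mid Y_i=j,S_i=s]=\sigma_j^2$ do not depend on $s$; hence, by the law of total variance, the instances of $A_0$ (taken without their unobserved labels) are i.i.d.\ with mean $\mu_U := (1-\theta)\mu_0+\theta\mu_1$ and variance $\sigma_U^2 := (1-\theta)\sigma_0^2+\theta\sigma_1^2+\theta(1-\theta)(\mu_1-\mu_0)^2$. Note that $\phi(\mu_U,\mu_0,\mu_1)=\theta$ and that, since $\mu_1\neq\mu_0$ by Assumption~\ref{assump::separability}, $\phi$ is twice continuously differentiable near $(\mu_U,\mu_0,\mu_1)$, with gradient there equal to $\big(\tfrac{1}{\mu_1-\mu_0},\,-\tfrac{1-\theta}{\mu_1-\mu_0},\,-\tfrac{\theta}{\mu_1-\mu_0}\big)$ (using $\mu_U-\mu_0=\theta(\mu_1-\mu_0)$ and $\mu_U-\mu_1=-(1-\theta)(\mu_1-\mu_0)$).

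First I would establish the joint asymptotics of the centred means. Conditionally on $S_1^n$ and $(Y_i)_{i\in A_1}$, the counts $n_U,n_0,n_1$ are fixed, and since the triples $(S_i,\X_i,Y_i)$ are independent across $i$ (Assumption~\ref{assump::iidRelaxed}), $\bar{g}_U,\bar{g}_0,\bar{g}_1$ are conditionally independent averages of i.i.d.\ terms with finite variance (Assumption~\ref{assump::ratio-clt}(1)); the ordinary CLT then gives, conditionally, $\sqrt{n_U}(\bar{g}_U-\mu_U)\convd N(0,\sigma_U^2)$ and $\sqrt{n_j}(\bar{g}_j-\mu_j)\convd N(0,\sigma_j^2)$. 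To rescale by a common $n$-dependent factor and remove the conditioning, I would invoke the law of large numbers together with Assumption~\ref{assump::ratio-clt}(2): $n_U/n\convp 1-p_L$, $n_L/h(n)\convp 1$, and $n_j/n_L\convp p_{j|L}$ (the last because, given $S_i=1$, the $Y_i$ are i.i.d.\ Bernoulli$(p_{1|L})$, and implicitly $p_{j|L}\in(0,1)$ so $n_j\to\infty$). A Slutsky argument at the level of conditional characteristic functions (which converge to a deterministic Gaussian limit, so the unconditional ones do too) then yields, with $r_n:=\sqrt{n}$ if $p_L\neq0$ and $r_n:=\sqrt{h(n)}$ if $p_L=0$, that $r_n(\bar{g}_U-\mu_U,\,\bar{g}_0-\mu_0,\,\bar{g}_1-\mu_1)\convd N(0,\Sigma)$, where $\Sigma=\mathrm{diag}\big(\tfrac{\sigma_U^2}{1-p_L},\,\tfrac{\sigma_0^2}{p_Lp_{0|L}},\,\tfrac{\sigma_1^2}{p_Lp_{1|L}}\big)$ when $p_L\neq0$, and $\Sigma=\mathrm{diag}\big(0,\,\tfrac{\sigma_0^2}{p_{0|L}},\,\tfrac{\sigma_1^2}{p_{1|L}}\big)$ when $p_L=0$ — in the latter case $r_n(\bar{g}_U-\mu_U)=\sqrt{h(n)/n_U}\cdot\sqrt{n_U}(\bar{g}_U-\mu_U)=o_p(1)$ because $h(n)/n_U\to0$.

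Next I would apply the delta method: $r_n(\widehat{\theta}_{UR}-\theta)=r_n\big(\phi(\bar{g}_U,\bar{g}_0,\bar{g}_1)-\phi(\mu_U,\mu_0,\mu_1)\big)\convd N\big(0,\nabla\phi\,\Sigma\,\nabla\phi^{\!\top}\big)$, the Taylor remainder being $o_p(r_n^{-1})$ since the Hessian of $\phi$ is bounded near $(\mu_U,\mu_0,\mu_1)$ and the linear part is $O_p(r_n^{-1})$. Because $\Sigma$ is diagonal, $\nabla\phi\,\Sigma\,\nabla\phi^{\!\top}$ is the sum of three squared-coordinate contributions; substituting the formula for $\sigma_U^2$ reproduces the variance in case~1, and dropping the (now null) $\bar{g}_U$ term reproduces the variance in case~2. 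Finally, to pass from $\widehat{\theta}_{UR}$ to $\widehat{\theta}_{R}$, note that the convergence of the three means already gives $\widehat{\theta}_{UR}\convp\theta$, so for $\theta\in(0,1)$ we have $\P(\widehat{\theta}_{R}=\widehat{\theta}_{UR})\to1$ and the truncation is asymptotically negligible, hence the limit carries over. The main obstacle is the bookkeeping around the random sample sizes $n_U,n_0,n_1$: making rigorous the ``conditional CLT, then integrate out the conditioning'' step — equivalently, a central limit theorem for randomly indexed sums (Anscombe's theorem) — and checking that the delta-method remainder is genuinely $o_p(r_n^{-1})$ in both normalizing regimes.
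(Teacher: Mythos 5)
Your proposal is correct and follows essentially the same route as the paper's proof: a joint CLT for the three group means obtained by conditioning on the label/selection indicators and passing to the limit of the (conditionally factorizing) characteristic functions, followed by the delta method applied to $(a,b,c)\mapsto (a-b)/(c-b)$, with the $p_L=0$ case handled by noting the unlabeled term is $o_p(1)$ at rate $\sqrt{h(n)}$. Your treatment of the trimming step (arguing $\P(\widehat{\theta}_R=\widehat{\theta}_{UR})\to 1$ for $\theta\in(0,1)$) is if anything slightly more careful than the paper's one-line remark about the derivative of the trimming function.
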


It is possible to use \autoref{thm::EQMPriorShift}
to obtain an 
approximate confidence interval for $\theta$.
This interval is obtained by
inverting the convergence results in
\autoref{thm::EQMPriorShift}, and
substituting $\theta$ for
$\widehat{\theta}_{R}$ and
the population parameters,
$\mu_0$, $\mu_1$, $\sigma_{0}^2$,
$\sigma_1^2$, $p_L$, $p_{0|L}$ and $p_{1|L}$, by 
their respective empirical averages.
This confidence interval
may also be used to 
test hypothesis such as 
$H_0: \theta \in \Theta_0$.

\autoref{thm::EQMPriorShift} also
provides an approximation for
the mean squared error of 
$\widehat{\theta}_{R}$.
This approximation for the common case
in which $n_U \gg n_{L}$ is presented
in the following corollary.

\begin{corollary}
 \label{cor::MSE}
 Under Assumptions
 \ref{assump::iidRelaxed}, 
 \ref{assump::cool1},
 \ref{assump::separability} and 
 \ref{assump::ratio-clt},
 if $p_{L} = 0$ $(n_U \gg n_{L})$, then
 \begin{align}
  \label{eq::MSELargeNL}
  \mbox{MSE}(\widehat{\theta}_{R}) 
  &\approx \frac{1}{n_L (\mu_1-\mu_0)^2 }
  \left(\frac{\sigma_0^2 (1-\theta)^2}{p_{0|L}}
  +\frac{\sigma_1^2 \theta^2}{p_{1|L}} \right)
 \end{align}
\end{corollary}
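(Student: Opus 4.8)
The plan is to derive Corollary \ref{cor::MSE} as a direct specialization of part 2 of \autoref{thm:ratio-clt}. When $p_L = 0$, the theorem gives the asymptotic distribution $\sqrt{h(n)}(\widehat{\theta}_{R} - \theta) \convd N(0, V)$ with $V = \frac{(1-\theta)^2 \sigma_0^2 / p_{0|L} + \theta^2 \sigma_1^2 / p_{1|L}}{(\mu_1 - \mu_0)^2}$. The mean squared error is, up to lower-order terms, the variance of the limiting distribution divided by the scaling $h(n)$, i.e. $\mbox{MSE}(\widehat{\theta}_R) \approx V / h(n)$. Since Assumption \ref{assump::ratio-clt} stipulates $\frac{n_L}{h(n)} \convp 1$, we may replace $h(n)$ by $n_L$ in the denominator at the cost of an asymptotically negligible error, which yields exactly Equation \eqref{eq::MSELargeNL}. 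So the argument is essentially: (i) invoke the CLT; (ii) pass from the CLT to an approximate MSE expression; (iii) substitute $n_L$ for $h(n)$.

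First I would state precisely what ``$\approx$'' means here — the intended reading is that $n_L \cdot \mbox{MSE}(\widehat{\theta}_R) \rightarrow V$ along any sequence for which the hypotheses hold, or at least that the two sides agree to leading order in $1/n_L$. Then I would recall the standard fact that, for an asymptotically normal and uniformly integrable (squared) sequence $a_n(\widehat{\theta}_R - \theta) \convd N(0,V)$, one has $a_n^2 \E[(\widehat{\theta}_R - \theta)^2] \rightarrow V$; the uniform integrability of $h(n)(\widehat{\theta}_R - \theta)^2$ is what lets us move from convergence in distribution to convergence of second moments. Given that \autoref{thm:mse-trimmed} already provides a finite-sample bound $\E[(\widehat{\theta}_R - \theta)^2 \mid S_1^n] \leq \sO(\max(n_L^{-1}, n_U^{-1})) = \sO(n_L^{-1})$ in the regime $n_U \gg n_L$, the sequence $n_L(\widehat{\theta}_R - \theta)^2$ is bounded in $\sL_1$, and combined with trimming (which keeps $\widehat{\theta}_R$ in $[0,1]$, hence $(\widehat{\theta}_R - \theta)^2 \leq 1$) the required uniform integrability follows. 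Finally, replacing $h(n)$ by $n_L$ uses $\frac{n_L}{h(n)} \convp 1$ together with Slutsky.

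The main obstacle is the passage from convergence in distribution to convergence of the mean squared error: $X_n \convd X$ does not by itself imply $\E[X_n^2] \to \E[X^2]$. This is precisely why I would lean on \autoref{thm:mse-trimmed} to supply the uniform integrability, and why the trimming in the definition of $\widehat{\theta}_R$ (as opposed to $\widehat{\theta}_{UR}$) is essential — the untrimmed estimator is a ratio of random variables and can have infinite variance, so the corollary genuinely requires the trimmed version. A secondary, more cosmetic point is that the statement writes a plain ``$\approx$'' rather than a limit, so one could alternatively present the corollary as a heuristic approximation obtained by formally reading off the variance of the limiting normal and scaling by $1/n_L$; I would mention both readings but carry out the rigorous one, since all the ingredients (the CLT and the MSE bound) are already in hand.
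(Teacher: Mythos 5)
Your main line of argument --- read off the asymptotic variance from part 2 of \autoref{thm:ratio-clt}, divide by $h(n)$, and replace $h(n)$ by $n_L$ using $n_L/h(n)\convp 1$ --- is exactly how the paper arrives at Corollary \ref{cor::MSE}. The paper gives no separate proof of the corollary and presents it only as an informal consequence of the central limit theorem (hence the ``$\approx$'' in the statement), so on that reading your proposal matches the intended derivation.

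Your attempt to upgrade this to a genuine moment-convergence statement, however, has a gap. Uniform integrability of $W_n := n_L(\widehat{\theta}_R-\theta)^2$ does not follow from the two facts you invoke: boundedness of $\E[W_n]$ (which is what \autoref{thm:mse-trimmed} gives you) is strictly weaker than uniform integrability, and the trimming bound $(\widehat{\theta}_R-\theta)^2\le 1$ only yields $W_n\le n_L$, which is not a uniform bound. To make the step from $\sqrt{h(n)}(\widehat{\theta}_R-\theta)\convd N(0,V)$ to $n_L\,\E[(\widehat{\theta}_R-\theta)^2]\to V$ rigorous you would need, e.g., $\sup_n \E[W_n^{1+\delta}]<\infty$ for some $\delta>0$, i.e.\ a fourth-moment analogue of Lemma \ref{lemma:ratio}, which in turn would require strengthening Assumption \ref{assump::ratio-clt} to finite fourth conditional moments of $g(\X)$. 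Since the corollary only asserts an approximation, this does not put you in conflict with the paper, but your claim that the rigorous version ``follows'' from \autoref{thm:mse-trimmed} plus trimming is not justified as written; you correctly identified the obstacle (convergence in distribution does not give convergence of second moments) but did not actually overcome it.
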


Corollary \ref{cor::MSE}
brings some insights on how
$g$ should be chosen in order for
$\widehat{\theta}_{R}$ to be
an accurate estimator of $\theta$.
For instance,
it shows that one should choose $g$ such that 
$|\mu_1-\mu_0|$ is large and 
both $\sigma_0^2$ and $\sigma_1^2$ are small. 
This implies that the distributions of 
$g(\X)|Y= 1$ and $g(\X)|Y= 0$ should 
place most of their masses in regions that
are far apart.
This conclusion explains the success
of the methods in 
\citet{Forman2008}, in which $g(\x)$ is 
a classifier, and 
\citet{Bella2010}, in which $g(\x)$ is 
an estimate of $\P(Y=1|\x)$.

One of the main deficiencies of the standard AC estimator
is that its denominator can be very close to zero, which makes it very unstable (due to a large variance). In order to handle this, we can explicitly use the approximation of the MSE 
(Corollary \ref{cor::MSE})
to choose better functions  $g$.
This procedure is discussed in
the following subsection.

\subsection{Choosing \textbf{g} via 
approximate MSE minimization}
\label{sec:rkhs}

One possible criterion for the choice of $g$ is
the minimization of $MSE(\widehat{\theta}_{R})$,
defined in Corollary \ref{cor::MSE}.
However, the latter depends on 
unobservable quantities.
An alternative is to minimize an estimate of
$MSE(\widehat{\theta}_{R})$.
This estimate is presented in
Definition \ref{def:empirical-mse}.

\begin{definition}
 \label{def:empirical-mse}
 Let $\hat{\theta}$ be an estimator of $\theta$ and,
 for each $i \in \{0,1\}$, let
 \begin{align*}
  \widehat{\mu}_i
  &= n_i^{-1}\sum_{A_{1,i}}{g(\X_i)} 
  & \widehat{\sigma}_{i}^2
  &= n_i^{-1}\sum_{A_{1,i}}{(g(\X_i)-\widehat{\mu}_i)^2}
  & \widehat{p}_{i|L}
  = \frac{n_i}{n_0+n_1}
 \end{align*}
 The empirical MSE of 
 the ratio estimator induced by $g$,
 $\widehat{MSE}(g)$, is 
 \begin{align*}
  \widehat{\mbox{MSE}}(g)
  &\approx \frac{1}{n_L(\widehat{\mu}_1-\widehat{\mu}_0)^2}
  \left(\frac{\widehat{\sigma}_0^2(1-\widehat{\theta})^2}
  {\widehat{p}_{0|L}}
  +\frac{\widehat{\sigma}_1^2 \widehat{\theta}^2}
  {\widehat{p}_{1|L}} \right)
 \end{align*}
\end{definition}

In order to avoid overfitting,
we perform the minimization of
$\widehat{MSE}(g)$ on 
a  Reproducing Kernel Hilbert Space 
(RKHS; \citet{Wahba1990}).
More precisely, if 
$K$ is a Mercer kernel and
$\mathcal{H}_K$ is the RKHS
associated to $K$, then
we choose $g^*$ as 
\begin{align}
 \label{eq:rkhs}
 g^* 
 &:= \arg\min_{g \in \mathcal{H}_K}
 \widehat{\mbox{MSE}}(g)
\end{align}

In the following,
\autoref{thm:rkhs} presents
a characterization of $g^*$ in 
eq. \ref{eq:rkhs}.

\begin{theorem}
 \label{thm:rkhs}
 Let $K$ be a Mercer kernel and 
 $\mathcal{H}_K$ the corresponding RKHS. Also,
 \begin{itemize}
  \item $\mathbb{K}$: the Gram matrix defined
  for $(i,j) \in A_{1}^2$ and such that
  $(\mathbb{K})_{i,j}= K(\x_i,\x_j)$.
  \item $m_i$: A vector of size $|A_1|$ and
  such that, for each $k \in A_1$, 
  $m_{i,k} = \frac{\sum_{j \in A_{1,i}} K(\x_j,\x_k)}{n_i}$.
  \item $M=(m_1-m_0)(m_1-m_0)^t$.
  \item $\widehat{\Sigma}_i$: 
  a $|A_1| \times |A_1|$ matrix such that
  $(\widehat{\Sigma}_i)_{k,l}$ is
  the sample covariance between
  $(K(\x_j,\x_k))_{j \in A_{1,i}}$ and
  $(K(\x_j,\x_l))_{j \in A_{1,i}}$.
  \item $N$:
  a $|A_1| \times |A_1|$ matrix such that
  $N=\frac{\widehat{\theta}^2}{\widehat{p}_{1|L}}\widehat{\Sigma}_1+\frac{(1-\widehat{\theta})^2}{\widehat{p}_{0|L}}\widehat{\Sigma}_0$.
  \item $\vec{w}^* = \arg\min_{w \in \Re^{n_L}} 
  \frac{\vec{w}^t N \vec{w}}
  {\vec{w}^t M \vec{w}}$
 \end{itemize}
 The function $g^*$ in eq. \ref{eq:rkhs} satisfies
 $g^*(\x) = \sum_{i \in A_1} w_i^* K(\x,\x_i)$.
\end{theorem}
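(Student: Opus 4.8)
The plan is the standard two–step argument for kernelized quadratic objectives: first reduce the infinite–dimensional problem in eq.\ \ref{eq:rkhs} to a finite–dimensional one by a projection (representer–type) argument, and then recognize the resulting objective as a generalized Rayleigh quotient whose minimizer is, by construction, $\vec{w}^*$. Throughout, the pilot $\widehat{\theta}$ appearing in $\widehat{\mathrm{MSE}}$ is treated as fixed, i.e.\ not varying with $g$.

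\textbf{Step 1: reduction to $\mathrm{span}\{K(\cdot,\x_i):i\in A_1\}$.} The key observation is that $\widehat{\mathrm{MSE}}(g)$ depends on $g$ only through the evaluations $(g(\x_j))_{j\in A_1}$, since $\widehat{\mu}_i$, $\widehat{\sigma}_i^2$ are functions of these values alone and $\widehat{p}_{i|L}$ does not involve $g$. Let $V:=\mathrm{span}\{K(\cdot,\x_i):i\in A_1\}\subseteq\mathcal H_K$ and, for $g\in\mathcal H_K$, write $g=g_V+g_\perp$ with $g_V$ the orthogonal projection of $g$ onto $V$. By the reproducing property, for every $j\in A_1$, $g(\x_j)=\langle g,K(\cdot,\x_j)\rangle_{\mathcal H_K}=\langle g_V,K(\cdot,\x_j)\rangle_{\mathcal H_K}=g_V(\x_j)$, because $K(\cdot,\x_j)\in V$. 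Hence $\widehat{\mathrm{MSE}}(g)=\widehat{\mathrm{MSE}}(g_V)$, so it suffices to minimize over $V$; every element of $V$ has the form $g=\sum_{i\in A_1}w_iK(\cdot,\x_i)$ with $\vec{w}\in\Re^{n_L}$, and conversely. Note that, because eq.\ \ref{eq:rkhs} has \emph{no} norm penalty, this reduction is not the classical representer theorem but rather the invariance of the objective under projection onto $V$.

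\textbf{Step 2: rewriting the objective and concluding.} Fix $g=\sum_{i\in A_1}w_iK(\cdot,\x_i)$, so that $g(\x_k)=(\mathbb K\vec{w})_k$ for $k\in A_1$. Using the symmetry of $K$ and the definitions of $m_i$ and $\widehat{\Sigma}_i$, a direct computation gives $\widehat{\mu}_i=\vec{w}^t m_i$, hence $(\widehat{\mu}_1-\widehat{\mu}_0)^2=\vec{w}^t(m_1-m_0)(m_1-m_0)^t\vec{w}=\vec{w}^tM\vec{w}$; and, expanding $\widehat{\sigma}_i^2=n_i^{-1}\sum_{k\in A_{1,i}}(g(\x_k)-\widehat{\mu}_i)^2$, interchanging summations and matching the inner average over $A_{1,i}$ with the sample–covariance entries, $\widehat{\sigma}_i^2=\vec{w}^t\widehat{\Sigma}_i\vec{w}$. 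Substituting into Definition \ref{def:empirical-mse} yields
\[
\widehat{\mathrm{MSE}}(g)=\frac{1}{n_L}\cdot\frac{\vec{w}^t N\vec{w}}{\vec{w}^t M\vec{w}},\qquad N=\frac{\widehat{\theta}^2}{\widehat{p}_{1|L}}\widehat{\Sigma}_1+\frac{(1-\widehat{\theta})^2}{\widehat{p}_{0|L}}\widehat{\Sigma}_0 .
\]
Since $n_L$ is a positive constant, minimizing $\widehat{\mathrm{MSE}}(g)$ over $g\in\mathcal H_K$ is equivalent to minimizing $\vec{w}^tN\vec{w}/\vec{w}^tM\vec{w}$ over $\vec{w}\in\Re^{n_L}$, which is exactly $\vec{w}^*$; the corresponding $g^*(\x)=\sum_{i\in A_1}w_i^*K(\x,\x_i)$, as claimed. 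The quotient is well defined on $\{\vec{w}:\vec{w}^tM\vec{w}>0\}$, i.e.\ $\vec{w}$ not orthogonal to $m_1-m_0$ (a nonempty open set whenever $m_1\neq m_0$, which holds under Assumption \ref{assump::separability}); since $M$ and $N$ are positive semidefinite with $M$ of rank one, the infimum is attained — after normalizing $(m_1-m_0)^t\vec{w}=1$ it is a convex quadratic minimization under one linear constraint.

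\textbf{Main obstacle.} The only genuine computation is the identity $\widehat{\sigma}_i^2=\vec{w}^t\widehat{\Sigma}_i\vec{w}$ in Step 2: one must expand the square, swap the order of the sums over $A_{1,i}$ and over $A_1$, and invoke $K(\x_j,\x_k)=K(\x_k,\x_j)$ to see that the residual double sum reproduces precisely the entries of $\widehat{\Sigma}_i$ (up to whatever normalization convention—$n_i$ versus $n_i-1$—is adopted for the sample covariance, which only rescales $N$ by a harmless constant). The rest is bookkeeping. A secondary point worth stating explicitly is the justification in Step 1 for restricting to $V$, since the absence of a regularizer means the minimizer of eq.\ \ref{eq:rkhs} need not be unique; what is true, and all that is needed, is that \emph{some} minimizer lies in $V$ and has the stated form.
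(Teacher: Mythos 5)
Your proposal is correct and its computational core (Step 2) coincides with the paper's proof: both establish $\widehat{\mu}_i=\vec{w}^t m_i$ and $\widehat{\sigma}_i^2=\vec{w}^t\widehat{\Sigma}_i\vec{w}$ and read off the generalized Rayleigh quotient, whose minimizer is $\vec{w}^*$ by definition. The one place where you genuinely diverge is Step 1, and there your argument is the more careful one: the paper justifies restricting to $g(\x)=\sum_{k\in A_1}w_k K(\x,\x_k)$ by asserting that the Representer Theorem gives this form ``for every $g\in\mathcal{H}_K$,'' which is not literally true of arbitrary elements of the RKHS, and the classical Representer Theorem presupposes a strictly increasing penalty in $\|g\|_{\mathcal{H}_K}$, which eq.~\ref{eq:rkhs} lacks. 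Your observation that $\widehat{\mbox{MSE}}(g)$ depends on $g$ only through the evaluations $(g(\x_j))_{j\in A_1}$, which are preserved by orthogonal projection onto $V=\mathrm{span}\{K(\cdot,\x_j):j\in A_1\}$, is the right way to conclude that \emph{some} minimizer lies in $V$ and has the stated form; your explicit caveat about non-uniqueness is also apt. Two further minor points in your favor: you retain the factor $n_L^{-1}$ from Definition~\ref{def:empirical-mse} (the paper's final display silently drops it, harmlessly, since it does not affect the argmin), and you address well-definedness and attainment of the infimum of the quotient, which the paper does not discuss.
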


The vector, $\vec{w}^*$ in \autoref{thm:rkhs} is
the eigenvector associated to
the largest eigenvalue in absolute value, $\lambda^*$, 
of the generalized eigenvalue problem,
$M \vec{w}^* =\lambda^* N \vec{w}^*$.
If $N$ is invertible, 
$\vec{w}^*$ is the eigenvector associated to
the largest eigenvalue in 
absolute value of $N^{-1}M$.
Alternatively, if $N$ is not invertible
one can substitute $N$ in 
\autoref{def:empirical-mse} by
$(N+\gamma \II)^{-1}$, where 
$\II$ is the identity matrix and 
$\gamma$ is a small number that 
makes $N+\gamma \II$ invertible.
Adding $\gamma$ to the diagonal also
also adds regularization and can
therefore lead to an improved solution.
In practice we choose $\gamma$ via data-splitting.

The results in this and in the previous section 
rely on the weak prior shift assumption.
As shown in the next subsection,
one of the advantages of this assumption to
the regular prior shift is that 
it is easier to test.

\subsection{Testing the weak prior shift assumption}
\label{sec:hypothesis}

The following proposition is useful for
testing the weak prior shift assumption:

\begin{proposition}
 \label{lemma:priorShiftConsequence}
 Under Assumption \ref{assump::cool1},
 there exists $0\leq p\leq  1\mbox{ such that }$ 
 \begin{align*}
  pF_{g(\X)|S=1,Y=1} + (1-p)F_{g(\X)|S=1,Y=0} 
  &= F_{g(\X)|S=0}.
 \end{align*}
\end{proposition}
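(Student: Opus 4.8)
The plan is to exploit the fact that weak prior shift forces the law of $g(\X)$ given the label $Y$ to coincide in the target ($S=0$) and training ($S=1$) populations, and then to write $F_{g(\X)\mid S=0}$ as the corresponding two-component mixture, reading off $p=\theta:=\P(Y=1\mid S=0)$.

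First I would reduce the vectorized statement in Assumption \ref{assump::cool1} to a single-instance statement: for each $i$, $g(\X_i)$ is conditionally independent of $S_i$ given $Y_i$. Applying the conditional independence of $g(\X)_{1}^{n}$ and $\S_{1}^{n}$ given $\Y_{1}^{n}=y_{1}^{n}$ to the events $\{g(\X_i)\in R\}$ and $\{S_i=s\}$ yields
\begin{align*}
 \P(g(\X_i)\in R, S_i=s\mid \Y_{1}^{n}=y_{1}^{n})
 &= \P(g(\X_i)\in R\mid \Y_{1}^{n}=y_{1}^{n})\,\P(S_i=s\mid \Y_{1}^{n}=y_{1}^{n}),
\end{align*}
and by the cross-instance independence of the triples $(S_j,\X_j,Y_j)$ guaranteed by Assumption \ref{assump::iidRelaxed}, each conditional probability above depends on $y_{1}^{n}$ only through $y_i$. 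Hence $g(\X_i)$ is independent of $S_i$ given $Y_i$; in particular, for each $y\in\{0,1\}$ the conditional distribution $F_{g(\X_i)\mid Y_i=y,S_i=s}$ does not depend on $s$, and I will write it as $F_{g(\X)\mid Y=y}$.

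Next I would apply the law of total probability to the conditional measure $\P(\,\cdot\mid S_i=0)$, partitioning on $\{Y_i=y\}$: for every $t$,
\begin{align*}
 F_{g(\X)\mid S=0}(t)
 &= \sum_{y\in\{0,1\}} \P(g(\X_i)\leq t\mid Y_i=y, S_i=0)\,\P(Y_i=y\mid S_i=0) \\
 &= \theta\,F_{g(\X)\mid Y=1}(t) + (1-\theta)\,F_{g(\X)\mid Y=0}(t),
\end{align*}
with $\theta=\P(Y=1\mid S=0)$. Since $F_{g(\X)\mid Y=y}=F_{g(\X)\mid S=1,Y=y}$ by the previous step, taking $p=\theta\in[0,1]$ gives exactly
\[
 p\,F_{g(\X)\mid S=1,Y=1}+(1-p)\,F_{g(\X)\mid S=1,Y=0}=F_{g(\X)\mid S=0},
\]
which is the claim.

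The substantive content of the proof is just this observation that weak prior shift makes $F_{g(\X)\mid S=0}$ a mixture of the two label-conditional training distributions; the only points requiring care are bookkeeping ones. One must assume the referenced conditional distributions are well-defined, i.e.\ $\P(S=0)>0$ and $\P(Y=j\mid S=1)>0$ for $j\in\{0,1\}$; in the degenerate cases one takes $p\in\{0,1\}$ so that the undefined component drops out. One must also note that the reduction from the vector conditional independence to the single-instance version genuinely uses the cross-instance independence of Assumption \ref{assump::iidRelaxed} rather than Assumption \ref{assump::cool1} alone. Neither is a real obstacle, so the main ``difficulty'' is simply identifying the correct mixing weight, which the law of total probability hands us.
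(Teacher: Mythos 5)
Your proof is correct and follows essentially the same route as the paper's: decompose $F_{g(\X)|S=0}$ by the law of total probability over $Y$ with mixing weight $\theta=\P(Y=1|S=0)$, then use the weak prior shift assumption to replace the $S=0$ label-conditional distributions by their $S=1$ counterparts and take $p=\theta$. The extra bookkeeping you supply (reducing the vectorized conditional independence to the single-instance version via Assumption \ref{assump::iidRelaxed}, and handling degenerate conditioning events) is left implicit in the paper but does not change the argument.
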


It follows from Proposition
\ref{lemma:priorShiftConsequence} that
Assumption \ref{assump::cool1} entails
the hypothesis:
\begin{align*}
 H_0:\exists 0\leq p\leq  1\mbox{ such that } pF_{g(\X)|S=1,Y=1} + (1-p)F_{g(\X)|S=1,Y=0} = F_{g(\X)|S=0}
\end{align*}

In the following, we show how to
construct an hypothesis test for $H_0$ when
$g(\x)$ is continuous.
Since the weak prior shift entails $H_0$,
if this test is used to test the weak prior shift,
it will have the correct type I error.
However, $H_0$ may hold when 
Assumption \ref{assump::cool1} is false.
 A specific example in which $H_0$ is satisfied but
 Assumption \ref{assump::cool1} is not satisfied is
 given in the following example.
 \begin{example}
  \label{ex:minima}
  If $F_{g(\X)|S=0,Y=1}=F_{g(\X)|S=0,Y=0}$
and $F_{g(\X)|S=0,Y=1}=F_{g(\X)|S=1,Y=1}$,
then
 there exists $p$ such that $pF_{g(\X)|S=1,Y=1} + (1-p)F_{g(\X)|S=1,Y=0} = F_{g(\X)|S=0}$ (namely, $p=1$)
 even if $F_{g(\X)|S=1,Y=0} \neq F_{g(\X)|S=0,Y=0}$.
 In this case, Assumption \ref{assump::cool1} does not
 hold and $H_0$ is satisfied.
 \end{example}

The following statistic, $T$, measures
disagreement with $H_0$:
\begin{align*}
 T &= \inf_{0\leq p\leq 1}
 d\left(p\widehat{F}_{g(\X)|S=1,Y=1}+ 
 (1-p)\widehat{F}_{g(\X)|S=1,Y=0},
 \widehat{F}_{g(\X)|S=0}\right),
\end{align*}
where $d$ is a distance between
cumulative distributions, such as
the Kolmogorov distance, and
$\widehat{F}$ are 
the empirical cumulative distributions \citep{Wasserman2013}:
\begin{align*}
 \widehat{F}_{g(\X)|S=1,Y=i}(w)
 &=\frac{1}{|A_{1,i}|}
 \sum_{i \in A_{1,i}}\I(g(\X_i) \leq w),
 & \widehat{F}_{g(\X)|S=0}(w)
 &=\frac{1}{|A_{0}|}
 \sum_{i \in A_{0}}\I(g(\X_i) \leq w).
\end{align*}
Algorithm \ref{alg:MC}, which is presented below,
obtains a p-value for $H_0$ based on $T$.
The algorithm uses kernel smoothers
\citep{Wasserman2013} to estimate 
the conditional densities of $g(\X)$ given $Y$,
$f(g(\x)|Y=0)$ and $f(g(\x)|Y=1)$,
by $\widehat{f}(g(\x)|Y=0)$ and 
$\widehat{f}(g(\x)|Y=1)$.

\begin{algorithm}
 \caption{\small p-value for testing the 
 weak prior shift assumption}
 \label{alg:MC}
 \algorithmicrequire \ {\small Labeled and unlabeled sample, number of Monte Carlo simulations, $B$} 
 \algorithmicensure \ {\small p-value}
 \begin{algorithmic}[1]
  \State  Compute $T_{\mbox{obs}}$,
  the test statistic for the observed data
  \State Compute $\widehat{\theta}$, an
  estimate of $\theta$
  \State Compute $\widehat{f}(g(\x)|Y=1)$ and 
  $\widehat{f}(g(\x)|Y=0)$.
  \ForAll{$i \in \{1,\ldots,B\}$}
   \State Sample  $W^{(0)}_1,\ldots,W^{(0)}_{n_0} 
   \sim \widehat{f}(g(\x)|Y=0)$ and 
   $W^{(1)}_1,\ldots,W^{(1)}_{n_1} \sim 
   \widehat{f}(g(\x)|Y=1)$
   \State Sample $W^{(U)}_1,\ldots,W^{(U)}_{n_U} \sim
   \widehat{\theta}\widehat{f}(g(\x)|Y=1)+
   (1-\widehat{\theta})\widehat{f}(g(\x)|Y=0)$
   \State Compute $T_i$, the test statistic based on
   the labeled and unlabeled samples:
   \begin{align*}
    \{(W_i^{(0)},Y_i=0)\}_{i=1}^{n_0} 
    \cup \{(W_i^{(1)},Y_i=1)\}_{i=1}^{n_1};
    && \{W_1^{(U)},\ldots, W_{n_u}^{(U)}\}
   \end{align*}
  \EndFor
  \State \textbf{return} 
  $\frac{1}{B}\sum_{i=1}^B \I\left(T_{\mbox{obs}}\geq T_i\right) $
 \end{algorithmic}
\end{algorithm}

Note that our test is different from
those proposed by \citet{Saerens2002} and \citet{Lipton2018}.
While our test evaluates whether the prior shift assumption is
reasonable for the observed data,
the above tests \emph{assume prior shift} and 
evaluate whether the prevalence of 
positive labels in the unlabeled sample is
the same as that in the labeled sample, that is,
$\P(Y=1|S=0) = \P(Y=1|S=1)$.

The following subsection performs 
several experiments to test 
the performance of Algorithm \ref{alg:MC} and
of the ratio estimators which were discussed in
previous subsections.

\subsection{Experiments}
\label{sec:expRatio}

Next, we compare the errors of 
the ratio estimator and of
the classify and count estimator based on
the estimator $g$ when
using various methods for obtaining $g$. We also include comparisons with the EM methods by \citet{Saerens2002}.
Table \ref{tab:methods} summarizes all 
the variants that were tested.

\begin{table}[!h]
 \centering
 \resizebox{\textwidth}{!}{
 \begin{tabular}{|l|l|l|}
  \hline 
  Estimator class
  & Specific method 
  & Criteria for choosing $g(x)$ \\
  \hline 
  Classify and count &
  & Logistic regression (LR), 
  $k$-NN, random forest (RF). \\ 
  \hline
  \multirow{2}{*}{Ratio} & \citet{Forman2006} 
  & Logistic regression (LR), $k$-NN, 
  random forest (RF). \\
  & \citet{Bella2010} & 
  Logistic regression (LR), $k$-NN, 
  random forest (RF). \\
  & RKHS & 
  Linear kernel (Linear), Gaussian kernel (Gauss).  \\
  \hline
    & EM \citep{Saerens2002} & 
  Logistic regression (LR), $k$-NN, 
  random forest (RF).  \\
  \hline 
 \end{tabular}}
 \caption{Methods compared in the experiments.}
 \label{tab:methods}
\end{table}

We compare the above methods in five data sets: 
Candles \citep{Freeman2013,Izbicki2013}, 
Bank Marketing \citep{Moro2011}, 
SPAM e-mail \citep{Blake1998}, 
Wisconsin Breast Cancer \citep{Mangasarian1990} and 
Blocks Classification \citep{Donato1996}.
Each database was transformed into 
a prior shift problem by choosing at random
$n_1$ ($n_0$) instances among the ones labeled as $1$ ($0$)
to be marked as labeled instances and by 
choosing $n_U$ instances to be marked as unlabeled. 
Each unlabeled unit is taken
with probability $\theta$ randomly among
the instances labeled as $1$ in the original data set and
with probability $1-\theta$ among those labeled as $0$. 
The quantification sample sizes used in
each of these data sets are described in
Table \ref{tab:expinf}.
For all data sets we let $\theta$ vary in 
$\{0.1;0.2;0.3;0.4;0.5\}$ and repeated
the generation and testing $100$ times for
each of the 11 methods in Table \ref{tab:methods}.

\begin{table}[h]
 \centering
 \begin{tabular}{rlrrrr}
  \hline
  & data set & $n_U$ & $n_L$ & $n_1$ & $n_0$ \\ 
  \hline
  1 & cancer & 100 & 300 & 150 & 150 \\ 
  2 & candles & 300 & 300 & 150 & 150 \\ 
  3 & block & 800 & 300 & 150 & 150 \\ 
  4 & spam & 2000 & 300 & 150 & 150 \\ 
  5 & bank & 10000 & 300 & 150 & 150 \\ 
  \hline
 \end{tabular}
 \caption{Sample sizes for each data set.}
 \label{tab:expinf}
\end{table}

Figure \ref{fig::error} represents
the average of the mean squared error 
(MSE; red point) and
a confidence interval for the MSE
(vertical blue bar)
for each setting and method\footnote{For the sake of visualization, we omit all estimators based on $K$-NN on this plot. Figure  \ref{fig::error_all} in Appendix \ref{ap:figures} contains all methods.}. 
Figure \ref{fig::count} shows the number of experiments in which each method had the best average MSE, considering all data sets and $\theta$ values.
These plots indicate that 
the ratio estimator generally performs better
than the classify and count estimator
for all choices of $g$.
The main exception to this rule occurs when
$\theta \approx 0.5$ and hence there is no prior shift.
Also, the method in \cite{Bella2010} performs 
better than the one in \cite{Forman2006} in 
essentially all scenarios. This suggests that
soft classifiers might lead to 
better ratio estimators than hard classifiers. Moreover, 
the best performance is usually achieve when  $g$  is based on Random Forest,
which corroborates that choosing  a good classifier is key to having a good estimate of $\theta$.
The EM method was found to be very competitive in general, although
the ratio estimators had better performance in some cases (e.g., for the bank data set).
Finally, the RKHS approach is a competitive method,
especially when using the Gaussian kernel.
For additional figures related to this experiment see Appendix
\ref{ap:figures}.

\begin{figure}
 \centering
 \makebox[\textwidth]{
  \includegraphics[page=1,scale=0.5]
  {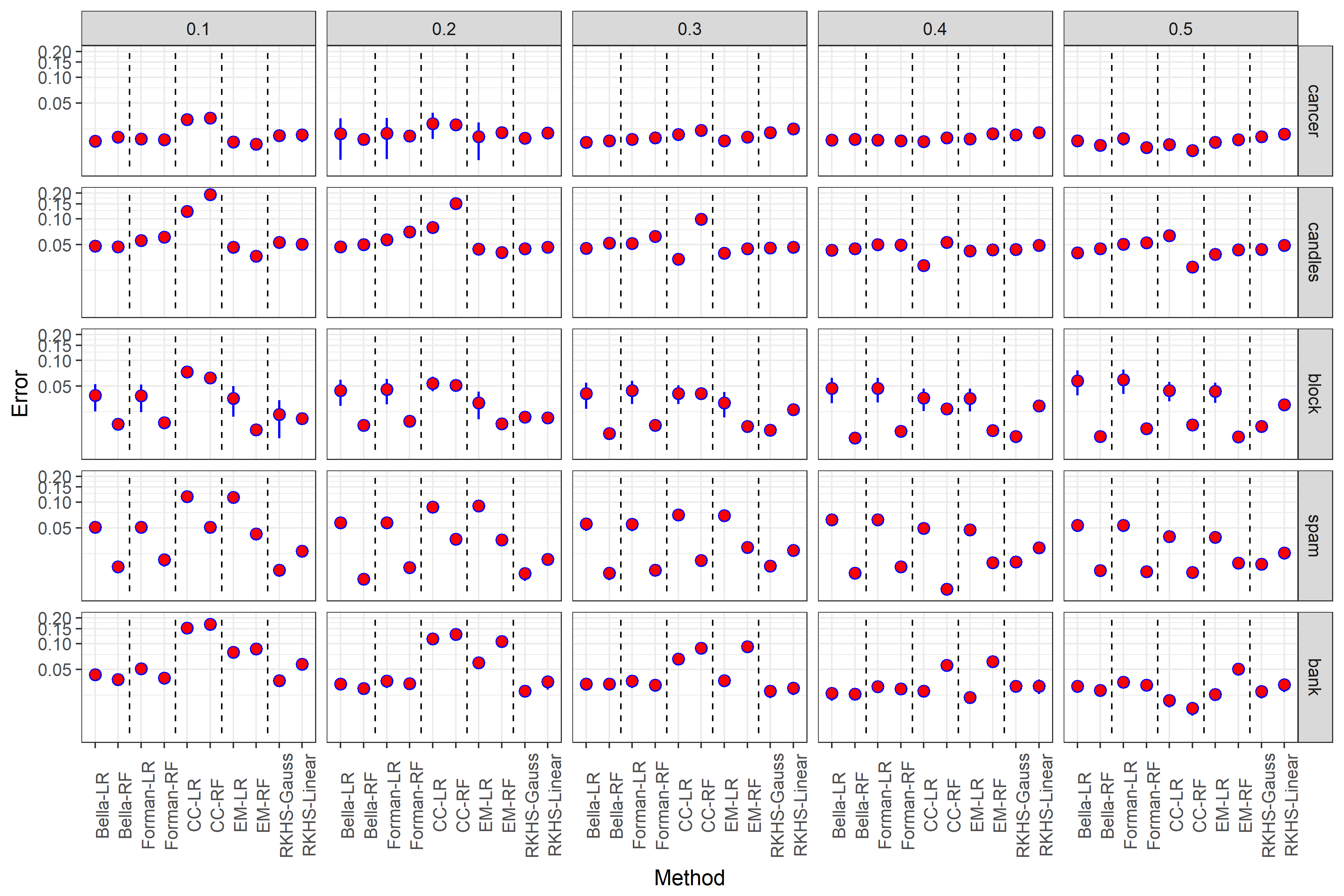}
 }
 
 \caption{Root mean square deviation of 
 each method by setting in
 logarithmic scale.}
 \label{fig::error}
\end{figure}

\begin{figure}
 \centering
 \makebox[\textwidth]{
  \includegraphics[page=1,scale=0.45]
  {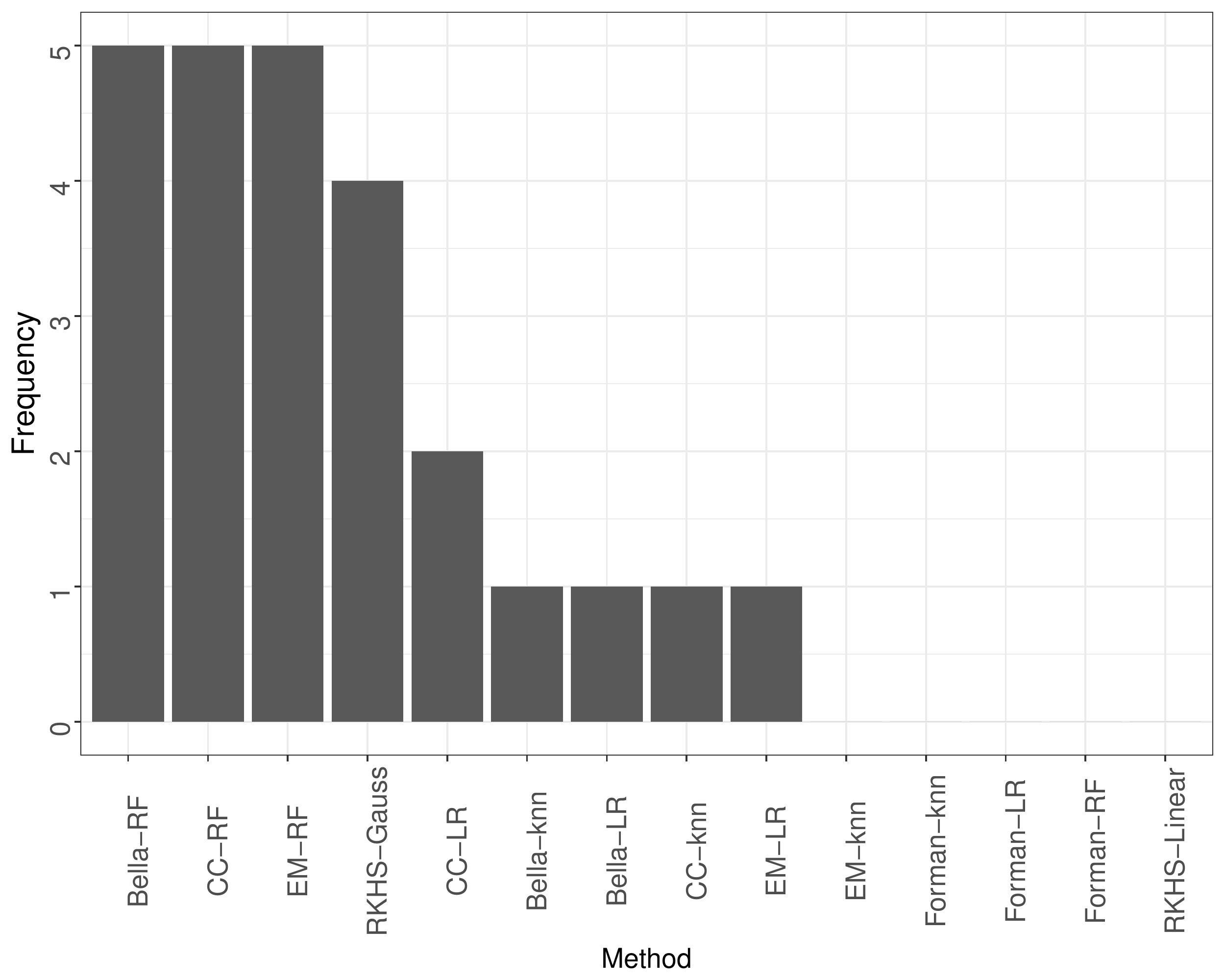}
 }
 \caption{Number of times in which each method presented best MSE.}
 \label{fig::count}
\end{figure}

\begin{figure}
 \centering
 \makebox[\textwidth]{
  \includegraphics[page=1,scale=0.51]
  {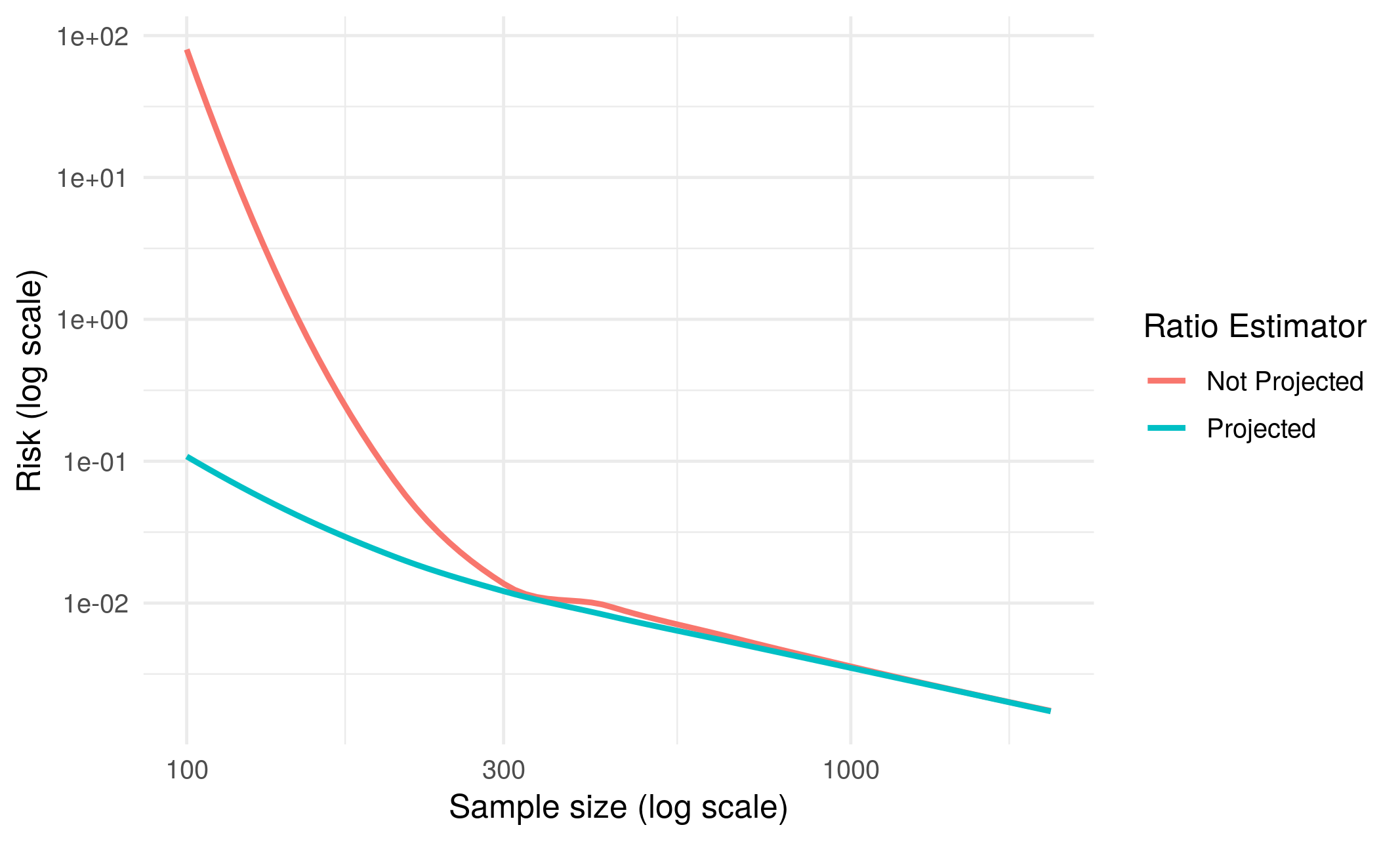}
 }
 \caption{Mean squared error of the ratio estimator for the multiclass problem.}
 \label{fig::multiclass}
\end{figure}

For all of ratio estimators, data sets and
values for $\theta$ above,
we construct confidence intervals for
$\theta$ based on \autoref{thm:ratio-clt}.
We find that in all but one scenario the
empirical coverage was at least as high as
the specified value of 95\%.
The empirical coverage in the exception was 94\%.
The intervals constructed using
\autoref{thm:ratio-clt} seem to be conservative.

Next, we simulate data using the following multiclass setting:
$\X|Y=y \sim N(\mu_y,\Sigma)$ with $\Sigma=I_{10}$, $\mu_1=(0,\ldots,0)$,
$\mu_2=(0.75,\ldots,0.75)$,
$\mu_3=(1.25,\ldots,1.25)$,
$\P(Y=1|S=1)=0.2$, $\P(Y=2|S=1)=0.3$
$\P(Y=1|S=0)=0.25$, and $\P(Y=2|S=0)=0.10$.
We use a multivariate logistic regression to compute $g_1(\x)=\widehat{P}(Y=1|\x,S=1)$
and $g_2(\x)=\widehat{P}(Y=2|\x,S=1)$.
\autoref{fig::multiclass} indicates that the  mean squared error of the multiclass ratio
estimator goes to zero as the  sample size increases. Moreover, it shows that projecting the raw estimator to the simplex improves the convergence, especially for small sample sizes.

We also evaluate the
power of the weak prior shift test in
Section \ref{sec:hypothesis}.
In order to test the weak prior shift,
we generate data according to 4 scenarios,
which are presented in table
\ref{tab::simulationPowerPrioShift}.
In all of these scenarios,
the weak prior shift assumption holds for
a single value of $\gamma$.
Figure \ref{fig::power} presents the power of
the weak prior shift test in each scenario
using a level of significance of $\alpha=5\%$.
Besides the test achieving the level of
$5\%$ when weak prior shift holds,
it also has a high power whenever the
marginal distribution of $g(\X)$ differs over
the labeled and over the unlabeled data.
The reason why such test presents minima when the weak priori shift assumption does not hold
is described in Example \ref{ex:minima}.

\begin{table}
 \centering
 \label{tab::simulationPowerPrioShift}
 \begin{tabular}{|l|l|}
  \hline
  $\begin{array}{l} \mbox{\textbf{Gaussian}} \\
   g(\x)|S=1,Y=0 \sim \mbox{N}(0,1),\\
   g(\x)|S=1,Y=1 \sim \mbox{N}(2,1), \\
   g(\x)|S=0,Y=0 \sim \mbox{N}(\gamma,1) \\
   g(\x)|S=0,Y=1 \sim \mbox{N}(2,1), \\
   \P(Y=1|S=0) = 0.6, \\
   \P(Y=1|S=1) = 0.2
  \end{array}$ &
  $\begin{array}{l}
   \mbox{\textbf{Exponential}} \\
   g(\x)|S=1,Y=0 \sim \mbox{Exp}(1),\\
   g(\x)|S=1,Y=1 \sim \mbox{Exp}(5), \\
   g(\x)|S=0,Y=0 \sim \mbox{Exp}(\gamma) \\
   g(\x)|S=0,Y=1 \sim \mbox{Exp}(5), \\
   \P(Y=1|S=0) = 0.6, \\
   \P(Y=1|S=1) = 0.2
  \end{array}$ \\ \hline
  $\begin{array}{l}
   \mbox{\textbf{Gaussian-Exponential}} \\
   g(\x)|S=1,Y=0 \sim \mbox{N}(1,1),\\
   g(\x)|S=1,Y=1 \sim \mbox{Exp}(1), \\
   g(\x)|S=0,Y=0 \sim \mbox{N}(\gamma,1) \\
   g(\x)|S=0,Y=1 \sim \mbox{Exp}(1), \\
   \P(Y=1|S=0) = 0.6, \\
   \P(Y=1|S=1) = 0.2
  \end{array}$ &   
  $\begin{array}{l}
   \mbox{\textbf{Beta}} \\
   g(\x)|S=1,Y=0 \sim \mbox{Beta}(1,1),\\
   g(\x)|S=1,Y=1 \sim \mbox{Beta}(1,10), \\
   g(\x)|S=0,Y=0 \sim \mbox{Beta}(\gamma,1) \\
   g(\x)|S=0,Y=1 \sim \mbox{Beta}(1,10), \\
   \P(Y=1|S=0) = 0.6, \\
   \P(Y=1|S=1) = 0.2
  \end{array}$ \\
  \hline
 \end{tabular}
 \caption{Scenarios used for testing
 the weak prior shift assumption.}
\end{table}

\begin{figure}
 \centering
 \includegraphics[page=1,scale=0.4]
 {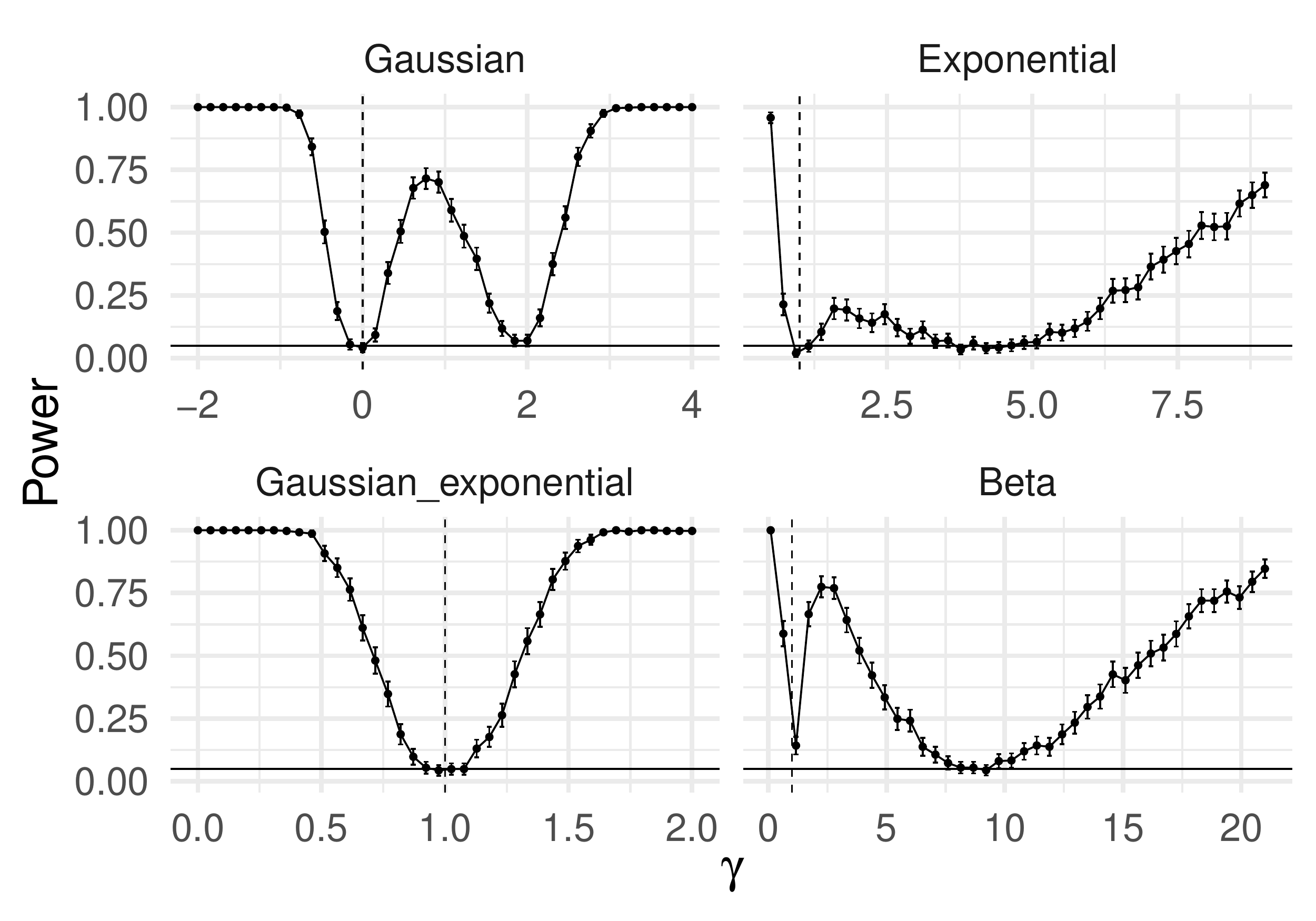}
 \caption{Power of the weak prior shift test at
 level $\alpha=5\%$. The dashed vertical lines indicate 
 the value of $\gamma$ for which 
 the weak prior shift holds.}
 \label{fig::power}
\end{figure}

The following section discusses extensions of
the ratio estimator to scenarios that are
more general than the standard quantification problem.

\section{Extensions of the quantification problem}
\label{sec:extensions}

\subsection{Combined estimator}
\label{sec:combined}
 
Sometimes, a few labels are available in
the target population $(S=0)$.
Let $A^{*}_0 \subset A_0$ denote
the indices of these labeled sample instances.
In this scenario, it is possible to 
obtain an estimate of $\theta$ that 
combines the ratio estimator with
the additional labels which are available.
The labeled estimator of $\theta$ is
defined as:
\begin{align*}
 \widehat{\theta}_{L} 
 &:= \dfrac{1}{|A^{*}_0|} \sum_{i \in A^{*}_0} Y_i
\end{align*}

It is possible to better estimate $\theta$ by
combining the labeled estimator, 
$\widehat{\theta}_L$, with the ratio estimator,
$\widehat{\theta}_R$.
We propose to combine these estimators 
by means of a convex combination,
\begin{align}
\label{eq::combined}
 \widehat{\theta}_C
 &= w\widehat{\theta}_R + (1-w)\widehat{\theta}_L,
\end{align}
which we name the \emph{combined} estimator.
The following theorem provides
an insight on how to choose
$w$.
\begin{theorem}
\label{thm:combined}
Under Assumptions
 \ref{assump::iidRelaxed}, 
 \ref{assump::cool1},
 \ref{assump::separability} and 
 \ref{assump::ratio-clt},
 the value of $w$ that minimizes the mean squared error of the combined estimator (\autoref{eq::combined}) is 
$$w^* = MSE[\widehat{\theta}_L]\times (MSE[\widehat{\theta}_L] + MSE[\widehat{\theta}_R])^{-1}.$$
\end{theorem}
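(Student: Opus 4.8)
The plan is to treat $w$ as a deterministic tuning parameter and directly differentiate the mean squared error of $\widehat{\theta}_C = w\widehat{\theta}_R + (1-w)\widehat{\theta}_L$ with respect to $w$, using the key structural fact that $\widehat{\theta}_R$ and $\widehat{\theta}_L$ are (asymptotically, or under the relevant independence) uncorrelated. First I would observe that under Assumptions \ref{assump::iidRelaxed} and \ref{assump::cool1}, the labeled estimator $\widehat{\theta}_L$ depends only on the labels $\{Y_i\}_{i \in A^*_0}$ drawn from the target population, whereas $\widehat{\theta}_R$ is a function of $g(\X)$ over the three groups of instances; conditionally on the label vector the relevant features are independent of the subsampling indicators, and $\widehat{\theta}_L$ is an average of a disjoint set of observations from $\widehat{\theta}_R$'s ingredients. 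Hence $\mathrm{Cov}(\widehat{\theta}_R,\widehat{\theta}_L) = 0$ (or $o$ of the leading terms, in which case the statement is understood in the asymptotic sense consistent with \autoref{thm:ratio-clt}). I would also note that both estimators are (asymptotically) unbiased for $\theta$ by \autoref{thm:fisher-ratio} and by construction of $\widehat{\theta}_L$, so that in the bias-variance decomposition the MSE is governed by the variances, and $\mathrm{MSE}[\widehat{\theta}_R] \approx \V[\widehat{\theta}_R]$, $\mathrm{MSE}[\widehat{\theta}_L] \approx \V[\widehat{\theta}_L]$.

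Given these facts, the computation is routine: writing $a = \mathrm{MSE}[\widehat{\theta}_R]$ and $b = \mathrm{MSE}[\widehat{\theta}_L]$, we have
\begin{align*}
 \mathrm{MSE}[\widehat{\theta}_C]
 &= \E\left[(w(\widehat{\theta}_R - \theta) + (1-w)(\widehat{\theta}_L - \theta))^2\right]
 = w^2 a + (1-w)^2 b,
\end{align*}
where the cross term vanishes because $\E[(\widehat{\theta}_R-\theta)(\widehat{\theta}_L-\theta)] = \mathrm{Cov}(\widehat{\theta}_R,\widehat{\theta}_L) + (\text{product of biases}) = 0$. This is a strictly convex quadratic in $w$ (assuming $a,b>0$), so setting the derivative $2wa - 2(1-w)b$ to zero gives $w^* = b/(a+b) = \mathrm{MSE}[\widehat{\theta}_L](\mathrm{MSE}[\widehat{\theta}_L] + \mathrm{MSE}[\widehat{\theta}_R])^{-1}$, which is the claimed formula; the second-order condition $2(a+b) > 0$ confirms it is the minimizer.

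The main obstacle is justifying the vanishing of the cross term with full rigor, i.e., establishing that $\widehat{\theta}_R$ and $\widehat{\theta}_L$ are genuinely uncorrelated (or that their correlation is negligible at the relevant order). The cleanest route is to condition on the full label vector $\Y_1^n$: under Assumption \ref{assump::cool1}, $\widehat{\theta}_L$ is a deterministic function of those labels, while the randomness in $\widehat{\theta}_R$ given the labels comes from the features $g(\X_i)$, which are independent across $i$; one then argues that the iterated expectation of the cross term reduces to a covariance between $\widehat{\theta}_L$ and a conditionally-centered version of $\widehat{\theta}_R$, which is zero up to lower-order terms coming from the ratio/trimming nonlinearity. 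Invoking \autoref{thm:ratio-clt} to linearize $\widehat{\theta}_R$ makes this precise in the asymptotic regime, and that is the level at which the theorem is naturally stated, matching the asymptotic MSE expressions in Corollary \ref{cor::MSE}. Once the zero-covariance claim is in hand, the remainder is the elementary optimization above.
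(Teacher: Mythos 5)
Your proposal is correct and follows essentially the same route as the paper: expand $\mathrm{MSE}[\widehat{\theta}_C]$ as a quadratic in $w$, kill the cross term $\E[(\widehat{\theta}_R-\theta)(\widehat{\theta}_L-\theta)]$ using independence of the two estimators together with the exact unbiasedness $\E[\widehat{\theta}_L]=\theta$, and minimize $w^2\,\mathrm{MSE}[\widehat{\theta}_R]+(1-w)^2\,\mathrm{MSE}[\widehat{\theta}_L]$. The only difference is that the paper simply asserts the independence in one line (and needs only $\widehat{\theta}_L$ to be unbiased, not $\widehat{\theta}_R$), whereas you devote a paragraph to justifying the vanishing cross term; that extra care is reasonable but does not change the argument.
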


In practice, $MSE[\widehat{\theta}_L]$ and
$MSE[\widehat{\theta}_R]$ 
need to be estimated.
Note that $MSE[\widehat{\theta}_L] = \theta(1-\theta)\times |A_0^*|^{-1}$ and $MSE[\widehat{\theta}_R]$ is
given by \autoref{thm::EQMPriorShift}.
We therefore
use
$\widehat{w}
=\widehat{MSE}[\widehat{\theta}_L] \times 
(\widehat{MSE}[\widehat{\theta}_L]
+\widehat{MSE}[\widehat{\theta}_R])^{-1}$,
where $\widehat{MSE}[\widehat{\theta}_L]$ and
$\widehat{MSE}[\widehat{\theta}_R]$ are obtained
by substituting the parameters in 
$MSE[\widehat{\theta}_L]$ and 
$MSE[\widehat{\theta}_R]$ by 
their corresponding empirical averages.

\begin{figure}
 \centering
 \makebox[\textwidth]{
  \includegraphics[page=3,scale=0.51]{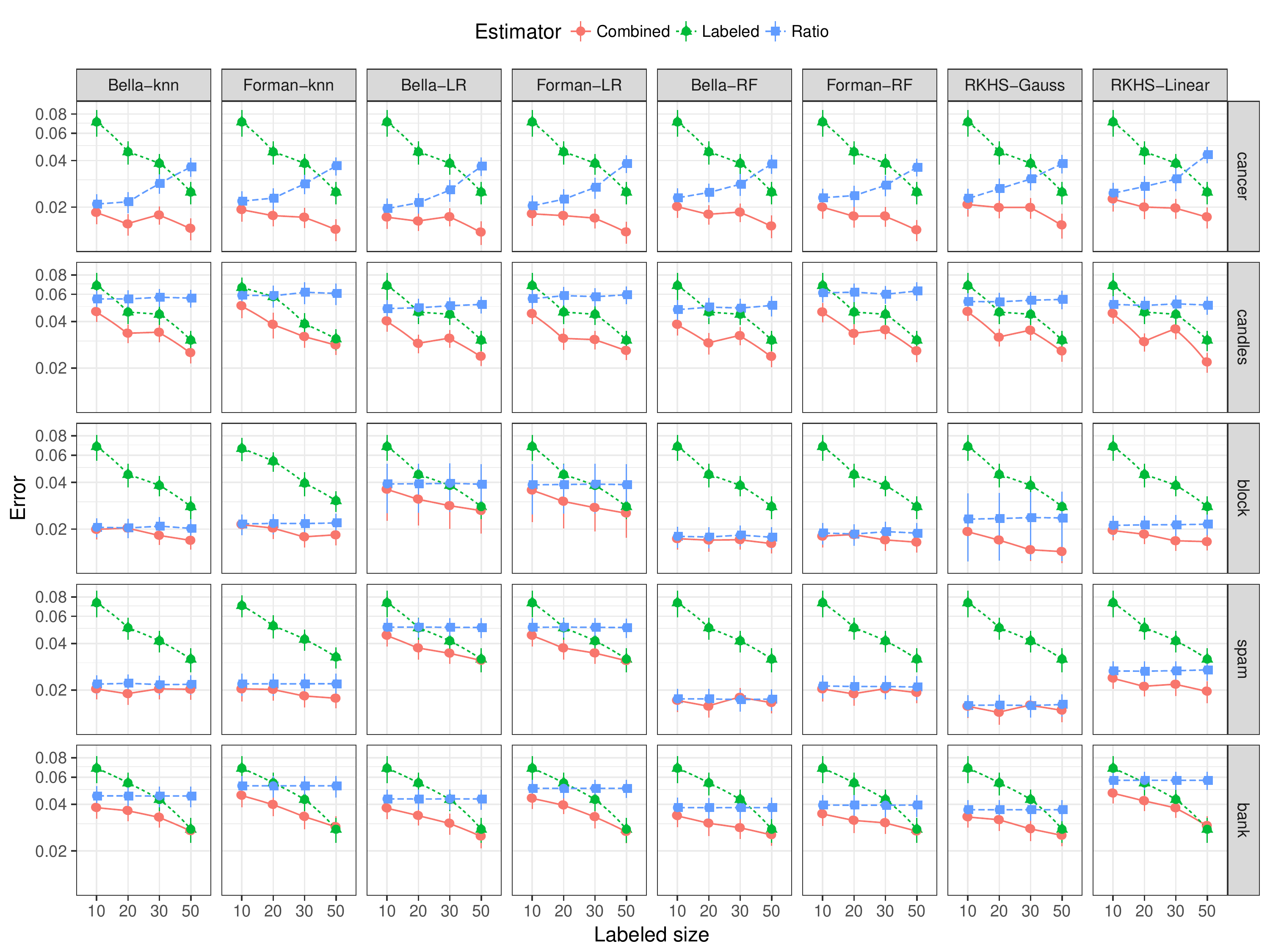}}
 \caption{Root mean square deviation in
 logarithmic scale for
 each data set, method and estimator by
 size of the labeled sample and using $\theta=0.3$}
 \label{fig::comb1}
\end{figure}

We evaluate the combined estimator
under the same scenarios used for 
the ratio estimator in Section \ref{sec:expRatio} and
using $\theta = 0.3$. 
For each scenario, we consider $10$, $20$, $30$,
$40$ or $50$ available labels from
the target population.
Figure \ref{fig::comb1} presents the errors for 
each setting scenario and number of available labels
in the target population.\footnote{Similar plots (with similar conclusions) for $\theta \in \{0.1,0.2,0.4,0.5\}$ can be found in  Figures \ref{fig::comb2}---\ref{fig::comb5} in Appendix \ref{ap:figures}.}
When one of $\widehat{\theta}_L$ and 
$\widehat{\theta}_R$ has an error which is 
much lower than the other, than this lowest error
is comparable to that of the combined estimator.
Also, when $\widehat{\theta}_L$ and
$\widehat{\theta}_R$ have similar errors, then
the error of the combined estimator is
approximately $\sqrt{2}^{-1}$ times 
this common error.
These results indicate that
a few labels from the target population
can improve the estimation of $\theta$.

\subsection{Regression quantification}
\label{sec::regression}

As a generalization of the quantification problem,
one might be interested on how the prevalence of $Y$
in the target population varies 
according to a new set of covariates, $\Z$.
For example, suppose that a company implements
a program of continuous improvement for
one of its products.
In order to measure the effects of the program, 
it is necessary to evaluate how the
proportion of positive reviews for the product
varies over time. This problem fits into
the generalization of the quantification problem
when taking $\Z$ to be the date at which
each review was posted. 
This problem is often called 
sentiment analysis \citep{Wang2012} and
is usually solved using a
classify and count approach, which
suffers from the same downsides as the ones
discussed in the standard quantification problem.
Other approaches can be found in \citet{Hofer2013}.

The ratio estimator can be extended 
to this regression setting.
Let the new sample instances be
$(\X_1,\Z_1,Y_1,S_1),\ldots,
(\X_n,\Z_n,Y_n,S_n)$,
where $\X$, $Y$ and $S$ have
the same interpretation as in
the quantification problem and
$\Z$ is the new covariate of interest.
The goal in the new setting is to estimate
\begin{align*}
 \theta(\z) := \P(Y=1|S=0,\z),
\end{align*}
the proportion of positive
labels in the target population when $\Z=\z$.
In order to estimate $\theta(\z)$, it
is necessary to make additional assumptions
on how $\Z$ relates to the other variables in
the quantification problem.
One such assumption is presented below.

\begin{assumption}
 \label{assump::condCovIndep}
 $g(\X)$ is stochastically independent of $\Z$ 
 conditionally on $Y$ and $S$.
\end{assumption}

In the scenario in which $\X$ are 
written reviews of products and $Z$ is time,
Assumption  \ref{assump::condCovIndep} states that,
if the label of a product is known, then
the time at which the label was given 
does not affect the written review.
This assumption motivates the definition of
the regression ratio estimator.

\begin{definition}
 \label{defn:reg_ratio}
 The untrimmed regression ratio estimator,
 $\widehat{\theta}_{URR}(\z)$, is 
 \begin{align*}
  \widehat{\theta}_{URR}(\z)
  &= \frac{\widehat{\E}[g(\X)|S=0,\z] -
  \widehat{\E}[g(\X)|Y=0,S=1]}
  {\widehat{\E}[g(\X)|Y=1,S=1] -
  \widehat{\E}[g(\X)|Y=0,S=1]},
 \end{align*}
 where $\widehat{\E}[g(\X)|Y=0,S=1]$ and
 $\widehat{\E}[g(\X)|Y=1,S=1]$ are 
 the same empirical averages as in
 Definition \ref{defn:ratio} and
 $\widehat{\E}[g(\X)|S = 0 ,\z]$
 is an estimate of the regression function $\E[g(\X)|S = 0 ,\z]$.
 For instance $\widehat{\E}[g(\X)|S = 0 ,\z]$ could be
 the Nadaraya-Watson regression estimator \citep{Nadaraya1964}
 based on the target population for $g(\X)$ given $\Z$.
 The regression ratio estimator,
 $\widehat{\theta}_{RR}(\z)$, is
 $\max(0,\min(1,\widehat{\theta}_{URR}(\z)))$.
\end{definition}

Next, we derive an upper bound on
the rate of convergence of the regression ratio estimator.

\begin{theorem}
 \label{thm:rr-mse}
 Under Assumptions \ref{assump::iidRelaxed},
 \ref{assump::cool1} and \ref{assump::condCovIndep},
 \begin{align*}
  \E\left[\left(\widehat{\theta}_{RR}(\Z)
  -\theta(\Z)\right)^2\bigg|S_1^n\right]
  &\leq \sO\left(\max\left(
  \E\left[(\widehat{\E}[g(\X)|S=0,\Z]
  -\E[g(\X)|S=0,\Z])^2|S_1^n\right], n_{L}^{-1}
  \right)\right)
 \end{align*}
\end{theorem}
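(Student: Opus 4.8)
The plan is to reduce the error of $\widehat{\theta}_{RR}(\z)$ to three sources: the estimation error of the numerator correction $\widehat{\E}[g(\X)|S=0,\z]$, the estimation errors of the two training-population means $\widehat{\E}[g(\X)|Y=j,S=1]$, and the stabilization incurred by trimming to $[0,1]$. First I would write the untrimmed estimator as a ratio $\widehat{N}(\z)/\widehat{D}$, where $\widehat{N}(\z) = \widehat{\E}[g(\X)|S=0,\z] - \widehat{\E}[g(\X)|Y=0,S=1]$ and $\widehat{D} = \widehat{\E}[g(\X)|Y=1,S=1] - \widehat{\E}[g(\X)|Y=0,S=1]$, and the population target analogously as $\theta(\z) = N(\z)/D$ with $D = \mu_1 - \mu_0 \neq 0$. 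Here the key structural fact, which I would establish first, is that under Assumptions \ref{assump::cool1} and \ref{assump::condCovIndep} the population identity $\E[g(\X)|S=0,\z] = \theta(\z)\mu_1 + (1-\theta(\z))\mu_0$ holds, so that $\theta(\z) = N(\z)/D$ is indeed the right target; this is the exact analogue of the unconditional identity behind the ordinary ratio estimator, with Assumption \ref{assump::condCovIndep} used to replace $\E[g(\X)|Y=j,S=0,\z]$ by $\mu_j = \E[g(\X)|Y=j,S=1]$ (the latter equality coming from weak prior shift).

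Next I would perform the standard ratio decomposition: on the event that $\widehat{D}$ is bounded away from $0$ — which holds with the complementary probability being exponentially small, or at worst $\sO(n_L^{-1})$, by a concentration bound on $\widehat{\mu}_1 - \widehat{\mu}_0$ around $D$ — one has
\begin{align*}
 \widehat{\theta}_{URR}(\z) - \theta(\z)
 &= \frac{\widehat{N}(\z) - N(\z)}{\widehat{D}}
 - \frac{N(\z)}{D}\cdot\frac{\widehat{D}-D}{\widehat{D}},
\end{align*}
so that $(\widehat{\theta}_{URR}(\z)-\theta(\z))^2 \lesssim (\widehat{N}(\z)-N(\z))^2 + (\widehat{D}-D)^2$ up to the constant $D^{-2}$ and the boundedness of $N(\z)/D = \theta(\z) \in [0,1]$. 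Taking conditional expectation given $S_1^n$ and then expectation over $\Z$, the term $\E[(\widehat{D}-D)^2\mid S_1^n]$ is $\sO(n_L^{-1})$ by Assumption \ref{assump::ratio-clt}-type variance control on the sample means $\widehat{\mu}_j$ (finite conditional variance of $g(\X)$ given $Y$, which is implied once $(f_0,f_1)$ lies in the relevant class; here I would just invoke the finiteness hypotheses already used in \autoref{thm:mse-trimmed}), while the term $\E[(\widehat{N}(\z)-N(\z))^2]$ splits into the regression estimation error $\E[(\widehat{\E}[g(\X)|S=0,\Z] - \E[g(\X)|S=0,\Z])^2\mid S_1^n]$ plus another $\sO(n_L^{-1})$ from $\widehat{\mu}_0$. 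Combining the pieces gives the claimed bound $\sO(\max(\E[(\widehat{\E}[g(\X)|S=0,\Z]-\E[g(\X)|S=0,\Z])^2\mid S_1^n], n_L^{-1}))$. Finally, the trimming step only helps: $\theta(\z)\in[0,1]$, so $|\widehat{\theta}_{RR}(\z)-\theta(\z)| \leq |\widehat{\theta}_{URR}(\z)-\theta(\z)|$ pointwise, and the bound transfers verbatim to $\widehat{\theta}_{RR}$; this also disposes cleanly of the low-probability event where $\widehat{D}$ is near $0$, since on that event $\widehat{\theta}_{RR}(\z)-\theta(\z)$ is bounded by $1$ in absolute value and the event has probability $\sO(n_L^{-1})$.

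The main obstacle I anticipate is handling the event $\{|\widehat{D}| \text{ small}\}$ rigorously while keeping the regression error term clean: the naive ratio expansion is only valid off that event, and one must check that conditioning on $S_1^n$ (which fixes $n_L, n_U, n_0, n_1$ but not the feature values) still allows the concentration argument for $\widehat{D}$ and does not entangle it with the Nadaraya–Watson estimator's behavior. I would deal with this by splitting the expectation over the event $\{|\widehat D - D| \le |D|/2\}$ and its complement, bounding the contribution of the complement by $\P(|\widehat D - D| > |D|/2 \mid S_1^n) = \sO(n_L^{-1})$ (using that $\widehat{\mu}_0,\widehat{\mu}_1$ are averages over the disjoint labeled subsamples, independent of the target-population regression estimator under Assumption \ref{assump::iidRelaxed}), and on the good event using the expansion above together with $\E_\Z$-integration of the regression MSE. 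A secondary, more routine point is that Assumption \ref{assump::condCovIndep} must be used at exactly one place — to justify that the correct target is $N(\z)/D$ — and I would isolate that as a short lemma before the decomposition so the rest of the argument is a clean ratio-estimator computation mirroring the proof of \autoref{thm:mse-trimmed}.
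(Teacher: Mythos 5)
Your proposal is correct and follows essentially the same route as the paper: the population identity $\theta(\z) = (\E[g(\X)|S=0,\z]-\mu_0)/(\mu_1-\mu_0)$ is precisely the paper's Lemma \ref{lemma:reg}, and your ratio decomposition with Chebyshev control of the event $\{|\widehat{D}-D|>|D|/2\}$ together with the boundedness of the trimmed estimator is exactly what the paper's Lemma \ref{lemma:ratio} packages. The only mechanical difference is that the paper, because Lemma \ref{lemma:ratio} is stated with centering at conditional expectations, must explicitly split off the bias $\E[\hat{h}_0(\Z)|S_1^n]-h_0(\Z)$ of the nonparametric estimator before recombining it into the regression MSE, whereas your exact algebraic identity absorbs bias and variance together into $\E[(\widehat{N}(\Z)-N(\Z))^2]$; both yield the same bound.
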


\autoref{thm:rr-mse} shows that 
the integrated mean squared error of
$\widehat{\theta}_{RR}$ depends both on $n_L$ 
and on the integrated mean squared error of
$\widehat{\E}[g(\X)|S=0,\Z]$ with respect to
the regression function, $\E[g(\X)|S=0,\Z]$.
If one uses standard nonparametric methods to
estimate $\E[g(\X)|S=0,\Z]$, then it is possible
to prove the consistency of $\widehat{\theta}_{RR}$
under weak additional assumptions. For example,
if in the target population the pairs 
$(g(\X_1),\Z_1),\ldots,(g(\X_{n_U}),\Z_{n_U})$ are
i.i.d., the regression function $\E[g(\X)|S=0,\z]$
is sufficiently smooth over $\z$ and 
$\widehat{\E}[g(\X)|S=0,\z]$ is
the Nadaraya-Watson kernel estimator with bandwidth
$h_{n_U} = O(n_{U}^{-1/5})$, then it follows
\citep{Wasserman2006}[p.73] that
\begin{align*}
 \E\left[\left(\widehat{\theta}_{RR}(\Z)
 -\theta(\Z)\right)^2\bigg|S_1^n\right]
 &\leq \sO\left(\max(n_U^{-4/5}, n_L^{-1})\right)
\end{align*}

In the equation above, the rate of convergence
over $n_U$ is slower than the one obtained in
\autoref{thm:mse-trimmed}.
This is expected, since the rates of convergence
of nonparametric estimators are typically
slower than those of sample means.

Besides showing the consistency of
the regression ratio estimator, we
also show that it outperforms the
classify and count method in
artificial data sets.
Specifically, we run the
regression ratio estimator using
the Nadaraya-Watson kernel estimator and
generate data sets under 
the following specifications:
\begin{itemize}
 \item $Z \sim U(0,1)$
 \item $\P(Y=1|S=1,Z=z) = 0.5$
 \item $\P(Y=1|S=0,Z=z) = 0.5(\sin (2zk\pi) + 1) $, 
 for $k \in \{1,2\}$
 \item $X|Y = 0 \sim N(\mu, 1)$ and 
 $X|Y = 1 \sim N(-\mu, 1)$, 
 for $\mu \in \{0.5,1,1.5,2\}$
 \item $n_{L} = n_{U} = 10^3$
 \item $g(X)$ is the Bayes classifier, i.e. $g(X) = \I(X>0)$
\end{itemize}

\begin{figure}
 \centering
 \makebox[\textwidth]{
  \includegraphics[page=2,scale=0.51]{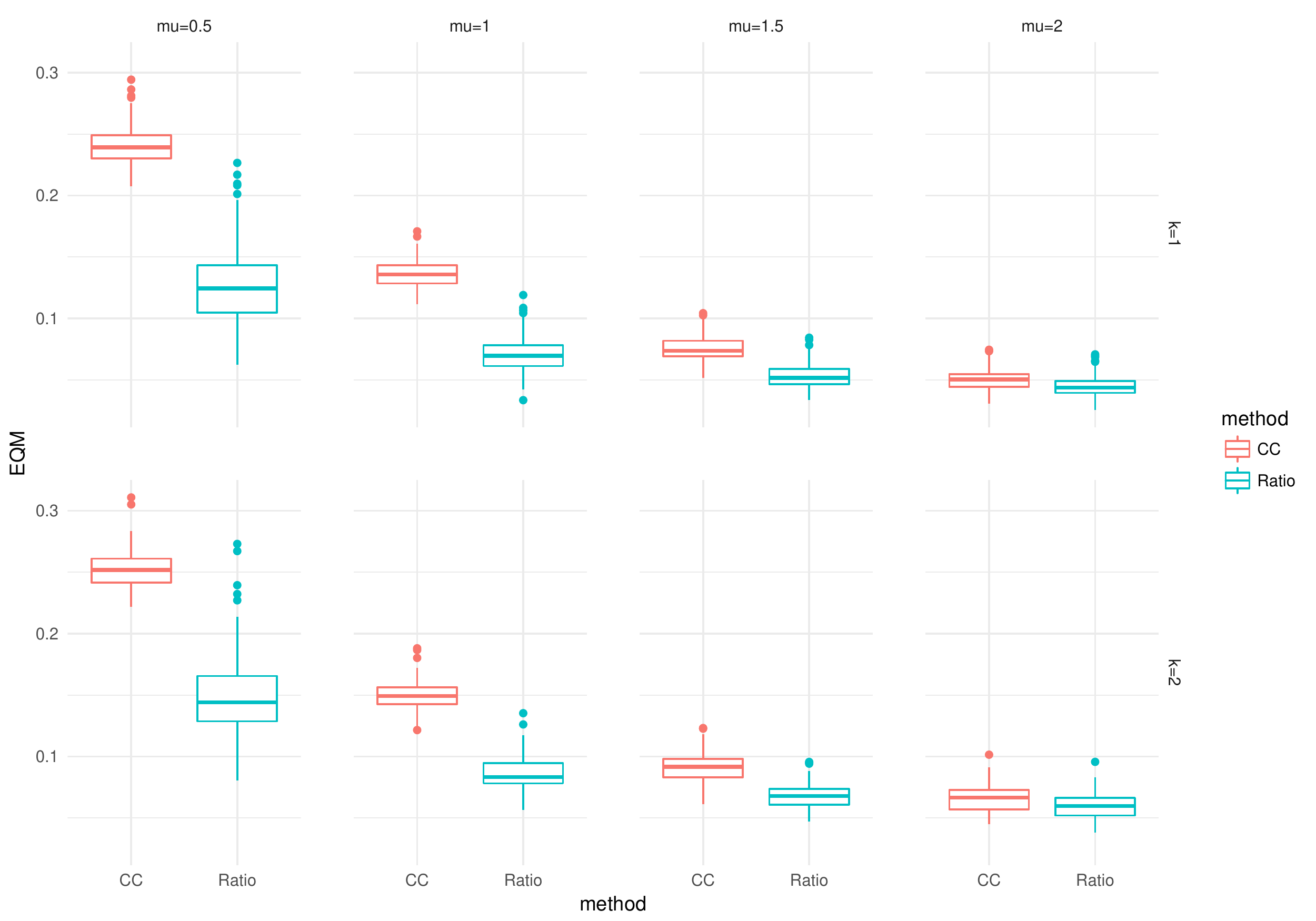}
 }
 \caption{Average of the fitted regression in 
 each setting.}
 \label{fig::reg1}
\end{figure}

For each combination of $k$ and $\mu$, 
400 independent data sets were generated.
Figure \ref{fig::reg1} presents the average curve
fitted for $\theta(\z)$ using the regression ratio
and classify and count estimators. 
One can observe that, while for small values of $\mu$
the ratio regression outperforms 
the classify and count estimator,
for large values of $\mu$ both estimators are similar.
This occurs because when $\mu$ is large,
the classification problem of determining 
the value of $Y$ is easier and 
both methods perform well.
One can also observe from Figure \ref{fig::reg1} that
the classify and count method performs worse than
the regression ratio estimator because its fit is
generally smoother than the true regression curve.
Figure \ref{fig::reg2} summarizes the 
mean squared error of each method in each setting.
The regression ratio  estimator leads to 
substantial  improvements over 
the classify and count method.

\begin{figure}
 \centering
 \makebox[\textwidth]{
  \includegraphics[page=1,scale=0.51]{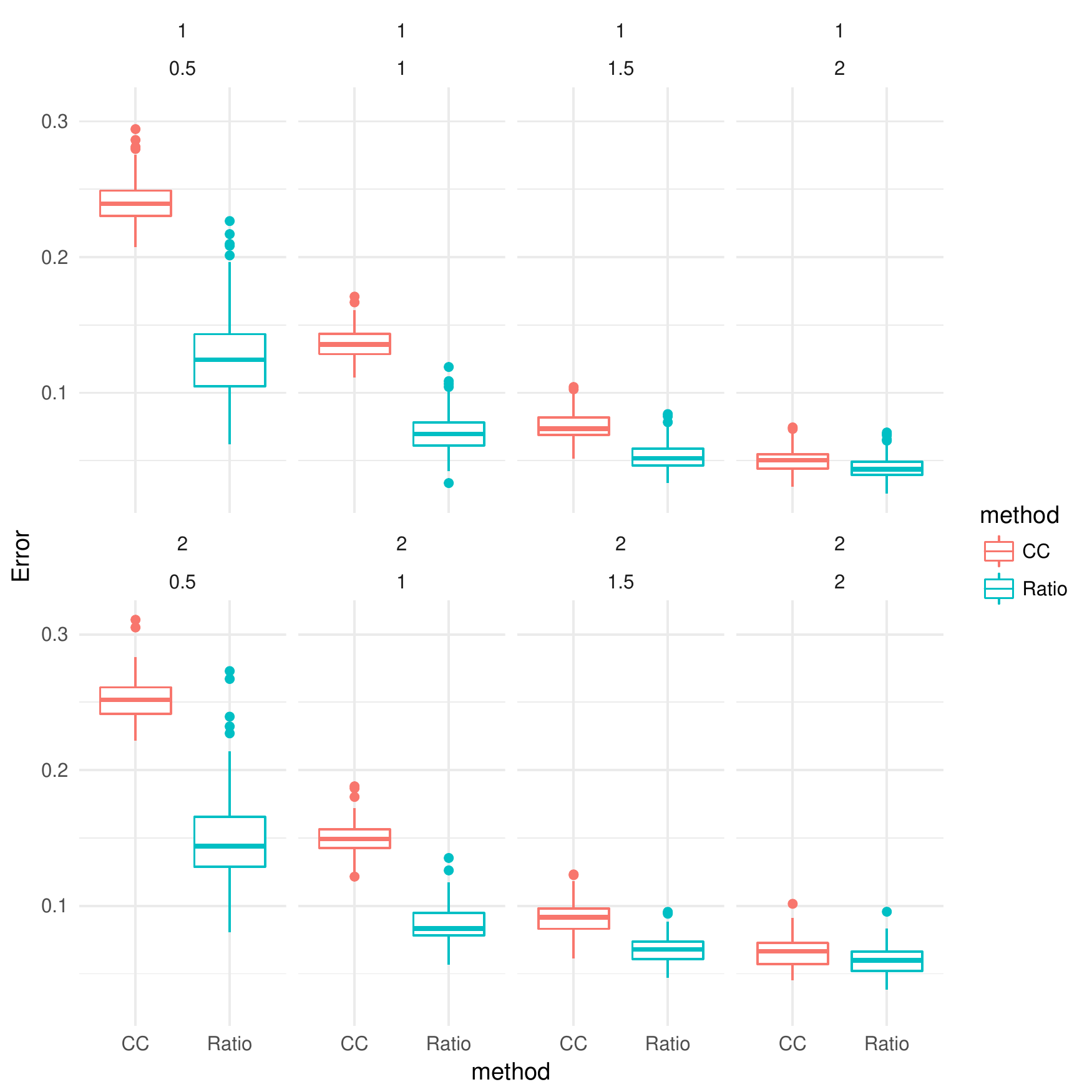}
 }
 \caption{Boxplots of the
 root mean square deviation in each setting.}
 \label{fig::reg2}
\end{figure}

\section{Final remarks}

We present the ratio estimator for the 
problem of quantification,
show that it is approximately minimax
under the prior probability shift assumption,
and provide an hypothesis test for this assumption.
Since the methods in \citet{Forman2006}
and \citet{Bella2010} are particular
instances of the ratio estimator, 
it follows that they are also approximately minimax.
The lower bound on the risk that
we derive is of independent interest and 
can be used to investigate the optimality of 
other quantification methods.
We also derive the limiting distribution
of the ratio estimator, which allows
the derivation of a ratio estimator based on
Reproducing Kernel Hilbert Spaces and of
confidence intervals for the quantification problem.
A simulation study shows that 
the ratio estimator based on 
Reproducing Kernel Hilbert spaces is
a competitive new alternative.

Besides the above results, we also generalize
the ratio estimator to two other scenarios.
In the first one, we consider the case in
which some labels are available in
the target population. 
The combined estimator uses these labels and
the ratio estimator to obtain a larger effective
sample size than the ratio estimator.
In the second scenario, we consider the
prevalence of positive labels varies according
to a new variable, $\Z$. We show that,
under Assumption \ref{assump::condCovIndep},
the regression ratio estimator can
be made consistent for $\theta(\Z)$.
A still unresolved issue is how much
it is possible to relax
Assumption \ref{assump::condCovIndep}
while still being able to learn
$\theta(\Z)$.

\acks{This work was partially supported by
	FAPESP grant  2017/03363-8
	and CNPq grant 306943/2017-4.
This study was financed in part by the Coordena\c{c}\~ao de Aperfei\c{c}oamento de Pessoal de N\'ivel Superior - Brasil (CAPES) - Finance Code 001.
}

\appendix

\section{Proofs}
\label{ap:proofs}

\begin{lemma}
 \label{lemma:lower-bound}
 Let $\XX_{U} = (\X_u)_{u \in A_0}$,
 $\Y_U = (Y_u)_{u \in A_0}$,
 $\XX_{L} = (\X_l)_{l \in A_1}$, and
 $\Y_L = (Y_l)_{l \in A_1}$.
 Under Assumptions \ref{assump::iidRelaxed} and
 \ref{ass::priorShift},
 if $\theta \sim \text{Uniform}(0,1)$, then
 $\E[\V[\theta|\XX_{U},\XX_{L},\Y_L]] 
 \geq \Omega(n_{U}^{-1})$. 
 Under the same assumptions, if
 $\P(\theta = 0.5-n_{L}^{-1}) 
 =\P(\theta = 0.5+n_{L}^{-1}) = 0.5$,
 $f_{0} = \text{Bernoulli}(0)$,
 $\alpha \sim \text{Uniform}(\epsilon^*, 1)$
 $f_{1}|\alpha = \text{Bernoulli}(\alpha)$, then
 $\E[\V[\theta|\XX_{U},\XX_{L},\Y_L]] 
 \geq \Omega(n_{L}^{-1})$.
\end{lemma}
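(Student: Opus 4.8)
Both claims lower-bound a Bayes risk: for the priors in the statement, $\E[\V[\theta\mid\XX_U,\XX_L,\Y_L]]$ is precisely the Bayes risk for estimating $\theta$ under squared loss (conditionally on $S_1^n$, as in the definition of $M$), so in each case the plan is to reveal a bit of extra information or reparametrize into a tractable submodel and then evaluate or lower-bound the posterior variance of $\theta$. For the $n_U^{-1}$ bound, fix any $(f_0,f_1)$ with $\|f_0-f_1\|_1\ge\epsilon$ and $\theta\sim\mathrm{Uniform}(0,1)$. Under Assumptions \ref{assump::iidRelaxed}--\ref{ass::priorShift} the conditional law of $(\XX_L,\Y_L)$ given $\theta$ does not depend on $\theta$, and given $\theta$ it is independent of $\XX_U$; hence $(\XX_L,\Y_L)$ is independent of $(\XX_U,\theta)$ and $\V[\theta\mid\XX_U,\XX_L,\Y_L]=\V[\theta\mid\XX_U]$ a.s. Moreover, conditionally on $\Y_U$ the features $\XX_U$ are i.i.d.\ draws from the $\theta$-free densities $f_{Y_u}$, so $\theta-\Y_U-\XX_U$ is a Markov chain and therefore $\E[\V[\theta\mid\XX_U]]\ge\E[\V[\theta\mid\Y_U]]$. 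Finally $\Y_U$ is an i.i.d.\ $\mathrm{Bernoulli}(\theta)$ sample of size $n_U$, so the posterior of $\theta$ is $\mathrm{Beta}(1+k,1+n_U-k)$ with $k=\sum_{u\in A_0}Y_u$ marginally uniform on $\{0,\dots,n_U\}$, and a one-line computation gives $\E[\V[\theta\mid\Y_U]]=\tfrac1{12}-\V[\tfrac{1+k}{n_U+2}]=\tfrac1{6(n_U+2)}=\Omega(n_U^{-1})$. (The separation hypothesis is used only to guarantee such a pair exists.)

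For the $n_L^{-1}$ bound, choose the hard instance so that $g(\X)$ is a sufficient statistic: pick $a_0,a_1\in\mathrm{range}(g)$ and take the law of $g(\X)$ given $Y{=}0$ to be the point mass at $a_0$ and given $Y{=}1$ to be $a_1$ with probability $\alpha$ and $a_0$ otherwise---the $\mathrm{Bernoulli}(0)/\mathrm{Bernoulli}(\alpha)$ pair of the statement when $a_0{=}0,a_1{=}1$---with $a_0,a_1$ chosen so that $(f_0,f_1)\in\sF_{g,K,\epsilon}$ for every $\alpha\in(\epsilon^*,1)$; put $\alpha\sim\mathrm{Uniform}(\epsilon^*,1)$ and the two-point prior on $\theta$, whose gap I read as being of order $n_L^{-1/2}$, say $\theta_\pm=\tfrac12\pm c\,n_L^{-1/2}=:\tfrac12\pm\delta$ with $c$ small (a gap of order $n_L^{-1}$ would make the prior variance $n_L^{-2}$, below the target rate). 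By sufficiency and ancillarity of $\Y_L$, the data reduce to $K_1\sim\mathrm{Bin}(n_1,\alpha)$ and $K_0\sim\mathrm{Bin}(n_U,\theta\alpha)$, independent given $(\theta,\alpha)$, where $K_1$ and $K_0$ count the $a_1$-values among the $Y{=}1$ labeled and among the unlabeled instances: indeed the $Y{=}0$ labels are deterministic and $\P(g(\X_u){=}a_1)=\P(Y_u{=}1)\,\P(g(\X){=}a_1\mid Y{=}1)=\theta\alpha$. With $\pi=\P(\theta{=}\theta_+\mid K_0,K_1)$ the Bayes risk equals $(2\delta)^2\,\E[\pi(1-\pi)]$; since $\pi(1-\pi)\ge\tfrac12\min(\pi,1-\pi)$ and $\E[\min(\pi,1-\pi)]=\tfrac12\big(1-\mathrm{TV}(P_+,P_-)\big)$, with $P_\pm$ the $\alpha$-mixed law of $(K_0,K_1)$ under $\theta=\theta_\pm$, it suffices to bound $\mathrm{TV}(P_+,P_-)$ away from $1$: the Bayes risk is then $\ge\delta^2\big(1-\mathrm{TV}(P_+,P_-)\big)=\Omega(n_L^{-1})$, which lower-bounds $\E[\V[\theta\mid\XX_U,\XX_L,\Y_L]]$.

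To bound that total variation, reparametrize $\gamma=\tfrac{\theta_-}{\theta_+}\alpha$ inside $P_-$: its unlabeled component becomes $\mathrm{Bin}(n_U,\theta_+\gamma)$, matching that of $P_+$, while its mixing law differs from $\mathrm{Uniform}(\epsilon^*,1)$ by $O(\delta)$ in total variation. The triangle inequality, joint convexity of total variation, and cancellation of the common $\mathrm{Bin}(n_U,\theta_+\cdot)$ factor then give $\mathrm{TV}(P_+,P_-)\le O(\delta)+\E_\gamma\,\mathrm{TV}\big(\mathrm{Bin}(n_1,\gamma),\mathrm{Bin}(n_1,\tfrac{\theta_+}{\theta_-}\gamma)\big)$, an expression in which $n_U$ has disappeared. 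On the part of the $\gamma$-range where $\gamma$ stays a fixed distance $\rho$ below $1$, the two Bernoulli parameters differ by $O(\delta)$, $n_1\le n_L$, and a Hellinger estimate bounds this total variation by a constant strictly below $1$; the complementary region---where $\alpha$ lies within $\rho$ of $1$ and $\mathrm{Bin}(n_1,\alpha)$ is nearly degenerate---has prior probability $O(\rho)$, so choosing first $\rho$ and then $c$ small enough makes $\mathrm{TV}(P_+,P_-)\le 1-c''$ for a constant $c''>0$. This last estimate is the crux: because the nuisance $\alpha$ is shared by the labeled and the unlabeled components, the $\mathrm{Bin}(n_U,\theta_+\gamma)$ coordinate is itself informative about $\gamma$ and cannot simply be dropped, and one must recognize that a $\delta$-perturbation of $\alpha$ only becomes resolvable from $\mathrm{Bin}(n_1,\alpha)$ when $\alpha$ is near $1$, excise that near-degenerate region, and verify it carries negligible prior mass---all while tracking the $O(\delta)$ boundary corrections from the reparametrization. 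The remaining ingredients (the data-processing step and the Beta--Bernoulli computation above, and the Le Cam reduction) are routine.
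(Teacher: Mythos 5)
Your argument is correct, and it rightly flags the typo in the statement: the two-point prior must have gap of order $n_L^{-1/2}$ (as in the paper's own proof), not $n_L^{-1}$, since otherwise the prior variance itself is already $n_L^{-2}$ and the claimed bound could not hold. For the $\Omega(n_U^{-1})$ part your route is essentially the paper's: reduce to $\Y_U$ via the Markov chain $\theta\to\Y_U\to\XX_U$ (the paper conditions additionally on $\Y_U$, which is the same step) and then do the Beta--Bernoulli posterior-variance computation. For the $\Omega(n_L^{-1})$ part you genuinely diverge. The paper reveals the product $\lambda=\theta\alpha$ --- which makes $\XX_U$ conditionally uninformative in one line --- and then directly computes the posterior odds of $\theta_+$ versus $\theta_-$ given $(\lambda,\XX_L,\Y_L)$ as a ratio of the $\alpha$-likelihoods evaluated at $\lambda/\theta_{\pm}$, showing that $\gamma_{n_L}(1-\gamma_{n_L})$ stays bounded away from zero in expectation, so the risk is $4n_L^{-1}\E[\gamma_{n_L}(1-\gamma_{n_L})]=\Omega(n_L^{-1})$. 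You instead run a Le Cam two-point reduction, bounding the Bayes risk below by $\delta^2(1-\mathrm{TV}(P_+,P_-))$ and controlling the total variation by reparametrizing the nuisance so that the $\mathrm{Bin}(n_U,\cdot)$ components cancel, then tensorizing a Hellinger estimate over the $n_1$ labeled draws and excising the near-degenerate region $\alpha\approx 1$. Both proofs exploit the same confounding of $\theta$ with $\alpha$ through $\theta\alpha$ and the fact that $n_L$ labels resolve $\alpha$ only to precision $n_L^{-1/2}$. What each buys: your version is finite-sample and sidesteps the paper's almost-sure-convergence step (which as written is informal about uniformity in $\lambda$ and itself mixes $n_L^{-1}$ with $n_L^{-1/2}$); the paper's version is shorter because conditioning on $\lambda$ disposes of $n_U$ immediately, where you need the reparametrization-plus-excision argument to achieve the same decoupling.
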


\begin{proof}
 It follows from Assumptions
 \ref{assump::iidRelaxed} and
 \ref{ass::priorShift} that the
 dependency relations between data
 and parameters can be represented by
 figure \ref{fig:quantmodel}.
 \begin{figure}[h]
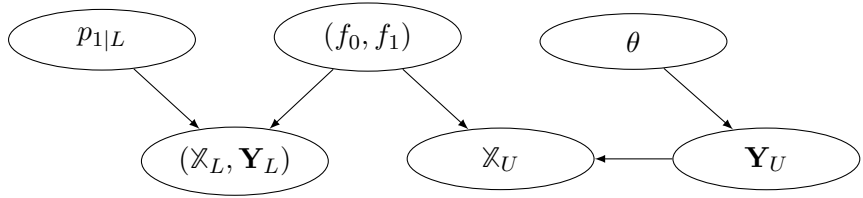

  \centering
  \quantmodel
  \caption{Dependency relations between data and
  parameters in the prior shift model.}
  \label{fig:quantmodel}
 \end{figure}
 
 If $\theta \sim U(0,1)$, then
 \begin{align}
  \label{eq:lower-2}
  \E[\V[\theta|\XX_U,\XX_L,\Y_L]|S_1^n] &\geq
  \E[\V[\theta|\XX_U,\XX_L,\Y_L,\Y_U]|S_1^n]
  \nonumber \\
  &= \E[\V[\theta|\Y_U]|S_1^n] 
  & \text{fig. \ref{fig:quantmodel}} \nonumber \\
  &= \Omega(n_U^{-1})
  & \Y_U|\theta \text{ i.i.d. Bernoulli}(\theta) 
 \end{align}
 
 Next, let 
 $\P(\theta=0.5-n_{L}^{-0.5}) =
 \P(\theta=0.5+n_{L}^{-0.5}) = 0.5$,
 $f_0 = \text{Bernoulli}(0)$
 $\alpha \sim U(\epsilon,1)$ and
 $f_1|\alpha = \text{Bernoulli}(\alpha)$.
 Define $\XX_{L,1} = (\X_l)_{l \in A_{1,1}}$
 and note that
 \begin{align}
  \label{eq:lower-3}
  f(\alpha|\XX_L,\Y_L)
  &\propto \alpha^{n_{L,1} \bar{\XX}_{L,1}}
  (1-\alpha)^{n_{L,1}(1-\bar{\XX}_{L,1})}
  \I(\alpha \in (\epsilon,1))
 \end{align}
 Let $\lambda := \theta\alpha$ and
 observe that, for every $\lambda \in \left(\epsilon(0.5+n_L^{-0.5}), 0.5-n_L^{-0.5}\right]$,
 \begin{align}
  \label{eq:lower-4}
  \P(\theta=0.5+n_L^{-0.5}|\lambda,\XX_L,\Y_L)
  &= \frac{f_{\alpha}(\lambda(0.5+n_L^{-0.5})^{-1}
  |\XX_L,\Y_L)}
  {f_{\alpha}(\lambda(0.5+n_L^{-0.5})^{-1}
  |\XX_L,\Y_L)
  +f_{\alpha}(\lambda(0.5-n_L^{-0.5})^{-1}
  |\XX_L,\Y_L)} \nonumber \\
  &= \frac{1}{1+\frac
  {\left(\frac{\lambda}{0.5-n_L^{-0.5}}\right)^
  {n_{L,1} \bar{\XX}_{L,1}}
  \left(1-\frac{\lambda}{0.5-n_L^{-0.5}}\right)^
  {n_{L,1}(1-\bar{\XX}_{L,1})}}
  {\left(\frac{\lambda}{0.5+n_L^{-0.5}}\right)^
  {n_{L,1} \bar{\XX}_{L,1}}
  \left(1-\frac{\lambda}{0.5+n_L^{-0.5}}\right)^
  {n_{L,1}(1-\bar{\XX}_{L,1})}}}
  & eq. \ref{eq:lower-3} \nonumber \\
  &= \frac{1}{1+
  \left(1 + \frac{4}{n_L^{0.5}-2}\right)^
  {n_{L,1}}
  \left(1 - \frac{4}
  {(1-2\theta \alpha)n_L^{0.5}+2}\right)^
  {n_{L,1}(1-\bar{\XX}_{L,1})}}
  & \lambda = \theta\alpha
 \end{align}

 Let $\gamma_{n_L} := 
 \P(\theta=0.5+n_L^{-1}|\lambda,\XX_L,\Y_L)$.
 Note that $\frac{n_{L,1}}{n_L} \convas p_{1|L}$
 and $\bar{\XX}_{L,1} \convas \alpha$.
 Therefore, since $|\theta-0.5| \leq n_{L}^{-0.5}$,
 it follows from eq. \ref{eq:lower-4} that
 $\gamma_{n_L}$ converges a.s. to a
 quantity between 
 $\frac{1}{1+
 \exp\left(\frac{-8\alpha p_{1|L}}{1-\alpha}\right)}$ and
 $\frac{1}{1+
 \exp\left(\frac{8\alpha p_{1|L}}{1-\alpha}\right)}$.
 That is, 
 
 \begin{align}
  \label{eq:lower-5}
  \E[\gamma_{n_L}(1-\gamma_{n_L})|S_{1}^n]
  &\geq \Omega(1).
 \end{align}
 
 Note that $\bar{\XX}_U$ is
 sufficient for $(\theta,\alpha)$ and
 $\bar{\XX}_U$ converges a.s. to $\lambda$.
 Therefore,
 \begin{align}
  \label{eq:lower-6}
  \E[\V[\theta|\XX_U,\XX_L,\Y_L]|S_1^n] 
  &= \E[\V[\theta|\bar{\XX}_U,\XX_L,\Y_L]|S_1^n] 
  \nonumber \\
  &\geq \E[\V[\theta|\lambda,\XX_L,\Y_L]|S_1^n] 
  \nonumber \\
  &\geq 4n_L^{-1}\E\left[\gamma_{n_L}(1-\gamma_{n,L}) \I\left(\lambda \in \left(\epsilon(0.5+n_L^{-1}), 0.5-n_L^{-1}\right]\right)|S_1^n\right]
 \end{align}
 
 Since $\P(\lambda \in \epsilon(0.5+n_L^{-1}),0.5-n_L^{-1}) \geq 0.5$, it follows from eqs.
 \ref{eq:lower-5} and \ref{eq:lower-6} that
 \begin{align}
  \label{eq:lower-7}
  \E[\V[\theta|\XX_U,\XX_L,\Y_L]|S_1^n] 
  \geq \Omega(n_L^{-1})
 \end{align}
 The proof follows from combining eqs.
 \ref{eq:lower-2} and \ref{eq:lower-7}.
\end{proof}

\begin{proof}[\autoref{thm:lower_bound}]
 We wish to find a lower bound for
 the minimax risk given a constraint, $\mathcal{F}$. 
 In order to do so, we use the result that
 the minimax risk is lower bounded by the Bayes risk of
 any Bayes estimator  associated to a prior with
 support in $\mathcal{F}$ \citep{Wasserman2006,Esteves2017}.
 Since we consider the squared error loss,
 the Bayes risk of the Bayes estimator is
 $\E[\V[\theta|\XX_{U},\XX_{L},\Y_L]]$.
 Hence, if there exists two priors with support
 in $\mathcal{F}$ such that the first one satisfies
 $\E[\V[\theta|\XX_{U},\XX_{L},\Y_L]] \geq \Omega(n_{L}^{-1})$ and
 the second one satisfies
 $\E[\V[\theta|\XX_{U},\XX_{L},\Y_L]] \geq \Omega(n_{U}^{-1})$, then
 we can conclude that the minimax risk is
 lower bounded by $\Omega(\max(n_L^{-1},n_{U}^{-1}))$.
 Lemma \ref{lemma:lower-bound} can be used to determine
 these priors for the classes $\sF_{\sL_1,\epsilon}$
 and $\sF_{g, k, \epsilon}$.
 The proof for $\sF_{\sL_1,\epsilon}$ follows from
 taking $\epsilon^* = \epsilon$ in
 Lemma \ref{lemma:lower-bound}.
 Next, if $\sF_{g, k, \epsilon} \neq \emptyset$,
 then there exist $a$ and $b$ such that
 $\frac{\epsilon}{|g(a)-g(b)|} < 1$.
 Without loss of generality, let
 $a=0$ and $b=1$. The proof follows from
 taking $\epsilon^* = \frac{\epsilon}{|g(1)-g(0)|}$
 in Lemma \ref{lemma:lower-bound}.
\end{proof}

\begin{lemma}
 \label{lemma:theta-eq}
 For every function, $g$,
 under Assumption \ref{assump::cool1},
 \begin{align*}
  \theta := \P(Y=1|S=0)
  &= \frac{\E[g(\X)|S=0]-\E[g(\X)|Y=0,S=1]}
  {\E[g(\X)|Y=1,S=1]-\E[g(\X)|Y=0,S=1]}
 \end{align*}
\end{lemma}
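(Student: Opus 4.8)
The plan is to expand $\E[g(\X)\mid S=0]$ by the law of total expectation, conditioning on the label $Y$, and then to use the weak prior shift assumption to replace the class-conditional means computed in the target population ($S=0$) by the corresponding means in the training population ($S=1$), which are exactly the quantities appearing in the statement.

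First I would write
\[
 \E[g(\X)\mid S=0] = \E[g(\X)\mid Y=1,S=0]\,\P(Y=1\mid S=0) + \E[g(\X)\mid Y=0,S=0]\,\P(Y=0\mid S=0),
\]
which, by the definition $\theta := \P(Y=1\mid S=0)$, equals $\theta\,\E[g(\X)\mid Y=1,S=0] + (1-\theta)\,\E[g(\X)\mid Y=0,S=0]$. Next I would invoke Assumption \ref{assump::cool1}: although it is phrased for the full vectors $g(\X)_1^n$ and $\S_1^n$ given $\Y_1^n$, marginalizing to a single coordinate shows that $g(\X_i)$ is independent of $S_i$ conditionally on $Y_i=y$, so that $\E[g(\X)\mid Y=j,S=0] = \E[g(\X)\mid Y=j,S=1]$ for $j\in\{0,1\}$. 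Substituting this into the previous display gives
\[
 \E[g(\X)\mid S=0] = \theta\,\E[g(\X)\mid Y=1,S=1] + (1-\theta)\,\E[g(\X)\mid Y=0,S=1].
\]
Subtracting $\E[g(\X)\mid Y=0,S=1]$ from both sides and dividing by $\E[g(\X)\mid Y=1,S=1]-\E[g(\X)\mid Y=0,S=1]$ (nonzero whenever the stated fraction is meaningful, cf.\ condition~2 of Assumption \ref{assump::separability}) isolates $\theta$ and yields the claimed identity.

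The computation itself is essentially one line; the only point that requires a little care is the passage from the vector-form conditional independence in Assumption \ref{assump::cool1} to the per-instance equality of conditional expectations $\E[g(\X)\mid Y=j,S=0] = \E[g(\X)\mid Y=j,S=1]$, where one must be careful about which population the $S$-argument refers to and use Assumption \ref{assump::iidRelaxed} to make the per-coordinate statement unambiguous. Once that is in place, no further estimates or limiting arguments are needed, so I expect the proof to be short.
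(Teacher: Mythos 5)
Your proof is correct and follows essentially the same route as the paper's: decompose $\E[g(\X)\mid S=0]$ over $Y$ by the law of total expectation (the paper does this at the density level and then integrates, a cosmetic difference), swap $S=0$ for $S=1$ in the class-conditional means via Assumption \ref{assump::cool1}, and solve for $\theta$. Your added remarks on marginalizing the vector-form independence to a single coordinate and on the nonvanishing denominator are points the paper leaves implicit, but they do not change the argument.
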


\begin{proof}
 Let $f(\x)$ denote the density of $\X$.
 Note that
 \begin{align}
  \label{eq::ratio-1}
  g(\x)f(\x|S=0)
  &= \sum_{j=0}^1 g(\x)f(\x|Y=j,S=0)\P(Y=j|S=0)
  & \text{Law of total prob.} \nonumber \\
  \E[g(\X)|S=0]
  &=\sum_{j=0}^1 \E[g(\X)|Y=j,S=0]\P(Y=j|S=0)
  & \text{Integration over } \x \nonumber \\
  &=\sum_{j=0}^1 \E[g(\X)|Y=j,S=1]\P(Y=j|S=0)
  & \text{Assumption \ref{assump::cool1}}
 \end{align}
 Isolating $\P(Y=1|S=0)$ in 
 equation \ref{eq::ratio-1} yields
 \begin{align*}
  \theta := \P(Y=1|S=0)
  &= \frac{\E[g(\X)|S=0]-\E[g(\X)|Y=0,S=1]}
  {\E[g(\X)|Y=1,S=1]-\E[g(\X)|Y=0,S=1]}
 \end{align*}
\end{proof}

\begin{proof}[\autoref{thm:fisher-ratio}]
 Follows directly from
 the definition of $\widehat{\theta}_{UR}$ and
 $\widehat{\theta}_{R}$, and
 Lemma \ref{lemma:theta-eq}.
\end{proof}

\begin{lemma}
 \label{lemma:ratio}
 Let $Z_1$ and $Z_2$ be random variables such that
 $\E[Z_2] \neq 0$ and 
 $\frac{\E[Z_1]}{\E[Z_2]} \in [0,1]$.
 Define 
 $T = \max\left(0,\min\left(1,\frac{Z_1}{Z_2}\right)\right)$.
 For every random variable, $S$, and
 $\epsilon_1, \epsilon_2 \in (0,1)$.
 \begin{align*}
  \E\left[\left(T
  -\frac{\E[Z_1|S]}{\E[Z_2|S]}\right)^2\bigg|S\right]
  &\leq \frac{4(|\E[Z_1|S]|+\epsilon_1) 
  \max\left(\V[Z_1|S], \V[Z_2|S]\right)}
  {\min(1,(1-\epsilon_2)^4 \E[Z_2|S]^4)}
  +\epsilon_1^{-2}\V[Z_1|S] 
  +(\epsilon_2\E[Z_2|S])^{-2}\V[Z_2|S]
 \end{align*}
\end{lemma}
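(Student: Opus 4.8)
The plan is to condition on $S$ and set $m_1 := \E[Z_1\mid S]$, $m_2 := \E[Z_2\mid S]$, so the goal becomes an upper bound on $\E[(T-m_1/m_2)^2\mid S]$. Two elementary facts drive everything. First, $(T-m_1/m_2)^2\le 1$ always, since $T\in[0,1]$ and $m_1/m_2\in[0,1]$. Second, the map $t\mapsto\max(0,\min(1,t))$ is $1$-Lipschitz and fixes the point $m_1/m_2\in[0,1]$, so on $\{Z_2\ne 0\}$ we have $|T-m_1/m_2|=|\max(0,\min(1,Z_1/Z_2))-\max(0,\min(1,m_1/m_2))|\le|Z_1/Z_2-m_1/m_2|$. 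Combining, $(T-m_1/m_2)^2\le\min\!\big(1,(Z_1/Z_2-m_1/m_2)^2\big)$ wherever $Z_2\ne0$. I would then split the space into the ``good'' event $G:=\{|Z_1-m_1|\le\epsilon_1\}\cap\{|Z_2-m_2|\le\epsilon_2|m_2|\}$ and its complement, and bound $\E[(T-m_1/m_2)^2\mid S]$ by the sum of the two pieces.

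On $G^c$ I would use only the crude bound $(T-m_1/m_2)^2\le1$, so its contribution is at most $\P(G^c\mid S)\le\P(|Z_1-m_1|>\epsilon_1\mid S)+\P(|Z_2-m_2|>\epsilon_2|m_2|\mid S)$; applying Chebyshev's inequality to each term yields exactly $\epsilon_1^{-2}\V[Z_1\mid S]+(\epsilon_2\E[Z_2\mid S])^{-2}\V[Z_2\mid S]$, the last two summands of the claimed bound. (Note $\{Z_2=0\}\subset G^c$ since on $G$ one has $|Z_2|\ge(1-\epsilon_2)|m_2|>0$, so the undefinedness of $Z_1/Z_2$ there is harmless.) On $G$ I would start from the identity $\frac{Z_1}{Z_2}-\frac{m_1}{m_2}=\frac{Z_2(Z_1-m_1)-Z_1(Z_2-m_2)}{Z_2\,m_2}$, use $|Z_2|\ge(1-\epsilon_2)|m_2|$ and $|Z_1|\le|m_1|+\epsilon_1$ together with $(a+b)^2\le 2a^2+2b^2$ to bound $(Z_1/Z_2-m_1/m_2)^2\I(G)$ by a constant times $\big((Z_1-m_1)^2+(|m_1|+\epsilon_1)^2(Z_2-m_2)^2\big)$ divided by a power of $(1-\epsilon_2)|m_2|$, and then push this through $\min(1,\cdot)$ using the elementary inequalities $\min(1,a/b)\le a/\min(1,b)$ (for $a,b>0$) and $\min(1,x+y)\le\min(1,x)+\min(1,y)$ to reshape the denominator into $\min(1,(1-\epsilon_2)^4\E[Z_2\mid S]^4)$. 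Taking $\E[\cdot\mid S]$ replaces $(Z_i-m_i)^2$ by $\V[Z_i\mid S]$, and bounding each variance by $\max(\V[Z_1\mid S],\V[Z_2\mid S])$ and collecting numerical constants gives the first summand.

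I expect the main obstacle to be the bookkeeping on $G$: one must choose how the numerator $Z_2(Z_1-m_1)-Z_1(Z_2-m_2)$ is split and the order in which the estimates $|Z_2|\ge(1-\epsilon_2)|m_2|$, $|Z_1|\le|m_1|+\epsilon_1$ and $\min(1,a/b)\le a/\min(1,b)$ are applied so that the final constant collapses to exactly $4(|m_1|+\epsilon_1)$ with $(1-\epsilon_2)^4\E[Z_2\mid S]^4$ (rather than a lower power) in the denominator. There is slack to exploit: since $(1-\epsilon_2)^4\le(1-\epsilon_2)^2$ and $x\mapsto\min(1,x)$ only ever enlarges the reciprocal bound, looseness in the intermediate steps is harmless. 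Finally it is worth flagging where the hypotheses enter: $\E[Z_2\mid S]\ne 0$ is what makes $m_1/m_2$ and the event $G$ meaningful, and $m_1/m_2\in[0,1]$ is precisely what makes $m_1/m_2$ a fixed point of the truncation in the second elementary fact above (in the intended application to \autoref{thm:mse-trimmed} these hold since $\E[Z_1\mid S_1^n]/\E[Z_2\mid S_1^n]=\theta$ by Lemma \ref{lemma:theta-eq}).
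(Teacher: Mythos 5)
Your proposal is correct and follows essentially the same route as the paper's proof: the same split into the good event $\{|Z_1-\E[Z_1|S]|\le\epsilon_1,\ |Z_2-\E[Z_2|S]|\le\epsilon_2\E[Z_2|S]\}$ and its complement, the trivial bound $(T-\E[Z_1|S]/\E[Z_2|S])^2\le 1$ combined with the union bound and Chebyshev on the complement (yielding the last two summands), and a linearization of $Z_1/Z_2$ on the good event (yielding the first). The only immaterial difference is that the paper linearizes via a first-order Taylor/mean-value expansion with intermediate points $Z_{1,*},Z_{2,*}$ while you use the exact identity $\frac{Z_1}{Z_2}-\frac{\E[Z_1|S]}{\E[Z_2|S]}=\frac{Z_2(Z_1-\E[Z_1|S])-Z_1(Z_2-\E[Z_2|S])}{Z_2\,\E[Z_2|S]}$; the constant-bookkeeping you flag (first power versus square of $|\E[Z_1|S]|+\epsilon_1$) is equally loose in the paper's own derivation and is irrelevant to the lemma's use in \autoref{thm:mse-trimmed}.
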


\begin{proof}
 It follows from Taylor's expansion of
 $\frac{Z_1}{Z_2}$ that 
 there exists $Z_{1,*}$ bounded between
 $\E[Z_1|S]$ and $Z_1$, and $Z_{2,*}$ between
 $\E[Z_2|S]$ and $Z_2$ such that
 \begin{align*} 
  \frac{Z_1}{Z_2}
  &= \frac{\E[Z_1|S]}{\E[Z_2|S]} 
  +\frac{1}{Z_{2,*}}(Z_1-\E[Z_1|S])
  -\frac{Z_{1,*}}{Z_{2,*}^2}(Z_2-\E[Z_2|S])
 \end{align*}
 Therefore, by letting
 $A = \{|Z_1-\E[Z_1|S]| \leq \epsilon_1, |Z_2-\E[Z_2|S]| \leq \epsilon_2 \E[Z_2|S]\}$, obtain
 \begin{align}
  \label{eq:lemma-ratio-1}
   &\E\left[\left(\frac{Z_1}{Z_2}
  -\frac{\E[Z_1|S]}{\E[Z_2|S]}\right)^2
  \bigg|A,S\right]\P(A|S) \nonumber \\
  =& \E\left[\left(\frac{1}{Z_{2,*}} (Z_1-\E[Z_1|S]) 
  -\frac{Z_{1,*}}{Z_{2,*}^2} (Z_2-\E[Z_2|S])\right)^2
  \bigg|A,S\right]\P(A|S) \nonumber \\
  \leq & 4\max\left(
  \E\left[\frac{1}{Z_{2,*}^2}(Z_1-\E[Z_1|S])^2\bigg|A,S\right],
  \E\left[\frac{Z_{1,*}^2}{Z_{2,*}^4}(Z_2-\E[Z_2|S])^2\bigg|A,S\right]\right)\P(A|S) \nonumber \\
  \leq & \frac{4(|\E[Z_1|S]|+\epsilon_1) \max\left(
  E\left[(Z_1-\E[Z_1|S])^2\big|A,S\right],
  E\left[(Z_2-\E[Z_2|S])^2\bigg|A,S\right]\right)\P(A|S)}
  {\min(1,(1-\epsilon_2)^4 \E[Z_2|S]^4)} \nonumber \\
  \leq &\frac{4(|\E[Z_1|S]|+\epsilon_1) \max\left(\V[Z_1|S],\V[Z_2|S]\right)}
  {\min(1,(1-\epsilon_2)^4 \E[Z_2|S]^4)}
 \end{align}
 Finally, obtain that 
 \begin{align*}
  E\left[\left(T-\frac{\E[Z_1|S]}{\E[Z_2|S]}\right)^2\bigg|S\right]
  &= \E\left[\E\left[\left(T-\frac{\E[Z_1|S]}{\E[Z_2|S]}\right)^2
  \bigg|\I_{A},S\right]\bigg|S\right] \nonumber \\
  &\leq \E\left[\left(\frac{Z_1}{Z_2}
  -\frac{\E[Z_1|S]}{\E[Z_2|S]}\right)^2
  \bigg|A,S\right]\P(A|S) + \P(A^c|S) 
  & T, \frac{\E[Z_1]}{\E[Z_2]} \in [0,1] \\
  &\leq  \frac{4(|\E[Z_1|S]|+\epsilon_1) 
  \max\left(\V[Z_1|S], \V[Z_2|S]\right)}
  {\min(1,(1-\epsilon_2)^4 \E[Z_2|S]^4)} + \P(A^c|S)
  & \text{eq. } \ref{eq:lemma-ratio-1}
 \end{align*}
 The result follows from applying 
 the union bound and
 Chebyshev's inequality to obtain
 \begin{align*}
  \P(A^c|S) &\leq \P(|Z_1-\E[Z_1|S]| > \epsilon_1|S)
  +\P(|Z_2-\E[Z_2|S]| > \epsilon_2\E[Z_2|S]|S) \\
  &\leq \epsilon_1^{-2}\V[Z_1|S] 
  + (\epsilon_2\E[Z_2|S])^{-2}\V[Z_2|S]
 \end{align*}
\end{proof}

\begin{proof}[\autoref{thm:mse-trimmed}]
 Define $Z_1 = \frac{\sum_{i \in A_0} g(\X_i)}{|A_0|}
 -\frac{\sum_{i\in A_{1,0}}g(\X_i)}{|A_{1,0}|}$ and also 
 $Z_2 = \frac{\sum_{i\in A_{1,1}}g(\X_i)}{|A_{1,1}|}
 -\frac{\sum_{i\in A_{1,0}}g(\X_i)}{|A_{1,0}|}$.
 Note that
 \begin{align*}
  \E\left[\frac{\sum_{i \in A_0} g(\X_i)}
  {|A_0|}\bigg|\S_1^n\right]
  &= \E[g(\X)|S=0] \\
  \E\left[\frac{\sum_{i\in A_{1,j}}g(\X_i)}
  {|A_{1,j}|}\bigg|\S_1^n\right]
  &= \E[g(\X)|Y=j,S=1]
 \end{align*}
 It follows from Lemma \ref{lemma:theta-eq} that
 $\theta = \frac{\E[Z_1|\S_1^n]}{\E[Z_2|\S_1^n]}$.
 With
 $T := \widehat{\theta}_{R} =  \max\left(0,\min\left(1,\frac{Z_1}{Z_2}\right)\right)$,
 obtain
 \begin{align*}
  \E\left[\left(\widehat{\theta}_{R}
  -\theta\right)^2\bigg|\S_1^n\right]
  &= \E\left[\left(T-\frac{\E[Z_1|\S_1^n]}{\E[Z_2|\S_1^n]}\right)^2\bigg|\S_1^n\right] \\
  &\leq \frac{4(|\E[Z_1|\S_1^n]|+\epsilon_1) 
  \max\left(\V[Z_1|\S_1^n], \V[Z_2|\S_1^n]\right)}
  {\min(1,(1-\epsilon_2)^4 \E[Z_2|\S_1^n]^4)} \\
  &+\epsilon_1^{-2}\V[Z_1|\S_1^n] 
  +(\epsilon_2\E[Z_2|\S_1^n])^{-2}\V[Z_2|\S_1^n]
  & \text{Lemma \ref{lemma:ratio}}
 \end{align*}
 The result follows from observing that,
 under $\sF_{g,K,\epsilon}$,
 $\E[Z_1|\S_1^n]$ and $\E[Z_2|\S_1^n]$ are
 bounded by constants,
 $\V[Z_1|\S_1^n] = \sO(\max(n_{L}^{-1},n_{U}^{-1}))$, and
 $\V[Z_2|\S_1^n] = \sO(n_{L}^{-1})$.
\end{proof}

\begin{proof}[\autoref{thm::EQMPriorShift}]
 The proof strategy is divided into two parts.
 The first part consists of proving a
 joint central limit theorem for
 the three sample averages that
 appear in the untrimmed ratio estimator.
 The second part uses this central limit theorem and
 the delta method to complete the proof for
 each case that is considered in the theorem.
 
 The main challenge appears when proving
 the central limit theorem for the sample averages that
 appear in the ratio estimator.
 This occurs since these averages are not marginally independent.
 However, they are independent conditional
 on the values of $\Y$ and $\S$.
 This conditional independence can
 be used to calculate the limiting behavior of
 the characteristic function of
 the standardized averages, which
 completes this part of the proof.
 
 We tidy the proof by using the following notation:
 $\mu_U := \E[g(\X_i)|S_i=0]$,
 $\sigma^2_U := \V[g(\X_i)|S_i=0]$,
 $Z_{U,n} := \frac{\sqrt{n_U}}{\sigma_U}
 \left(\frac{\sum_{i=1}^{n}g(\X_i)\I(S_i=0)}{n_U}-\mu_U\right)$,
 $Z_{j,n} := \frac{\sqrt{n_j}}{\sigma_j}
 \left(\frac{\sum_{i=1}^{n}g(\X_i)\I(S_i=0,Y_i=j)}{n_{j}}-\mu_j\right)$,
 $F_i = \I(S_i=1)(Y_i + 1)$, $A_U = \{F_1=0\}$, $A_0 = \{F_1=1\}$, 
 and $A_1 = \{F_1=2\}$. Note that
 \begin{align}
  \label{eq:clt-1}
  \limn \phi_{Z_{U,n},Z_{0,n},Z_{1,n}}(t_U,t_0,t_1)
  &= \limn \E\left[\E\left[\exp\left( 
  \sum_{j \in \{U,0,1\}}{it_j Z_{j,n}}\right)
  \bigg|F_1,\ldots,F_n\right]\right] 
  \nonumber \\
  &= \limn \E\left[
  \prod_{j \in \{U,0,1\}} 
  \E\left[\exp\left(it_j Z_{j,n} \right)
  \bigg|F_1,\ldots,F_n\right]\right] 
  \nonumber \\
  &= \limn \E\left[
  \prod_{j \in \{U,0,1\}} 
  \left(\phi_{\frac{g(\X_1)-\mu_j}{\sigma_j}\big|A_j}
  (t_j n_j^{-0.5})\right)^{n_j} \right]
 \end{align}
 It follows from the Central Limit Theorem for
 i.i.d. random variables that,
 for every $j \in \{U,0,1\}$,
 $\phi^{n_j}_{\frac{g(\X_1)-\mu_j}{\sigma_j}\big|A_j}
 (t_j n_j^{-0.5}) \rightarrow \exp(-0.5t_j^2)$
 as $n_j \rightarrow \infty$.
 Since $n_j \convas \infty$, 
 conclude from eq. \ref{eq:clt-1} and
 the dominated convergence theorem that
 \begin{align*}
   \limn \phi_{Z_{U,n},Z_{0,n},Z_{1,n}}(t_U,t_0,t_1)
  &= \prod_{j \in \{U,0,1\}} \exp(-0.5 t_j^2)
 \end{align*}
 and, using $\II$ as the identity matrix, obtain
 \begin{align}
  \label{eq:clt-2}
  \left(Z_{U,n}, Z_{0,n}, Z_{1,n}\right) 
  &\convd N(0,\II)
 \end{align}

 Assume that $p_L \neq 0$. In this case,
 since $\frac{n_L}{n} \convp p_L$,
 it follows from eq. \ref{eq:clt-2} that
 \begin{align*}
  \sqrt{n}\left(
  \frac{\sum_{i=1}^{n}g(\X_i)\I(S_i=0)}{n_U}-\mu_U,
  \frac{\sum_{i=1}^{n}g(\X_i)\I(S_i=0,Y_i=0)}{n_{0}}-\mu_{0},
  \frac{\sum_{i=1}^{n}g(\X_i)\I(S_i=0,Y_i=1)}{n_{1}}-\mu_{1}
  \right)
 \end{align*}
 converges in distribution to
 $N\left(0,diag\left(\frac{\sigma_U^2}{1-p_L},
 \frac{\sigma_0^2}{p_L p_{0|L}},
 \frac{\sigma_1^2}{p_L p_{1|L}}\right)\right)$.
 Since $\theta = \frac{\mu_U-\mu_{0}}{\mu_{1}-\mu_{0}}$ 
 (Lemma \ref{lemma:theta-eq}) and
 $\widehat{\theta}_{UR} = 
 \frac{\frac{\sum_{i=1}^{n}g(\X_i)\I(S_i=0)}{n_U}
 -\frac{\sum_{i=1}^{n}g(\X_i)\I(S_i=0,Y_i=0)}{n_{0}}}
 {\frac{\sum_{i=1}^{n}g(\X_i)\I(S_i=0,Y_i=1)}{n_{1}}
 -\frac{\sum_{i=1}^{n}g(\X_i)\I(S_i=0,Y_i=0)}{n_{0}}}$, 
 it follows from the delta method
 \citep{Casella2002} that
 \begin{align*}
  \sqrt{n}(\widehat{\theta}_{UR}-\theta)
  &\convd N\left(0,
  \frac{\sigma_U^2 (1-p_L)^{-1} }{(\mu_{1}-\mu_{0})^2}
  +\frac{(\mu_U-\mu_{1})^2 \sigma_0^2 (p_L p_{0|L})^{-1}}
  {(\mu_{1}-\mu_{0})^4}
  +\frac{(\mu_U-\mu_{0})^2 \sigma_1^2 (p_L p_{1|L})^{-1}}
  {(\mu_{1}-\mu_{0})^4} \right)
 \end{align*}
 Since $\mu_U = (1-\theta)\mu_{0} + \theta \mu_{1}$ and
 $\sigma_U^2 = (1-\theta)\sigma_0^2 + \theta \sigma_1^2 + (\mu_1-\mu_0)^2 \theta(1-\theta)$
 obtain that
 \begin{align*}
  \sqrt{n}(\widehat{\theta}_{UR}-\theta)
  &\convd N\left(0,
  \frac{\frac{(1-\theta)\sigma_0^2 + \theta \sigma_1^2 + (\mu_1-\mu_0)^2 \theta(1-\theta)}{1-p_L}
  +\frac{(1-\theta)^2 \sigma_0^2}{p_L p_{0|L}}
  +\frac{\theta^2 \sigma_1^2}{p_L p_{1|L}}}
  {(\mu_1-\mu_0)^2} \right)
 \end{align*}
 
 Next, assume that $p_{L}=0$. 
 Obtain that $\sqrt{h(n)}(Z_{U,n}-\mu_U) \convp 0$ and
 \begin{align*}
  \sqrt{h(n)}\left(
  \frac{\sqrt{p_{0|L}}}{\sigma_0}
  (Z_{0,n}-\mu_{0}),
  \frac{\sqrt{p_{1|L}}}{\sigma_1}
  (Z_{1,n}-\mu_{1})
  \right)
  & \convd N(0,\II)
 \end{align*}
 It follows from the delta method and
 Slutsky's theorem that
 \begin{align*}
  \sqrt{h(n)}(\widehat{\theta}_{UR}-\theta)
  &\convd N\left(0,
  \frac{\frac{(1-\theta)^2 \sigma_0^2}{p_{0|L}}
  +\frac{\theta^2 \sigma_1^2}{p_{1|L}}}
  {(\mu_1-\mu_0)^2} \right)
 \end{align*}
 
 The same convergence results hold for
 $\widehat{\theta}_{R}$ since
 the derivative of the trimming function is
 $1$ around $\theta$.
\end{proof}

\begin{proof}[\autoref{thm:rkhs}]
 It follows from the Representer Theorem \citep{Wahba1990} that,
 for every $g \in \mathcal{H}_K$,
 $g(\x) = \sum_{k \in A_1} w_k K(\x,\x_k)$.
 Using this fact, for every $i \in \{0,1\}$,
 \begin{align*}
  \widehat{\mu}_i
  &= \frac{\sum_{j \in A_{1,i}}g(\x_j)}{n_{i}}
  = \frac{\sum_{k \in A_1} w_k 
  \sum_{j \in A_{1,i}} K(\x_j, \x_k)}{n_i}
  =\vec{w}^t m_i \\
  \widehat{\sigma}_i^2
  &= \frac{\sum_{j \in A_{1,i}}
  (g(\x_j)-\widehat{\mu}_i)^2}{n_i}
  = \vec{w}^t \widehat{\Sigma}_i \vec{w}
 \end{align*}
 Therefore, for every $g \in \mathcal{H}_K$,
 \begin{align*}
  \widehat{\mbox{MSE}}(g)
  &=\frac{\vec{w}^t N \vec{w}}
  {\vec{w}^t M \vec{w}}.
 \end{align*}
\end{proof}

\begin{proof}[Proposition \ref{lemma:priorShiftConsequence}]
 \begin{align*}
  F_{g(\X)|S=0} 
  &=\theta  F_{g(\X)|S=0,Y=1} + (1-\theta) F_{g(\X)|S=0,Y=0} \\
  &=\theta F_{g(\X)|S=1,Y=1} + (1-\theta )F_{g(\X)|S=1,Y=0}
  & \text{\ref{assump::cool1}}
 \end{align*}
 Thus, there exists 
 $0\leq p\leq  1\mbox{ such that } 
 F_{g(\X)|S=0} = pF_{g(\X)|S=1,Y=1} + (1-p)F_{g(\X)|S=1,Y=0}$.
\end{proof}

\begin{proof}[\autoref{thm:combined}]
\begin{align*}
 MSE[\widehat{\theta}_C]
 &=\E\left[\left((w\widehat{\theta}_R+(1-w)\widehat{\theta}_L)-\theta\right)^2\right]=
 \E\left[\left((w(\widehat{\theta}_R-\theta)+(1-w)(\widehat{\theta}_L-\theta)\right)^2\right]
  \\
 &=w^2 MSE[\widehat{\theta}_R]+(1-w)^2MSE[\widehat{\theta}_L]+2w(1-w)\E[(\widehat{\theta}_R-\theta)(\widehat{\theta}_L-\theta)] 
 & \widehat{\theta}_R \text{ indep. }
 \widehat{\theta}_L, \E[\widehat{\theta}_L]=\theta \\ 
 &=w^2 MSE[\widehat{\theta}_R]+(1-w)^2MSE[\widehat{\theta}_L]
\end{align*}
It follows that  $MSE[\widehat{\theta}_C]$ is 
minimized by taking
$w = MSE[\widehat{\theta}_L]\times (MSE[\widehat{\theta}_L] + MSE[\widehat{\theta}_R])^{-1}$.
\end{proof}

\begin{lemma}
 \label{lemma:reg}
 For every function, $g$,
 under Assumptions \ref{assump::cool1} and
 \ref{assump::condCovIndep}
 \begin{align*}
  \theta(\z) &= 
  \frac{\E[g(\X)|S=0,\z]-\E[g(\X)|Y=0,S=1]}
  {\E[g(\X)|Y=1,S=1]-\E[g(\X)|Y=0,S=1]}
 \end{align*}
\end{lemma}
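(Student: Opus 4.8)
The plan is to mirror the argument used for Lemma \ref{lemma:theta-eq}, carrying the extra covariate $\z$ through every conditioning. First I would condition on $S=0$ and $\Z=\z$ and apply the law of total probability over the two possible values of $Y$, obtaining
\begin{align*}
 \E[g(\X)|S=0,\z]
 &= \sum_{j=0}^{1}\E[g(\X)|Y=j,S=0,\z]\,\P(Y=j|S=0,\z).
\end{align*}
Next I would invoke Assumption \ref{assump::condCovIndep}, which gives $\E[g(\X)|Y=j,S=0,\z] = \E[g(\X)|Y=j,S=0]$, so that the right-hand side depends on $\z$ only through the weights $\P(Y=j|S=0,\z)$. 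Then Assumption \ref{assump::cool1} (weak prior shift) replaces $\E[g(\X)|Y=j,S=0]$ by $\E[g(\X)|Y=j,S=1]$. Using $\P(Y=1|S=0,\z)=\theta(\z)$, this yields
\begin{align*}
 \E[g(\X)|S=0,\z]
 &= (1-\theta(\z))\,\E[g(\X)|Y=0,S=1] + \theta(\z)\,\E[g(\X)|Y=1,S=1].
\end{align*}
Solving this linear equation for $\theta(\z)$ produces the claimed formula, the division being legitimate precisely when $\E[g(\X)|Y=1,S=1]\neq\E[g(\X)|Y=0,S=1]$ (the separability condition of Assumption \ref{assump::separability}, left implicit in the statement just as in Lemma \ref{lemma:theta-eq}).

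The argument is essentially bookkeeping and I do not expect a genuine obstacle. The one point that needs care is the order in which the two conditional independence assumptions are applied: Assumption \ref{assump::condCovIndep} must be used first to strip $\z$ from the conditional expectations of $g(\X)$ while retaining it in $\P(Y=j|S=0,\z)$, and only afterwards does Assumption \ref{assump::cool1} permit swapping $S=0$ for $S=1$ in those expectations. I would also note in passing that all the conditional expectations involved are well defined, which follows from the same integrability hypotheses underlying Definition \ref{defn:ratio} and Lemma \ref{lemma:theta-eq}.
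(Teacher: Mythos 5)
Your proposal is correct and follows essentially the same route as the paper's proof: total probability over $Y$ conditional on $S=0,\Z=\z$, then Assumption \ref{assump::condCovIndep} to drop $\z$ from the conditional expectations, then Assumption \ref{assump::cool1} to replace $S=0$ by $S=1$, and finally solving the linear equation for $\theta(\z)$. Your remark about the order of applying the two independence assumptions and about the implicit separability condition matches the paper's treatment.
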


\begin{proof}
 For every $\z \in \Re^{d_z}$,
Let $f(\x)$ denote the density of $\X$.
 Note that
 \begin{align}
  \label{eq::ratio-reg}
  g(\x)f(\x|S=0,\z)
  &= \sum_{j=0}^1 g(\x)f(\x|Y=j,S=0,\z)\P(Y=j|S=0,\z)
  & \text{Law of total prob.} \nonumber \\
  \E[g(\X)|S=0,\z]
  &= \sum_{j=0}^1 \E[g(\X)|Y=j,S=0,\z]\P(Y=j|S=0,\z)
  & \text{Integration over } \x \nonumber \\
  &= \sum_{j=0}^1 \E[g(\X)|Y=j,S=0]\P(Y=j|S=0,\z)
  & \text{Assumption \ref{assump::condCovIndep}}
  \nonumber \\
  &= \sum_{j=0}^1 \E[g(\X)|Y=j,S=1]\P(Y=j|S=0,\z)
  & \text{Assumption \ref{assump::cool1}}
 \end{align}
 Isolating $\P(Y=1|S=0,\z)$ in 
 equation \ref{eq::ratio-reg} yields
 \begin{align*}
  \theta(\z) := \P(Y=1|S=0,\z)
  &= \frac{\E[g(\X)|S=0,\z]-\E[g(\X)|Y=0,S=1]}
  {\E[g(\X)|Y=1,S=1]-\E[g(\X)|Y=0,S=1]}
 \end{align*}
\end{proof}

\begin{proof}[\autoref{thm:rr-mse}]
The main difference between this proof and
the one of \autoref{thm:mse-trimmed} is that
$\widehat{\E}[g(\X)|S=0,\z]$ is usually biased for
$\E[g(\X)|S=0,\z]$. The proof strategy consists of
isolating this bias term from the squared error and
then replicating steps which are similar to
the ones in \autoref{thm:mse-trimmed}.

In order to present the proof in a compact form,
some special notation is used. Specifically,
$h_0(\z) = \E[g(\X)|S=0,\z]$,
$\hat{h}_0(\z) = \widehat{\E}[g(\X)|S=0,\z]$,
$h_{1,i} = \E[g(\X)|S=1, Y=i]$, and
$\hat{h}_{1,i} = \widehat{\E}[g(\X)|S=1, Y=i]$.
Using this notation and
 Definition \ref{defn:reg_ratio}, note that
 $\hat{\theta}_{RR}(\Z) = \max\left(0,\min \left(1, \frac{\hat{h}_0(\z)-\hat{h}_{1,0}}{\hat{h}_{1,1}-\hat{h}_{1,0}} \right)\right)$. Therefore,
 
 \begin{align*}
  & \E\left[\left(\widehat{\theta}_{RR}(\Z)
  - \theta(\Z) \right)^2\bigg|S_1^n\right] \\
  = & \E\left[\left(
  \widehat{\theta}_{RR}(\Z) 
  - \frac{h_0(\Z) - h_{1,0}}{h_{1,1} - h_{1,0}}
  \right)^2\bigg|S_1^n\right] 
  & \text{Lemma \ref{lemma:reg}} \\
  \leq & \sO\left(\E\left[\left(
  \widehat{\theta}_{RR}(\Z) 
  - \frac{\E[\hat{h}_0(\Z)|S_1^n] - h_{1,0}}{h_{1,1} - h_{1,0}}\right)^2
  + \left(\frac{\E[\hat{h}_0(\Z)|S_1^n] - h_{1,0}}{h_{1,1} - h_{1,0}}
  - \frac{h_0(\Z) - h_{1,0}}{h_{1,1} - h_{1,0}}
  \right)^2\bigg|S_1^n\right]\right) \\
  \leq & \sO\left(
  \V[\hat{h}_{1,0}|S_1^n] + \V[\hat{h}_{1,1}|S_1^n]
  + \V[\hat{h}_0(\Z)|S_1^n]
  + \left(\E[\hat{h}_0(\Z)|S_1^n] - h_0(\Z)\right)^2\right)
  & \text{Lemma \ref{lemma:ratio}} \\
  = & \sO\left(\max\left( \V[\hat{h}_{1,0}|S_1^n], \V[\hat{h}_{1,1}|S_1^n],
  \E\left[(\hat{h}_0(\Z)-h_0(\z))^2|S_1^n\right]\right)\right) \\
  = & \sO\left(\max\left(n_L^{-1},
  \E\left[(\hat{h}_0(\Z)-h_0(\z))^2|S_1^n\right]\right)\right)
 \end{align*}
 The last equality follows from observing that
 $\hat{h}_{1,0}$ and $\hat{h}_{1,1}$ are sample averages.
\end{proof}

\newpage

 
 \section{Additional Figures}
 \label{ap:figures}
 
 These are additional figures to the experiments of Sections \ref{sec:expRatio} and \ref{sec:combined}.
 
 \begin{figure}[h!]
 \centering
 \makebox[\textwidth]{
  \includegraphics[angle=90 ,origin=c,page=1,scale=0.51]
  {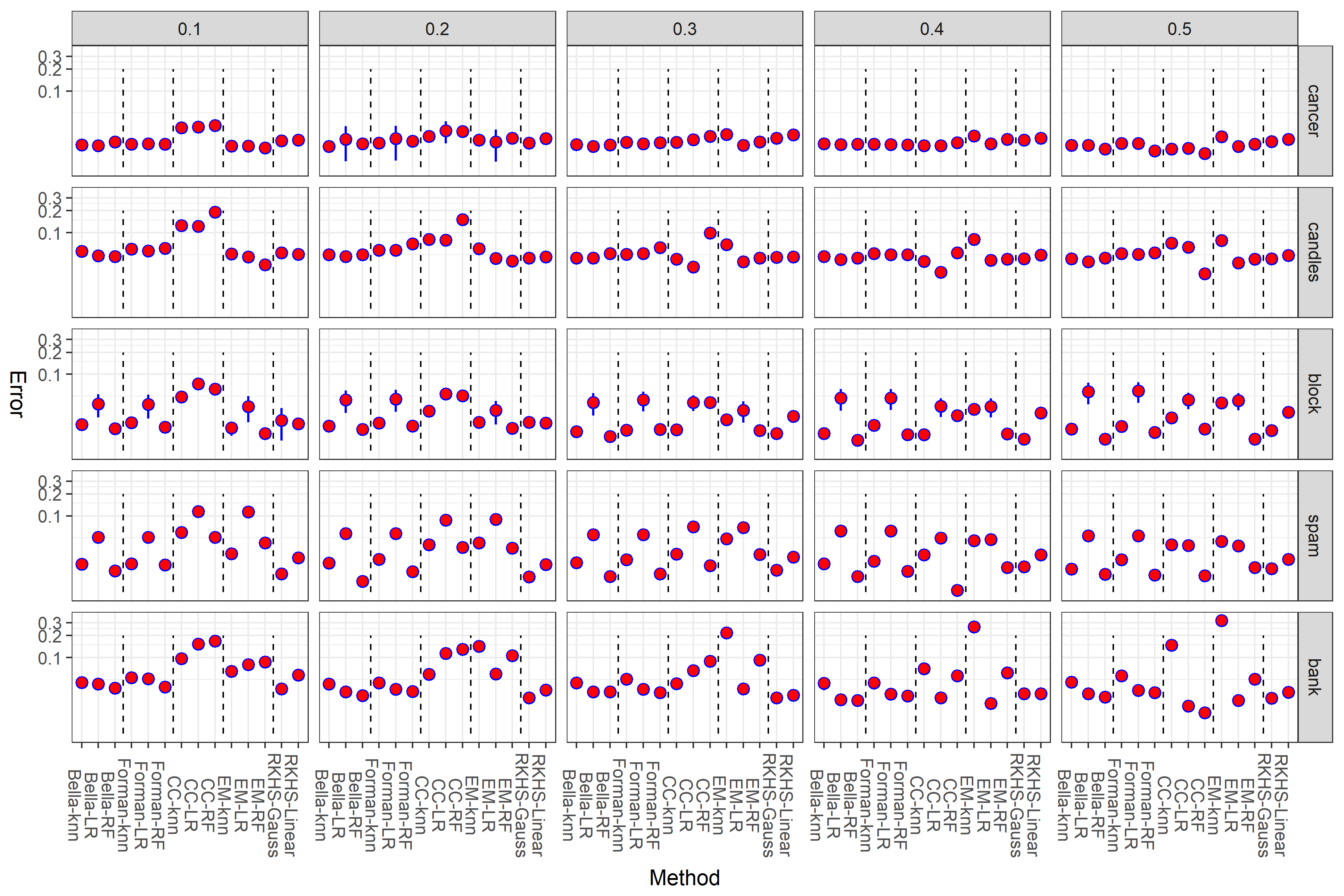}
 }
 \caption{Root mean square deviation of
 each method by setting in
 logarithmic scale (including K-NN estimator).}
 \label{fig::error_all}
\end{figure}

\begin{figure}
 \centering
 \makebox[\textwidth]{
  \includegraphics[page=1,scale=0.4]
  {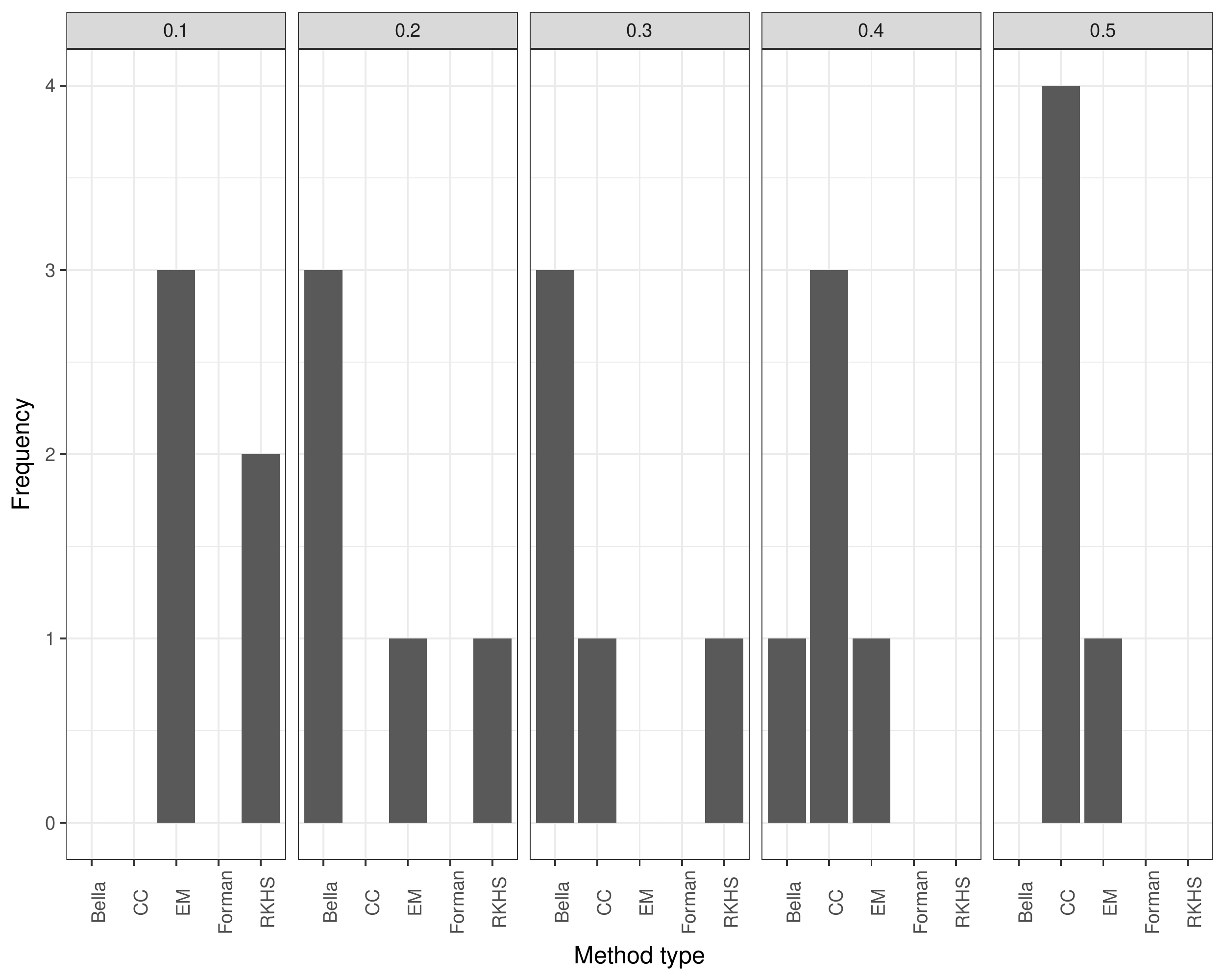}
 }
 \caption{Number of times in which each specific method presents smaller MSE  by $\theta$ values.}
 \label{fig::count_theta}
\end{figure}

\begin{figure}
 \centering
 \makebox[\textwidth]{
  \includegraphics[page=1,scale=0.4]
  {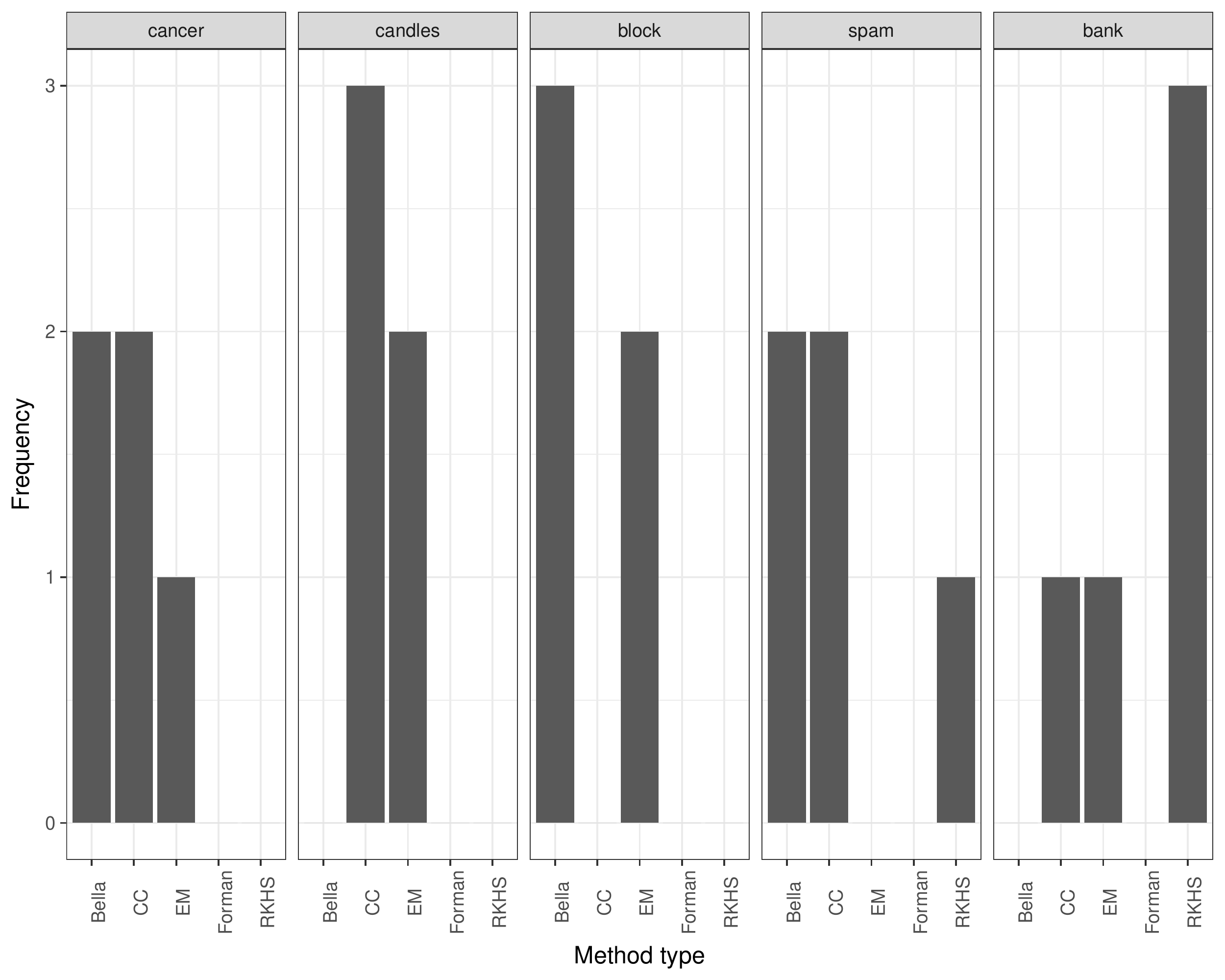}
 }
 \caption{Number of times in which each specific method presents smaller MSE by data set.}
 \label{fig::count_dataset}
\end{figure}

\begin{figure}
 \centering
 \makebox[\textwidth]{
  \includegraphics[page=1,scale=0.51]{comb.pdf}}
 \caption{Root mean square deviation in
 logarithmic scale for
 each data set, method and estimator by
 size of the labeled sample and using $\theta=0.1$.}
 \label{fig::comb2}
\end{figure}

\begin{figure}
 \centering
 \makebox[\textwidth]{
  \includegraphics[page=2,scale=0.51]{comb.pdf}}
 \caption{Root mean square deviation in
 logarithmic scale for
 each data set, method and estimator by
 size of the labeled sample and using $\theta=0.2$.}
 \label{fig::comb3}
\end{figure}

\begin{figure}
 \centering
 \makebox[\textwidth]{
  \includegraphics[page=4,scale=0.51]{comb.pdf}}
 \caption{Root mean square deviation in
 logarithmic scale for
 each data set, method and estimator by
 size of the labeled sample and using $\theta=0.4$.}
 \label{fig::comb4}
\end{figure}

\begin{figure}
 \centering
 \makebox[\textwidth]{
  \includegraphics[page=5,scale=0.51]{comb.pdf}}
 \caption{Root mean square deviation in
 logarithmic scale for
 each data set, method and estimator by
 size of the labeled sample and using $\theta=0.5$.}
 \label{fig::comb5}
\end{figure}
\bibliography{18-456}

\end{document}